\documentclass{article} 
\usepackage{iclr2025_conference,times}

\usepackage{amsmath,amsfonts,bm}

\def\eqref#1{equation~\ref{#1}}

\def\1{\bm{1}}

\DeclareMathAlphabet{\mathsfit}{\encodingdefault}{\sfdefault}{m}{sl}
\SetMathAlphabet{\mathsfit}{bold}{\encodingdefault}{\sfdefault}{bx}{n}

\usepackage{hyperref}
\usepackage{url}

\usepackage{booktabs}
\usepackage{tabularx}
\usepackage{array}
\usepackage{makecell}
\usepackage[table]{xcolor}

\usepackage[table]{xcolor}
\usepackage{soul}
\usepackage{booktabs}
\usepackage{graphicx}
\usepackage{array}
\usepackage{amsmath}
\usepackage{amssymb}

\definecolor{topcolor}{HTML}{E7DCEF}
\definecolor{secondcolor}{HTML}{E5EDF3}

\usepackage{booktabs}  
\usepackage{tabularx}  
\usepackage{array}     

\usepackage{amsmath,amssymb,amsthm}
\newtheorem{proposition}{Proposition}

\usepackage{booktabs}
\usepackage{tabularx}
\usepackage{array}

\usepackage{algorithm}
\usepackage{algpseudocode}
\usepackage{xcolor}
\usepackage{amsmath,amssymb}
\usepackage{enumitem}
\usepackage{fontawesome5}
\usepackage{tikz}

\usepackage{colortbl}
\definecolor{gain}{RGB}{0,128,0}
\definecolor{seed}{RGB}{13,133,136}
\definecolor{purple}{RGB}{216, 110, 204}
\definecolor{resourceline}{HTML}{B8C9CF}
\definecolor{projectblue}{HTML}{2563EB}
\definecolor{githubblack}{HTML}{181717}
\definecolor{modelslate}{HTML}{475467}

\usepackage{hyperref}
\hypersetup{
    colorlinks=true,      
    citecolor=purple,       
}

\newcommand{\seedicon}{%
  \raisebox{-0.12em}{\includegraphics[height=0.99em]{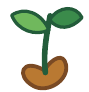}}
  \hspace{0.2em}%
}

\newcommand{\resourcepill}[4]{%
  \href{#1}{%
    \tikz[baseline=(resource.base)]{%
      \node[
        draw=resourceline,
        fill=white,
        rounded corners=8pt,
        line width=0.45pt,
        inner xsep=7pt,
        inner ysep=3pt
      ] (resource) {%
        \textcolor{#4}{#2}\hspace{0.4em}%
        \textcolor{black}{\sffamily\footnotesize\bfseries #3}%
      };%
    }%
  }%
}

\title{\texorpdfstring{\seedicon\textsc{\textcolor{seed}{Seed}}: \textcolor{seed}{Se}lf-\textcolor{seed}{E}volving On-Policy \textcolor{seed}{D}istillation for Agentic Reinforcement Learning}{SEED: Self-Evolving On-Policy Distillation for Agentic Reinforcement Learning}}

\author{\textbf{Jinyang Wu}$^{1}$\thanks{Equal Contribution}~~\thanks{Project Leader}~,
\textbf{Shuo Yang}$^{1}$\footnotemark[1]~~,
\textbf{Zhengxi Lu}$^{2}$, \textbf{Fan Zhang}$^{3}$, \textbf{Yuhao Shen}$^{2}$,
\textbf{Lang Feng}$^{4}$,\\
\textbf{Haoran Luo}$^{4}$,
\textbf{Zheng Lian}$^{5}$,
\textbf{Shuai Zhang}$^{1}$,
\textbf{Zhengqi Wen}$^{1}$,
\textbf{Jianhua Tao}$^{1}$\\
\\
$^1$Tsinghua University \qquad$^2$Zhejiang University \qquad$^3$The Chinese University of Hong Kong\\
$^4$Nanyang Technological University \qquad$^5$Tongji University\\
\texttt{Corresponding to: wu-jy23@mails.tsinghua.edu.cn} \\
\begin{tabular}{@{}ll@{}}
\end{tabular}}
\vspace{-0.1in}
\iclrfinalcopy 
\begin{document}

\maketitle

\vspace{-2.2em}
\begin{center}
  \resourcepill{https://jinyangwu.github.io/seed}{\faGlobe}{Project Page}{projectblue}
  \hspace{0.7em}
  \resourcepill{https://github.com/jinyangwu/SEED}{\faGithub}{Code}{githubblack}
  \hspace{0.7em}
  \resourcepill{https://huggingface.co/Jinyang23/Seed-AlfWorld-3B}{\faCube}{Model}{modelslate}
\end{center}
\vspace{0.35em}

\begin{abstract}
Large language models are increasingly trained as interactive agents for long-horizon tasks involving multi-turn interaction, tool use, and environment feedback. Outcome-based reinforcement learning (RL) provides a practical optimization paradigm, but its sparse trajectory-level rewards offer limited guidance on intermediate decisions, leaving a supervision gap between episode-level outcomes and token-level policy learning. We propose \textbf{\textsc{Seed}} (\textbf{SE}lf-\textbf{E}volving On-Policy \textbf{D}istillation), a self-evolving framework that converts completed on-policy trajectories into training-time hindsight skills and distills their behavioral effect back into the policy model. \textsc{Seed} first fine-tunes the policy to analyze completed trajectories and generate natural-language skills that capture reusable workflows, decisive observations, or failure-avoidance rules. During RL, the current policy both collects trajectories and serves as the analyzer that extracts hindsight skills from them. Policy updates therefore improve subsequent decision making and skill analysis together, allowing hindsight supervision to evolve with the policy. \textsc{Seed} then re-scores the sampled actions under ordinary and skill-augmented contexts, converting the skill-induced probability shift into a dense token-level on-policy distillation signal. This signal is jointly optimized with outcome-based RL, keeping the auxiliary supervision aligned with the current trajectory distribution. Extensive experiments on text-based and vision-based agentic tasks show that \textsc{Seed} consistently improves performance and sample efficiency, exhibiting robust generalization to unseen scenarios. Our code is available at \href{https://github.com/jinyangwu/SEED}{jinyangwu/\textsc{Seed}}.
\end{abstract}

\vspace{-0.1in}
\begin{figure}[h]
    \centering
    \includegraphics[width=0.99\linewidth]{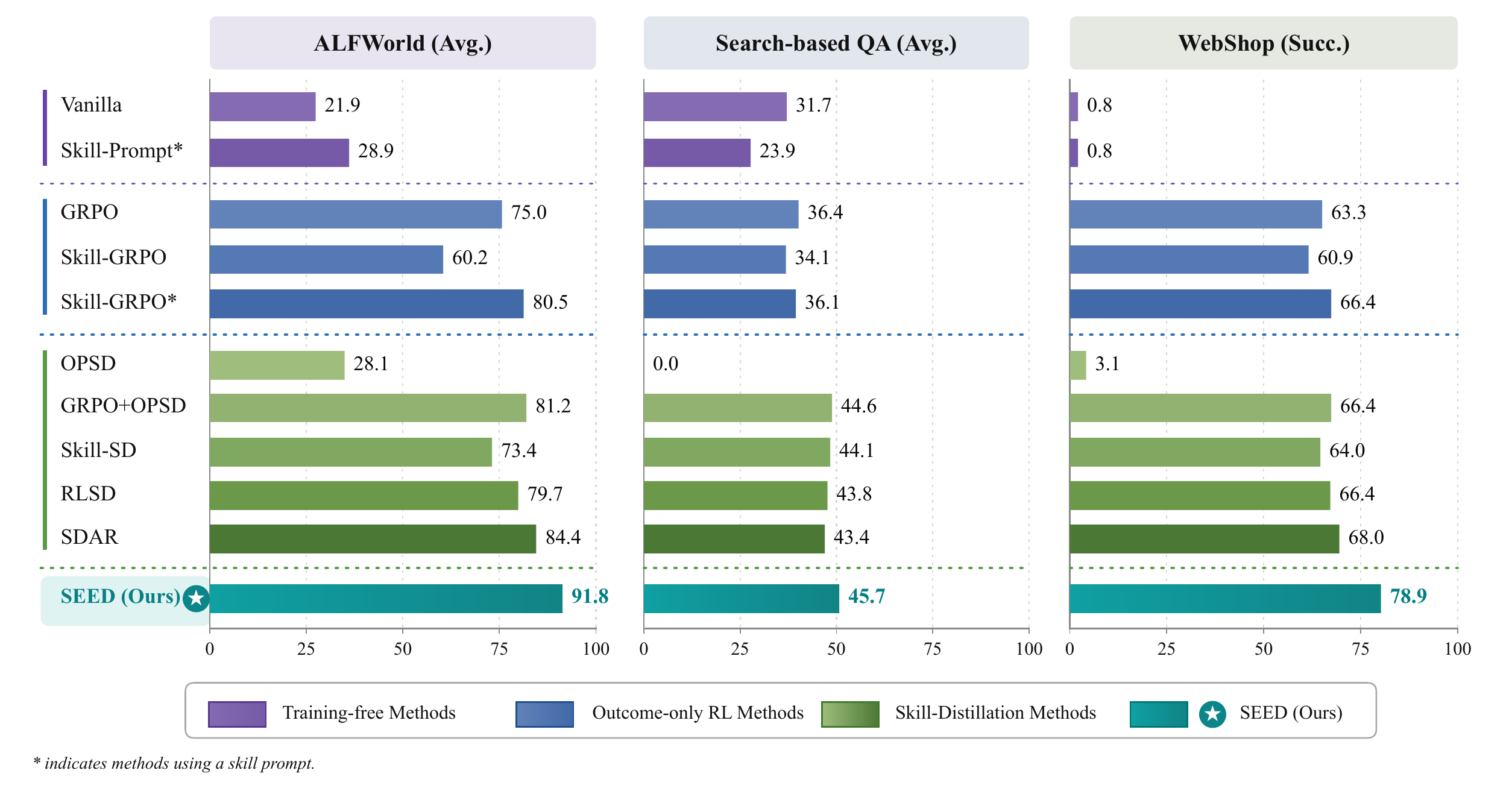}
    \caption{
        \textbf{Overall performance overview.}
        Compared with powerful baseline methods, \textsc{Seed} achieves the strongest average performance across three representative agentic benchmarks.
    }
    \label{fig:overall_performance}
\end{figure}

\section{Introduction}\label{sec1_intro}
Recent large language model (LLM) systems are moving beyond single-turn response generation toward multi-turn agentic interaction, where a model repeatedly reasons, acts, uses tools, and incorporates feedback from its environment~\citep{liu2023agentbench,schick2023toolformer,patil2024gorilla,luo2025large,xi2025rise,wu2026atlas,xu2026odysseyarena}. Such settings require an agent to make sequential decisions whose consequences may only become visible after many interaction steps. The model must learn when to gather information, when to call a tool, how to interpret feedback, and how to revise a plan after partial progress or failure. Reinforcement learning (RL) has therefore become an important post-training paradigm for LLM-based agents, since it directly optimizes policies against task-level feedback from environments, simulators, or verifiers~\citep{shao2024deepseekmath,wang2025ragen,luo2025agentlightning,wu2026spark}.

Despite this progress, outcome-based agentic RL provides only coarse supervision. In long-horizon environments, rewards are often sparse, delayed, and assigned at the trajectory level: they indicate whether an episode succeeds but not which intermediate observations, actions, or tool calls should be reinforced or corrected~\citep{andrychowicz2017hindsight,arjona2019rudder,uesato2022solving,lightman2024lets}. This leaves a supervision gap between episode-level outcomes and token-level policy learning. A failed trajectory may contain useful partial behaviors but fail because of a few local mistakes, whereas a successful trajectory may contain reusable strategies that the scalar reward never identifies. As a result, outcome-only optimization provides limited guidance for fine-grained, decision-level credit assignment over long interaction histories.

A key observation is that completed trajectories reveal hindsight unavailable during online decision making. Once an episode terminates, the full interaction history reveals which subgoals were achieved, where the agent deviated from an effective strategy, which observations were decisive, and which behavioral patterns may transfer to future attempts. This view is related to hindsight learning in RL, where completed experience can be reinterpreted to improve learning under sparse feedback~\citep{andrychowicz2017hindsight}, and to language-agent methods based on verbal reflection, episodic memory, or experience summaries~\citep{shinn2023reflexion,zhao2024expel,wang2023voyager,madaan2023selfrefine}. However, many such methods treat hindsight as static experience, inference-time context, or retrieved memory. For practical agentic RL, hindsight supervision should not remain fixed: as the policy improves and encounters new states, strategies, and failure modes, the hindsight extracted from its trajectories must adapt to its current behavior~\citep{andrychowicz2017hindsight,zhang2026the}. Our goal is therefore to convert policy-generated hindsight into parametric supervision, enabling the policy to internalize reusable behavioral guidance without external memory or additional deployment-time prompts.

On-policy distillation (OPD) offers a natural mechanism for converting hindsight information into decision-level learning signals. Classical knowledge distillation transfers teacher behavior into a student through token- or sequence-level supervision~\citep{hinton2015distilling,kim2016sequence}, while on-policy variants reduce distribution mismatch by supervising outputs sampled from the student policy itself~\citep{ross2011reduction,agarwal2024onpolicy}. Recent on-policy self-distillation methods further avoid a separate external teacher by comparing the same model under different contexts, such as privileged reasoning traces or feedback-conditioned prompts~\citep{zhao2026selfdistilledreasoner,hubotter2026sdpo}. Related agentic methods have explored skill- or feedback-conditioned distillation for multi-turn interaction and tool use~\citep{wang2026skillsd,lu2026sdar,zhong2026sod,ko2026reopold,yang2026opid}. Together, these advances point to three requirements for effective hindsight supervision in agentic RL.

First, the supervision should be \emph{on-policy}, because useful corrections depend on the states, actions, and failure modes induced by the current policy. Second, it should be \emph{dense}, allowing trajectory-level hindsight to guide individual decision tokens rather than only the final outcome. Third, it should be \emph{self-evolving}: as the policy improves, its decision-making and trajectory-analysis capabilities should advance together. Fixed teachers, static skill datasets, and one-time distillation cannot continually adapt to the policy's evolving capabilities and trajectory distribution.

We propose \textbf{\textsc{Seed}} (\textbf{SE}lf-\textbf{E}volving On-Policy \textbf{D}istillation), a framework that turns completed on-policy trajectories into hindsight skills and distills their behavioral effect back into the policy model through a self-evolving training loop. During RL, the current policy collects trajectories and also serves as the analyzer that extracts natural-language skills describing reusable workflows, decisive observations, or failure-avoidance rules. Because both roles share the same model, policy updates improve subsequent decision making and skill analysis together, allowing hindsight supervision to evolve with the policy. Given an extracted skill, \textsc{Seed} keeps the sampled actions fixed and re-scores them under ordinary and skill-augmented contexts. The resulting probability shift provides a dense token-level OPD signal, which is jointly optimized with the outcome-based RL objective. Specifically, \textsc{Seed} consists of two stages. First, \textbf{hindsight-skill supervised fine-tuning} equips the model to analyze completed interaction histories and generate reusable trajectory-level skills. Second, \textbf{self-evolving OPD} repeatedly uses the latest policy checkpoint for both trajectory collection and skill analysis, then updates the policy with the joint RL and OPD objective. The generated skills act only as privileged supervision during training and require neither external memory nor additional prompts at inference time.

We evaluate \textsc{Seed} across embodied interaction, web navigation, search-based QA, and visual perception and planning. \textsc{Seed} achieves superior task performance, sample efficiency, and robustness. Taken together, our work makes the following contributions:
\begin{itemize}[leftmargin=1.5em]
    \item We propose \textbf{\textsc{Seed}}, a self-evolving OPD framework that continually transforms the policy's completed trajectories into hindsight skills and internalizes their behavioral guidance during agentic RL, allowing decision-making and skill analysis to improve together.

    \item We introduce a policy-synchronized hindsight OPD mechanism that converts skill-induced log-probability shifts on sampled actions into dense token-level supervision and jointly optimizes this signal with outcome-based RL.
    
    \item Extensive experiments across diverse long-horizon agentic benchmarks show that \textsc{Seed} improves task performance, sample efficiency, and robustness over representative baselines.
\end{itemize}

\section{Related Work}\label{sec:related_work}

\paragraph{Reinforcement learning for agentic LLMs.}
LLMs are increasingly trained as interactive agents that reason, use tools, and act over long horizons~\citep{liu2023agentbench,schick2023toolformer,patil2024gorilla,luo2025large,xi2025rise,wu2026atlas,xu2026odysseyarena}. Reinforcement learning provides a natural post-training paradigm for such agents by directly optimizing task-level outcomes~\citep{shao2024deepseekmath,wang2025ragen,luo2025agentlightning,wu2026spark,lu2026skill0}. However, outcome-based RL typically assigns sparse and delayed rewards at the trajectory level, offering limited guidance on which intermediate observations, actions, or tool calls should be reinforced. SEED retains outcome-based RL as the optimization backbone while supplementing it with dense supervision derived from completed trajectories.

\vspace{-0.08in}
\paragraph{Hindsight learning for language agents.}
Completed trajectories expose reusable strategies, decisive observations, and failure causes that are unavailable during online decision making. Prior work exploits such information through hindsight relabeling, return decomposition, process supervision, verbal reflection, and experience memory~\citep{andrychowicz2017hindsight,arjona2019rudder,uesato2022solving,lightman2024lets,shinn2023reflexion,zhao2024expel,wang2023voyager,madaan2023selfrefine}. Nevertheless, hindsight knowledge is often stored as static experience or introduced as additional inference-time context. SEED instead converts completed on-policy trajectories into natural-language skills and internalizes their behavioral guidance into the policy, requiring neither external memory nor skill prompts at inference time.

\vspace{-0.08in}
\paragraph{On-policy self-distillation for agentic RL.}
Knowledge distillation transfers teacher behavior through token- or sequence-level supervision, while on-policy variants reduce distribution mismatch by training on outputs sampled from the learner itself~\citep{hinton2015distilling,kim2016sequence,ross2011reduction,agarwal2024onpolicy}. Recent methods construct privileged self-teachers from reasoning traces, feedback, or skills and use them to provide token-level guidance for agentic RL~\citep{zhao2026selfdistilledreasoner,wang2026skillsd,lu2026sdar,zhong2026sod,yang2026opid}. However, the source of privileged supervision is often static, externally generated, or updated independently of the policy. As the policy improves and encounters new states and failure modes, such supervision can become stale or mismatched with its current behavior. SEED addresses this limitation through a self-evolving loop in which the latest policy checkpoint serves simultaneously as the rollout actor and the trajectory analyzer. After each policy update, the shared checkpoint is refreshed for both roles, allowing decision making and hindsight supervision to co-evolve.

\vspace{-0.1in}
\begin{figure}[t]
    \centering
    \includegraphics[width=0.96\linewidth]{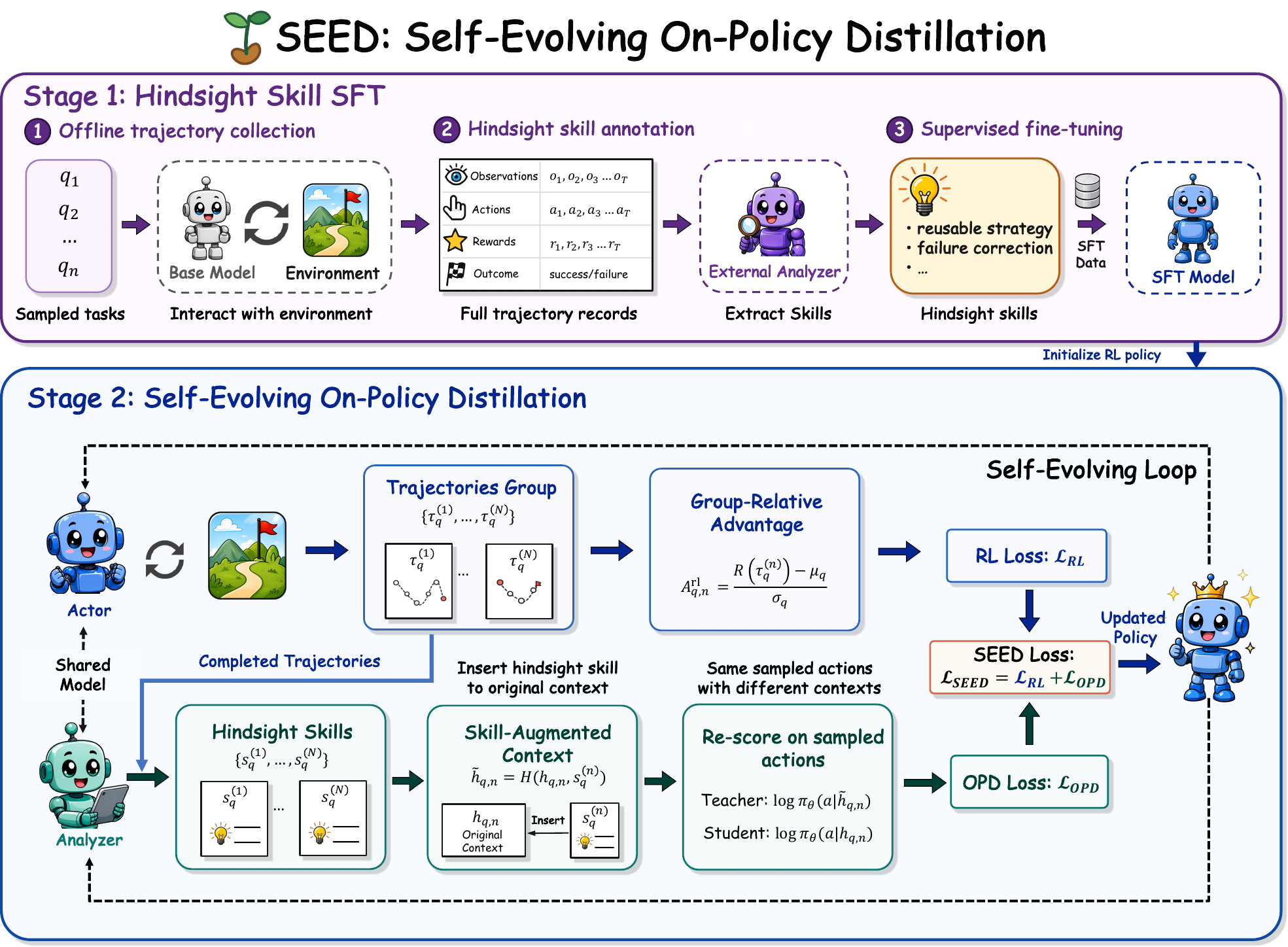}
    \caption{\textbf{Overview of \textsc{Seed}.} \textbf{Stage 1} (Hindsight Skill SFT) equips the policy to extract hindsight skills from completed trajectories. \textbf{Stage 2} (Self-Evolving On-Policy Distillation) jointly optimizes outcome-based RL and skill-conditioned OPD in a self-evolving agentic loop.}
    \label{fig:overview}
    \vspace{-0.1in}
\end{figure}

\section{Method}
We present \textbf{\textsc{Seed}}, a self-evolving OPD framework for agentic RL. \textsc{Seed} is motivated by the observation that completed agent trajectories contain rich hindsight information: even when reward is sparse, a full trajectory often reveals useful behavioral patterns, failure causes, and reusable strategies that are not directly available at intermediate decision steps. \textsc{Seed} converts such hindsight information into \textbf{hindsight skills} and distills their behavioral effect back into the ordinary policy.

As shown in Figure~\ref{fig:overview}, \textsc{Seed} has two training stages. First, hindsight-skill supervised fine-tuning (SFT) equips a single policy model to analyze completed trajectories. Second, the current policy snapshot both collects on-policy trajectories and analyzes them into hindsight skills. The same sampled actions are then re-scored under ordinary and skill-augmented contexts to construct a token-level OPD signal, which is optimized jointly with RL. At inference time, \textsc{Seed} removes the analyzer, so deployment requires only the learned policy.

\subsection{Problem Formulation}
We formulate long-horizon agentic tasks as partially observable Markov decision processes:
\[
\left(
\mathcal{S},
\mathcal{A},
\mathcal{O},
\mathcal{T},
\Omega,
\mathcal{R},
\gamma
\right),
\]
where $\mathcal{S}$ is the latent state space, $\mathcal{A}$ is the action space, $\mathcal{O}$ is the observation space, $\mathcal{T}$ is the transition kernel, $\Omega$ is the observation kernel, $\mathcal{R}$ is the reward function, and $\gamma$ is the discount factor. At timestep $t$, the agent receives an observation $o_t \in \mathcal{O}$ and maintains an interaction history
\[
h_t =
\left(
o_0, a_0, o_1, a_1, \ldots, o_t
\right),
\]
where $a_i$ denotes the textual response or executable action produced by the agent. The policy generates the next action according to
\[
a_t
\sim
\pi_\theta
\left(
\cdot \mid h_t
\right).
\]

A completed trajectory is denoted by
\[
\tau =
\left\{
(o_t, a_t, r_t)
\right\}_{t=0}^{T-1},
\]
with an episode-level outcome $R(\tau) \in \mathbb{R}$. In many agentic environments, $R(\tau)$ is sparse and becomes available only after task completion, such as a binary success indicator or a final task score. The standard reinforcement learning objective is
\[
J(\theta)
=
\mathbb{E}_{\tau \sim \pi_\theta}
\left[
R(\tau)
\right].
\]
This formulation exposes a key challenge in long-horizon agentic RL: the environment provides only trajectory-level feedback, while the policy must learn from many token-level decisions distributed across the interaction history. \textsc{Seed} bridges this granularity gap by deriving an auxiliary training-time token-level signal from hindsight skills extracted after trajectory completion.

\subsection{Hindsight Skill Supervised Fine-Tuning}
The first stage initializes the policy with the ability to analyze full interaction histories and express reusable behavioral guidance as natural-language hindsight skills.

\paragraph{Offline trajectory collection.}
We first collect an offline pool of trajectories using a base policy without skill augmentation. Let $\mathcal{Q}_{\mathrm{sft}} = \{q_j\}_{j=1}^{M}$ be a set of training tasks. For each task $q_j$, we run $K_0$ rollouts with the base policy $\pi_{\theta_{\mathrm{base}}}$:
\[
\mathcal{B}_j
=
\left\{
\tau_{j,k}
\right\}_{k=1}^{K_0},
\qquad
\tau_{j,k}
\sim
\pi_{\theta_{\mathrm{base}}}
\left(
\cdot \mid q_j
\right).
\]
The complete offline trajectory pool is
\[
\mathcal{B}
=
\bigcup_{j=1}^{M}
\mathcal{B}_j.
\]
Each trajectory contains the task description, observations, actions, rewards, and final outcome. These trajectories are collected without any hindsight skill in the decision context, ensuring that the SFT data are derived from ordinary agent-environment interaction.

\paragraph{Hindsight skill annotation.}
Given a completed trajectory $\tau$, an external analyzer $A_{\mathrm{ext}}$ produces a hindsight-skill annotation:
\[
s_\tau
=
A_{\mathrm{ext}}
\left(
\tau
\right),
\]
where $s_\tau$ denotes the skill annotation produced from trajectory $\tau$. For a successful trajectory, $s_{\tau}$ typically captures reusable strategies or workflows that contributed to task completion. For a failed trajectory, $s_\tau$ can encode corrective or avoidance guidance inferred from the observed failure.

We retain the annotation only if the generated skill is correctly formatted. Let $v_\tau \in \{0,1\}$ denote the validity indicator. The accepted SFT set is
\[
\mathcal{D}_{\mathrm{sft}}
=
\left\{
(x_\tau, s_\tau)
:
\tau \in \mathcal{B},
v_\tau = 1
\right\},
\]
where $x_\tau$ is the serialized trajectory-analysis input and $s_\tau$ is the corresponding hindsight skill target.

\paragraph{Supervised fine-tuning.}
We then fine-tune the policy model to predict the hindsight skill from the completed trajectory. The same autoregressive model later initializes both the RL actor and the synchronized trajectory analyzer. For an accepted SFT example $(x_\tau, s_\tau) \in \mathcal{D}_{\mathrm{sft}}$, the model is optimized with the standard negative log-likelihood objective:
\[
\mathcal{L}_{\mathrm{sft}}(\theta)
=
-
\mathbb{E}_{(x_\tau,s_\tau)\sim\mathcal{D}_{\mathrm{sft}}}
\left[
\sum_{\ell=1}^{|s_\tau|}
\log \pi_\theta\!\left(
s_{\tau,\ell}
\mid
x_\tau, s_{\tau,<\ell}
\right)
\right],
\]
where $s_{\tau,\ell}$ is the $\ell$-th token of $s_\tau$. The resulting checkpoint $\theta_{\mathrm{sft}}$ initializes the later RL policy.

During RL, the trajectory analyzer is instantiated directly from the current policy checkpoint. Thus, the same model can act in the environment under ordinary interaction histories and generate hindsight skills from completed trajectories, without a separately trained analyzer.

\subsection{Self-Evolving On-Policy Distillation}
The second stage performs agentic RL with an additional token-level OPD signal. At the start of each update, \textsc{Seed} freezes the current policy as $\pi_{\theta_{\mathrm{old}}}$. This snapshot collects trajectories and also parameterizes the analyzer that extracts their hindsight skills. A trainable policy $\pi_\theta$, initialized from $\pi_{\theta_{\mathrm{old}}}$, is then optimized with both the environment-driven GRPO and OPD objectives. After the update, the optimized policy $\pi_\theta$ becomes the policy snapshot for the next iteration.

This design keeps the distillation signal on-policy: the policy collects its own trajectories, the analyzer agent converts these completed trajectories into hindsight skills, and the behavioral effect of such hindsight guidance is distilled back into the ordinary policy.

\paragraph{On-policy hindsight skill generation.}
For each task prompt $q$, \textsc{Seed} samples a group of $N$ trajectories using the frozen policy:
\[
\mathcal{G}_q
=
\left\{
\tau_q^{(1)}, \tau_q^{(2)}, \ldots, \tau_q^{(N)}
\right\},
\qquad
\tau_q^{(n)}
\sim
\pi_{\theta_{\mathrm{old}}}
\left(
\cdot \mid q
\right).
\]
For each completed trajectory $\tau_q^{(n)}$, \textsc{Seed} constructs its trajectory-analysis input $x_{\tau_q^{(n)}}$. The analyzer agent instantiated from the same policy snapshot $\pi_{\theta_{\mathrm{old}}}$ then analyzes the completed trajectory and generates a hindsight skill:
\[
s_q^{(n)}
=
A_{\theta_{\mathrm{old}}}
\left(
x_{\tau_q^{(n)}}
\right),
\]
where $A_{\theta_{\mathrm{old}}}$ denotes the analyzer role of the shared model. Although the actor and analyzer share the same parameters at each update, they play different roles: the actor interacts with the environment, while the analyzer summarizes completed trajectories into trajectory-level hindsight skills. The generated skill $s_q^{(n)}$ thus provides reusable behavioral guidance.

This shared parameterization creates \textsc{Seed}'s self-evolving loop. Refreshing $\theta_{\mathrm{old}}$ changes both the trajectories encountered by the actor and the model capability used for skill analysis. Consequently, the experience distribution and its hindsight supervision evolve together.

\paragraph{On-policy distillation objective.}
\textsc{Seed} keeps the original on-policy actions fixed and re-scores them under a skill-augmented context. Let $H$ be a deterministic context augmentation function that incorporates the generated skill into the ordinary interaction history. At timestep $t$ in trajectory $\tau_q^{(n)}$, the skill-augmented history is
\[
\tilde{h}_{q,n,t}
=
H
\left(
h_{q,n,t},
s_q^{(n)}
\right).
\]
The original sampled action is tokenized as
\[
a_{q,n,t}
=
\left(
a_{q,n,t,1},
\ldots,
a_{q,n,t,L_{q,n,t}}
\right),
\]
where $L_{q,n,t}$ denotes the number of tokens in the sampled action $a_{q,n,t}$.

The same policy computes two token-level log-probabilities on the same sampled action tokens. The first is the skill-conditioned teacher branch log-probability, obtained by re-scoring the sampled action under the skill-augmented history:
\[
\ell^{\mathrm{skill}}_{q,n,t,\ell}
=
\log
\pi_{\theta}
\left(
a_{q,n,t,\ell}
\mid
\tilde{h}_{q,n,t},
a_{q,n,t,<\ell}
\right).
\]
The second is the ordinary student branch log-probability under the original interaction history:
\[
\ell^{\theta}_{q,n,t,\ell}
=
\log
\pi_{\theta}
\left(
a_{q,n,t,\ell}
\mid
h_{q,n,t},
a_{q,n,t,<\ell}
\right).
\]
Although both branches share $\pi_\theta$, they correspond to different input contexts: the teacher observes the hindsight skill, while the student acts only from the ordinary history. The teacher provides a detached training-time signal, and gradients flow exclusively through the ordinary student branch. In summary, during optimization, both branches are evaluated using the current trainable policy $\pi_\theta$, whereas $\pi_{\theta_{\mathrm{old}}}$ is used for trajectory collection, skill generation, and importance-ratio computation.

We define the detached skill-induced log-probability shift
\[
\Delta_{q,n,t,\ell}
=
\operatorname{sg}
\left[
\ell^{\mathrm{skill}}_{q,n,t,\ell}
-
\ell^{\theta}_{q,n,t,\ell}
\right],
\]
where $\operatorname{sg}[\cdot]$ denotes stop-gradient. Following SDAR~\citep{lu2026sdar}, SEED maps this shift to a confidence gate:
\[
g_{q,n,t,\ell}
=
\sigma
\left(
\beta_{\mathrm{opd}}
\Delta_{q,n,t,\ell}
\right),
\]
where $\sigma(\cdot)$ is the logistic sigmoid function and $\beta_{\mathrm{opd}}$ controls the sharpness of the gate. A positive shift indicates that the hindsight skill supports the sampled token and yields a larger gate; a negative shift attenuates that token's auxiliary supervision.

The OPD loss is defined as a confidence-gated sampled-token distillation objective:
\begin{equation}
\mathcal{L}_{\mathrm{opd}}(\theta)
=
\mathbb{E}_{q,n,t,\ell}
\left[
m_{q,n,t,\ell}
\cdot
g_{q,n,t,\ell}
\cdot
\left(
\operatorname{sg}
\left[
\ell^{\mathrm{skill}}_{q,n,t,\ell}
\right]
-
\ell^{\theta}_{q,n,t,\ell}
\right)
\right].
\label{eq:opd_objective}
\end{equation}
Here $m_{q,n,t,\ell} \in \{0,1\}$ is the valid-token mask. Throughout, expectations over $(q,n,t,\ell)$ are computed as masked means over valid action tokens, normalized by $\sum_{q,n,t,\ell}m_{q,n,t,\ell}$. Because both $g_{q,n,t,\ell}$ and the teacher log-probability are detached, the teacher term is constant with respect to $\theta$, and
\[
\nabla_\theta \mathcal{L}_{\mathrm{opd}}
=
-
\mathbb{E}_{q,n,t,\ell}
\left[
m_{q,n,t,\ell}
\cdot
g_{q,n,t,\ell}
\cdot
\nabla_\theta
\ell^{\theta}_{q,n,t,\ell}
\right].
\]
Thus, the objective is gradient-equivalent to a gate-weighted negative log-likelihood. Minimizing it increases the ordinary policy's likelihood of teacher-endorsed on-policy tokens, thereby internalizing the skill's behavioral effect without exposing the skill at inference time.

\paragraph{Joint training objective.}
In addition to OPD, \textsc{Seed} optimizes the policy with a group-relative RL objective. For each group $\mathcal{G}_q$, we compute the mean and standard deviation of trajectory outcomes:
\[
\mu_q
=
\frac{1}{N}
\sum_{n=1}^{N}
R
\left(
\tau_q^{(n)}
\right),
\quad
\sigma_q
=
\sqrt{
\frac{1}{N}
\sum_{n=1}^{N}
\left(
R
\left(
\tau_q^{(n)}
\right)
-
\mu_q
\right)^2
}.
\]
The trajectory-level group-relative advantage is
\[
A^{\mathrm{rl}}_{q,n}
=
\frac{
R
\left(
\tau_q^{(n)}
\right)
-
\mu_q
}{
\sigma_q+\epsilon
}.
\]
This advantage is broadcast to valid action tokens:
\[
A^{\mathrm{rl}}_{q,n,t,\ell}
=
A^{\mathrm{rl}}_{q,n}
m_{q,n,t,\ell}.
\]

The token-level probability ratio is
\[
\rho_{q,n,t,\ell}(\theta)
=
\exp
\left(
\ell^\theta_{q,n,t,\ell}
-
\ell^{\mathrm{old}}_{q,n,t,\ell}
\right),
\]
where
\[
\ell^{\mathrm{old}}_{q,n,t,\ell}
=
\log
\pi_{\theta_{\mathrm{old}}}
\left(
a_{q,n,t,\ell}
\mid
h_{q,n,t},
a_{q,n,t,<\ell}
\right).
\]
The RL loss is
\[
\begin{aligned}
\mathcal{L}_{\mathrm{rl}}(\theta)
&=
-
\mathbb{E}_{q,n,t,\ell}
\left[
\min
\left(
\rho_{q,n,t,\ell}(\theta)
A^{\mathrm{rl}}_{q,n,t,\ell},
\operatorname{clip}
\left(
\rho_{q,n,t,\ell}(\theta),
1-\epsilon_{\mathrm{clip}},
1+\epsilon_{\mathrm{clip}}
\right)
A^{\mathrm{rl}}_{q,n,t,\ell}
\right)
\right]
\\
&\quad
+
\beta_{\mathrm{KL}}
D_{\mathrm{KL}} .
\end{aligned}
\label{eq:rl_objective}
\]
where $\beta_{\mathrm{KL}}$ is the KL regularization coefficient.

The final training objective combines environment-driven agentic RL with hindsight-skill OPD:
\[
\mathcal{L}_{\mathrm{SEED}}(\theta)
=
\mathcal{L}_{\mathrm{rl}}(\theta)
+
\lambda_{\mathrm{opd}}
\mathcal{L}_{\mathrm{opd}}(\theta),
\]
where $\lambda_{\mathrm{opd}}$ controls the strength of the auxiliary OPD signal. The RL term optimizes environment outcomes, whereas the OPD loss internalizes the effect of self-generated hindsight skills. The updated policy then becomes $\pi_{\theta_{\mathrm{old}}}$ for the next iteration, closing the self-evolving loop.

At inference time, the deployed agent acts only from the ordinary interaction history:
\[
a_t
\sim
\pi_\theta
\left(
\cdot
\mid
h_t
\right).
\]
Thus, all hindsight skills are used only as training-time guidance. Deployment requires no analyzer, no skill bank, no retrieval module, and no augmented decision prompt.

Algorithm~\ref{alg:seed} in Appendix~\ref{app:algorithm_skills} summarizes the complete training procedure.

\section{Experiment}
\subsection{Experimental Setting}

\paragraph{Benchmarks.}
We evaluate \textsc{Seed} across three complementary forms of long-horizon agency. \textit{ALFWorld}~\citep{shridhar2021alfworld} casts household tasks as text-based embodied interaction, requiring an agent to interpret observations and execute extended action sequences. We consider six task families: \textit{Pick}, \textit{Look}, \textit{Clean}, \textit{Heat}, \textit{Cool}, and \textit{Pick2}. \textit{WebShop}~\citep{yao2022webshop} evaluates interactive web navigation, where an agent searches for products, inspects their attributes, and completes a purchase according to a natural-language request. We use the standard set of 128 test tasks. Finally, following the Search-R1 protocol~\citep{jin2025searchr1}, \textit{Search-based QA} requires an agent to gather evidence through search before answering questions from Natural Questions~\citep{kwiatkowski2019natural}, TriviaQA~\citep{joshi2017triviaqa}, PopQA~\citep{mallen2023popqa}, HotpotQA~\citep{yang2018hotpotqa}, 2WikiMultiHopQA~\citep{ho2020constructing}, MuSiQue~\citep{trivedi2022musique}, and Bamboogle~\citep{press2023measuring}. Together, these benchmarks cover embodied control, web-based decision making, and tool-augmented information seeking.

\vspace{-0.08in}

\paragraph{Baselines.}
We compare \textsc{Seed} with prompting, outcome-based RL, and distillation baselines. \textit{Vanilla} evaluates the instruction-tuned backbone without post-training, whereas \textit{Skill-Prompt} provides natural-language skills only in the evaluation context. \textit{GRPO}~\citep{shao2024deepseekmath} optimizes group-normalized trajectory rewards without auxiliary supervision. \textit{Skill-GRPO} additionally conditions the policy on skills during RL. We also include representative self-distillation and skill-distillation methods: \textit{OPSD}~\citep{zhao2026selfdistilledreasoner}, \textit{GRPO+OPSD}, \textit{Skill-SD}~\citep{wang2026skillsd}, \textit{RLSD}~\citep{yang2026rlsd}, and \textit{SDAR}~\citep{lu2026sdar}. These comparisons distinguish the effects of outcome optimization, access to skill context, and dense teacher-derived supervision. An asterisk denotes evaluation with skills; all other methods operate from the ordinary interaction history without privileged skill inputs at test time. We use matched backbones, rollout budgets, and training schedules for all reproduced post-training baselines.

\vspace{-0.08in}

\begin{table*}[ht!]
    \centering
    \caption{
        \textbf{Performance Comparison on the representative long-horizon benchmarks (ALFWorld, Search-based QA, and WebShop).}
        We report the success rate (\%) on ALFWorld, accuracy on Search-based QA, and task-completion score/success rate on WebShop. An asterisk (*) denotes validation with skills. The \sethlcolor{topcolor}\hl{\textbf{best}} and \sethlcolor{secondcolor}\hl{\mbox{\underline{second-best}}} results are highlighted.
    }
    \vspace{0.05in}
    \label{tab:main_results}
    \resizebox{\textwidth}{!}{%
    \begin{tabular}{l ccccccc cccccccc cc}
    \toprule
    & \multicolumn{7}{c}{\textbf{ALFWorld}}
    & \multicolumn{8}{c}{\textbf{Search-based QA}}
    & \multicolumn{2}{c}{\textbf{WebShop}} \\
    \cmidrule(lr){2-8}
    \cmidrule(lr){9-16}
    \cmidrule(lr){17-18}
    \textbf{Method}
    & \textbf{Pick} & \textbf{Look} & \textbf{Clean}
    & \textbf{Heat} & \textbf{Cool} & \textbf{Pick2} & \textbf{Avg}
    & \textbf{NQ} & \textbf{Triv} & \textbf{Pop}
    & \textbf{Hotp} & \textbf{2Wk} & \textbf{MuS}
    & \textbf{Bam} & \textbf{Avg}
    & \textbf{Score} & \textbf{Succ.} \\
    \midrule

    \rowcolor{gray!10}
    \multicolumn{18}{c}{\textit{Qwen2.5-3B-Instruct}} \\

    Vanilla
        & 44.4 & 11.1 & 6.2 & 15.4 & 28.6 & 12.5 & 21.9
        & 24.6 & 48.1 & 31.0 & 26.3 & 25.3 & 7.2 & 59.7 & 31.7
        & 6.7 & 0.8
        \\
    Skill-Prompt*
        & 51.7 & 66.7 & 48.4 & 0.0 & 4.3 & 10.0 & 28.9
        & 23.7 & 46.2 & 30.6 & 24.4 & 22.1 & 7.5 & 12.5 & 23.9
        & 0.2 & 0.8
        \\
    OPSD
        & 48.8 & 41.7 & 16.7 & 0.0 & 15.8 & 16.7 & 28.1
        & 0.1 & 0.1 & 0.1 & 0.0 & 0.0 & 0.0 & 0.0 & 0.0
        & 11.3 & 3.1
        \\
    GRPO
        & 91.2 & 62.5 & \cellcolor{secondcolor}\underline{96.2}
        & 61.9 & 65.0 & 47.4 & 75.0
        & 39.3 & 60.6 & 41.1 & 37.4 & 34.6 & 15.4 & 26.4 & 36.4
        & 79.8 & 63.3
        \\
    Skill-GRPO
        & 88.9 & 71.4 & 58.8 & 70.6 & 40.7 & 29.2 & 60.2
        & 43.5 & 58.8 & 43.0 & 36.8 & 32.2 & 11.7 & 12.5 & 34.1
        & 77.3 & 60.9
        \\
    Skill-GRPO*
        & 94.3 & 57.1 & \cellcolor{topcolor}\textbf{100.0}
        & 66.7 & \cellcolor{secondcolor}\underline{73.1}
        & 57.1 & 80.5
        & 44.3 & 59.6 & 44.3 & 39.0 & 36.1 & 14.5 & 14.9 & 36.1
        & 76.3 & 66.4
        \\
    GRPO+OPSD
        & \cellcolor{topcolor}\textbf{100.0}
        & \cellcolor{secondcolor}\underline{82.4}
        & 85.7
        & \cellcolor{secondcolor}\underline{75.0}
        & 70.0 & 60.0 & 81.2
        & \cellcolor{topcolor}\textbf{44.9}
        & \cellcolor{secondcolor}\underline{61.2}
        & \cellcolor{secondcolor}\underline{45.2}
        & \cellcolor{secondcolor}\underline{40.4}
        & 38.5 & 16.0 & 66.1
        & \cellcolor{secondcolor}\underline{44.6}
        & 77.8 & 66.4
        \\
    Skill-SD
        & 88.2 & 50.0 & \cellcolor{secondcolor}\underline{96.2}
        & 52.4 & 65.0 & 57.9 & 73.4
        & 44.4 & 60.4 & 44.0 & 39.5
        & \cellcolor{secondcolor}\underline{40.4}
        & 15.4 & 64.9 & 44.1
        & 75.9 & 64.0
        \\
    RLSD
        & 87.9 & 75.0 & 90.9
        & \cellcolor{secondcolor}\underline{75.0}
        & \cellcolor{secondcolor}\underline{73.1}
        & 68.4 & 79.7
        & 41.5 & 58.6 & 42.3
        & \cellcolor{secondcolor}\underline{40.4}
        & 40.2
        & \cellcolor{secondcolor}\underline{16.8}
        & \cellcolor{secondcolor}\underline{66.9}
        & 43.8
        & 84.4 & 66.4
        \\
    SDAR
        & \cellcolor{secondcolor}\underline{97.1}
        & 62.5
        & \cellcolor{topcolor}\textbf{100.0}
        & 61.9
        & \cellcolor{topcolor}\textbf{75.0}
        & \cellcolor{topcolor}\textbf{84.2}
        & \cellcolor{secondcolor}\underline{84.4}
        & \cellcolor{secondcolor}\underline{44.8}
        & 58.1 & 44.3 & 38.6 & 36.2 & 15.7 & 66.1 & 43.4
        & \cellcolor{secondcolor}\underline{85.0}
        & \cellcolor{secondcolor}\underline{68.0}
        \\
    \textbf{\textsc{Seed} (Ours)}
        & \cellcolor{topcolor}\textbf{100.0}
        & \cellcolor{topcolor}\textbf{100.0}
        & \cellcolor{topcolor}\textbf{100.0}
        & \cellcolor{topcolor}\textbf{100.0}
        & 70.6
        & \cellcolor{secondcolor}\underline{80.0}
        & \cellcolor{topcolor}\textbf{91.8}
        & 44.3
        & \cellcolor{topcolor}\textbf{61.9}
        & \cellcolor{topcolor}\textbf{47.2}
        & \cellcolor{topcolor}\textbf{41.0}
        & \cellcolor{topcolor}\textbf{42.0}
        & \cellcolor{topcolor}\textbf{16.9}
        & \cellcolor{topcolor}\textbf{67.7}
        & \cellcolor{topcolor}\textbf{45.7}
        & \cellcolor{topcolor}\textbf{88.5}
        & \cellcolor{topcolor}\textbf{78.9}
        \\

    \midrule

    \rowcolor{gray!10}
    \multicolumn{18}{c}{\textit{Qwen2.5-7B-Instruct}} \\

    Vanilla
        & 36.1 & 22.2 & 3.1 & 0.0 & 0.0 & 0.0 & 12.5
        & 25.2 & 50.8 & 29.5 & 29.0 & 29.0 & 10.4 & 63.7 & 33.9
        & 5.9 & 1.6
        \\
    Skill-Prompt*
        & 51.7 & 50.0 & 32.3 & 5.3 & 4.3 & 0.0 & 23.4
        & 30.9 & 52.1 & 32.7 & 32.7 & 27.9 & 12.7 & 66.1 & 36.4
        & 1.7 & 0.8
        \\
    OPSD
        & 50.0 & 60.0 & 22.7 & 21.4 & 17.6 & 9.5 & 32.8
        & 8.8 & 8.6 & 17.5 & 2.5 & 4.2 & 0.5 & 1.2 & 6.2
        & 4.5 & 2.3
        \\
    GRPO
        & 91.2 & 87.5 & 96.2 & 81.0 & 65.0 & 57.9 & 81.2
        & 45.1 & 63.7 & 44.0 & 43.6 & 43.2 & 16.8 & 37.6 & 42.0
        & 80.9 & 72.6
        \\
    Skill-GRPO
        & 88.5 & 66.7 & 65.2 & 61.1 & 57.7 & 73.1 & 69.5
        & 45.2 & 63.7 & 45.7 & 43.1 & 43.3 & 19.6 & 21.4 & 40.3
        & 80.4 & 71.9
        \\
    Skill-GRPO*
        & \cellcolor{topcolor}\textbf{100.0}
        & 83.3
        & \cellcolor{secondcolor}\underline{96.4}
        & 83.3 & 75.0
        & \cellcolor{secondcolor}\underline{78.9}
        & \cellcolor{secondcolor}\underline{88.3}
        & 44.8 & 63.0 & 45.1 & 43.7 & 43.7
        & \cellcolor{secondcolor}\underline{20.5}
        & \cellcolor{secondcolor}\underline{71.4}
        & 47.5
        & 87.0
        & \cellcolor{secondcolor}\underline{81.2}
        \\
    GRPO+OPSD
        & 91.4 & 61.5
        & \cellcolor{topcolor}\textbf{100.0}
        & \cellcolor{secondcolor}\underline{87.5}
        & 76.5 & 52.2 & 80.4
        & \cellcolor{topcolor}\textbf{47.3}
        & \cellcolor{secondcolor}\underline{64.5}
        & 46.9 & 43.8 & 39.3 & 18.0 & 69.4 & 47.0
        & 86.8 & 76.5
        \\
    Skill-SD
        & 93.9
        & \cellcolor{secondcolor}\underline{93.8}
        & 90.9
        & \cellcolor{topcolor}\textbf{100.0}
        & 69.2 & 68.4 & 85.1
        & \cellcolor{secondcolor}\underline{47.1}
        & \cellcolor{secondcolor}\underline{64.5}
        & \cellcolor{secondcolor}\underline{47.8}
        & 44.2 & 42.1 & 20.2 & 69.0 & 47.8
        & 86.1 & 76.5
        \\
    RLSD
        & \cellcolor{topcolor}\textbf{100.0}
        & 87.5 & 92.3 & 58.8
        & \cellcolor{secondcolor}\underline{80.0}
        & 65.2 & 82.0
        & 46.8 & 63.0 & 44.4
        & \cellcolor{topcolor}\textbf{45.5}
        & \cellcolor{topcolor}\textbf{48.9}
        & \cellcolor{topcolor}\textbf{21.5}
        & \cellcolor{topcolor}\textbf{73.0}
        & \cellcolor{topcolor}\textbf{49.0}
        & 87.4 & 77.3
        \\
    SDAR
        & \cellcolor{secondcolor}\underline{94.7}
        & 75.0
        & \cellcolor{topcolor}\textbf{100.0}
        & 86.7 & 68.2
        & \cellcolor{secondcolor}\underline{78.9}
        & 85.9
        & 46.3 & 63.5
        & \cellcolor{topcolor}\textbf{48.2}
        & 43.8
        & \cellcolor{secondcolor}\underline{48.4}
        & 19.6
        & \cellcolor{topcolor}\textbf{73.0}
        & \cellcolor{topcolor}\textbf{49.0}
        & \cellcolor{secondcolor}\underline{89.4}
        & \cellcolor{topcolor}\textbf{82.8}
        \\
    \textbf{\textsc{Seed} (Ours)}
        & \cellcolor{topcolor}\textbf{100.0}
        & \cellcolor{topcolor}\textbf{100.0}
        & 96.3 & 80.0
        & \cellcolor{topcolor}\textbf{100.0}
        & \cellcolor{topcolor}\textbf{100.0}
        & \cellcolor{topcolor}\textbf{96.1}
        & 47.0
        & \cellcolor{topcolor}\textbf{64.9}
        & \cellcolor{secondcolor}\underline{47.8}
        & \cellcolor{secondcolor}\underline{45.2}
        & 45.3 & 20.1 & 70.2
        & \cellcolor{secondcolor}\underline{48.6}
        & \cellcolor{topcolor}\textbf{89.7}
        & 78.1
        \\

    \midrule

    \rowcolor{gray!10}
    \multicolumn{18}{c}{\textit{Qwen3-1.7B-Instruct}} \\

    Vanilla
        & 25.0 & 22.2 & 3.1 & 0.0 & 21.4 & 4.2 & 12.5
        & 29.4 & 46.9 & 37.0 & 23.5 & 19.6 & 6.4 & 10.5 & 24.8
        & 46.5 & 4.7
        \\
    Skill-Prompt*
        & 10.3 & 50.0 & 16.1 & 0.0 & 0.0 & 5.0 & 9.4
        & 29.4 & 46.5 & 36.2 & 22.9 & 20.8 & 4.3 & 10.1 & 24.3
        & 23.0 & 2.3
        \\
    OPSD
        & 26.3 & 33.3 & 9.1 & 0.0 & 4.5 & 5.3 & 14.1
        & 4.2 & 8.3 & 4.6 & 6.6 & 15.3 & 0.7 & 1.2 & 5.8
        & 47.4 & 9.3
        \\
    GRPO
        & 71.1 & 41.7 & 36.4 & 40.0 & 31.8 & 31.6 & 46.1
        & 40.0
        & \cellcolor{topcolor}\textbf{58.9}
        & 43.5 & 35.4 & 30.3 & 12.0 & 65.7 & 40.8
        & 67.3 & 38.3
        \\
    Skill-GRPO
        & 27.6
        & \cellcolor{secondcolor}\underline{54.5}
        & 22.7 & 27.3 & 0.0 & 19.2 & 21.1
        & 39.2
        & \cellcolor{secondcolor}\underline{58.6}
        & 43.9 & 35.2 & 28.2 & 11.5
        & \cellcolor{secondcolor}\underline{66.1}
        & 40.4
        & 73.4 & 46.1
        \\
    Skill-GRPO*
        & 31.4 & 42.9 & 51.9 & 8.3 & 11.5 & 7.1 & 28.1
        & 38.0 & 58.4 & 43.9 & 36.3 & 29.0 & 12.5
        & \cellcolor{topcolor}\textbf{66.9}
        & 40.7
        & 80.4 & 50.0
        \\
    GRPO+OPSD
        & 38.2 & 50.0 & 30.8 & 28.6 & 30.0 & 21.1 & 32.0
        & \cellcolor{secondcolor}\underline{40.7}
        & \cellcolor{topcolor}\textbf{58.9}
        & 45.0
        & \cellcolor{topcolor}\textbf{37.0}
        & 34.6
        & \cellcolor{topcolor}\textbf{13.3}
        & 65.7
        & \cellcolor{topcolor}\textbf{42.2}
        & 70.7 & 38.3
        \\
    Skill-SD
        & 52.9 & 37.5 & 69.2
        & \cellcolor{secondcolor}\underline{42.9}
        & \cellcolor{secondcolor}\underline{60.0}
        & \cellcolor{secondcolor}\underline{36.8}
        & 52.3
        & 39.1 & 57.5
        & \cellcolor{secondcolor}\underline{45.4}
        & 34.8 & 34.1 & 10.7 & 64.1 & 40.8
        & \cellcolor{secondcolor}\underline{81.8}
        & 53.9
        \\
    RLSD
        & 50.0 & 37.5 & 61.5 & 19.0 & 50.0 & 21.1 & 42.2
        & 38.6 & 57.3 & 43.0 & 34.5 & 34.1 & 11.5 & 65.3 & 40.6
        & 74.0 & 50.8
        \\
    SDAR
        & \cellcolor{secondcolor}\underline{73.5}
        & 25.0
        & \cellcolor{secondcolor}\underline{76.9}
        & 33.3 & 40.0
        & \cellcolor{secondcolor}\underline{36.8}
        & \cellcolor{secondcolor}\underline{53.9}
        & 39.7
        & \cellcolor{topcolor}\textbf{58.9}
        & 45.3 & 35.9
        & \cellcolor{topcolor}\textbf{35.5}
        & \cellcolor{secondcolor}\underline{12.6}
        & 65.3
        & \cellcolor{secondcolor}\underline{41.9}
        & 76.8
        & \cellcolor{secondcolor}\underline{58.6}
        \\
    \textbf{\textsc{Seed} (Ours)}
        & \cellcolor{topcolor}\textbf{97.6}
        & \cellcolor{topcolor}\textbf{100.0}
        & \cellcolor{topcolor}\textbf{100.0}
        & \cellcolor{topcolor}\textbf{80.0}
        & \cellcolor{topcolor}\textbf{84.2}
        & \cellcolor{topcolor}\textbf{90.0}
        & \cellcolor{topcolor}\textbf{92.0}
        & \cellcolor{topcolor}\textbf{42.0}
        & \cellcolor{topcolor}\textbf{58.9}
        & \cellcolor{topcolor}\textbf{47.1}
        & \cellcolor{secondcolor}\underline{36.9}
        & \cellcolor{secondcolor}\underline{35.2}
        & 10.6 & 64.9
        & \cellcolor{topcolor}\textbf{42.2}
        & \cellcolor{topcolor}\textbf{87.1}
        & \cellcolor{topcolor}\textbf{77.3}
        \\

    \bottomrule
    \end{tabular}
    }
\end{table*}

\paragraph{Evaluation Metrics.}
For ALFWorld, we report the success rate for each task family and their unweighted macro-average. For WebShop, we follow the environment protocol and report both the mean normalized task-completion score and the exact success rate. For Search-based QA, we compute answer accuracy separately on all seven subsets and report their unweighted macro-average. All metrics are expressed as percentages, and higher values indicate better performance.
\vspace{-0.08in}

\paragraph{Implementation Details.}
We use Qwen2.5-3B-Instruct and Qwen2.5-7B-Instruct~\citep{yang2024qwen25}, as well as Qwen3-1.7B-Instruct~\citep{yang2025qwen3}, as our backbone models.
\textit{SFT stage:}
For each backbone, we sample $M=180$ training tasks and collect $K_0=8$ rollout trajectories per task, resulting in 1,440 completed trajectories.
We then query GLM-5.2~\citep{zai2026glm52} as an external trajectory analyzer to extract hindsight skills from these trajectories.
After lightweight format validation, the retained trajectory--skill pairs are used to fine-tune the corresponding backbone for three epochs.
\textit{RL stage:}
The resulting SFT checkpoint initializes both the policy and the synchronized trajectory analyzer.
We train each model for 150 policy updates, using a batch size of 16 on ALFWorld and WebShop and 128 on Search-based QA.
The rollout group size is set to $N=8$ for all benchmarks.
Additional details on SFT data construction, skill annotation, optimization, and hyperparameter settings are provided in Appendix~\ref{app:implementation_details}.

\subsection{Main Results}\label{subsec32_main_results}
Table~\ref{tab:main_results} reports the results across different model scales and agentic domains. Three findings emerge:

\textbf{\textsc{Seed} consistently outperforms outcome-only RL through dense supervision.} Relative to GRPO, \textsc{Seed} improves the ALFWorld macro-average by 14.9-45.9 points, Search-based QA by 1.4-9.3 points, the WebShop task-completion score by 8.7-19.8 points, and the success rate by 5.5-39.0 points across the three backbones. Compared with Skill-GRPO, which conditions exploration on natural-language skills but still broadcasts a single terminal-reward-derived advantage to all valid tokens, \textsc{Seed} further improves the ALFWorld average by 26.6-70.9 points, Search-based QA by 1.8-11.6 points, the WebShop score by 9.3-13.7 points, and the success rate by 6.2-31.2 points. These consistent improvements over both GRPO and Skill-GRPO demonstrate that dense token-level hindsight supervision provides more effective credit assignment than outcome-only optimization, leading to substantially stronger performance across long-horizon agentic tasks.

\begin{figure}[t]
    \centering
    \includegraphics[width=\linewidth]{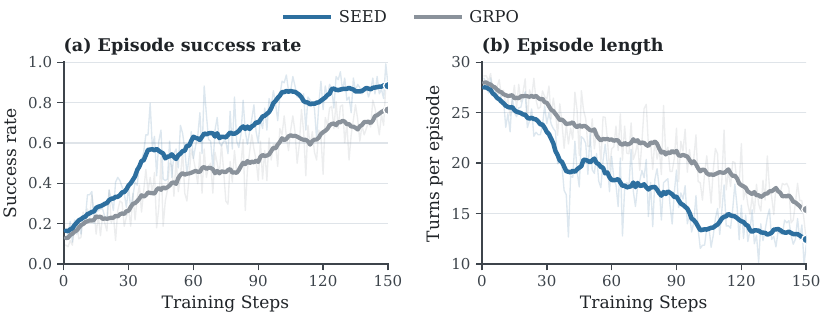}
    \caption{
        \textbf{Training dynamics on ALFWorld.}
        We compare \textsc{Seed} and GRPO using Qwen2.5-3B-Instruct as the backbone. Translucent curves show raw measurements, while solid curves show 13-point centered moving averages.
    }
    \label{fig:training_dynamics}
\end{figure}

\textbf{Internalizing skills is substantially more effective than inserted prompts.}
Skill-Prompt, which provides skills only during evaluation, underperforms \textsc{Seed} on every aggregate metric across all three backbones. \textsc{Seed} also exceeds Skill-GRPO* in 11 of the 12 aggregate comparisons, despite using no skill context during evaluation. These results indicate that hindsight skills are more effective when distilled into the policy than when supplied as additional context at inference time.

\textbf{Self-evolving hindsight distillation is stronger than static self-distillation.}
Across the static distillation baselines, \textsc{Seed} achieves the best or tied best result in 10 of the 12 aggregate comparisons. The advantage is clearest on ALFWorld, where \textsc{Seed} outperforms the strongest static baseline by 7.4 points with Qwen2.5-3B, 10.2 points with Qwen2.5-7B, and 38.1 points with Qwen3-1.7B. This pattern supports the benefit of synchronizing the analyzer with the latest policy, so hindsight supervision adapts to the trajectories and failure modes encountered during training.

\subsection{Training Dynamics}
Figure~\ref{fig:training_dynamics} compares the optimization dynamics of \textsc{Seed} and GRPO on ALFWorld. The success-rate curves diverge early: by training step 40, \textsc{Seed} reaches roughly 57\% while GRPO remains near 35\%, and the advantage persists thereafter. \textsc{Seed} also reduces the mean episode length more quickly, from approximately 28 turns to 13, compared with about 16 turns for GRPO at the end of training. Because shorter trajectories coincide with higher success, this reduction reflects more efficient task execution rather than premature termination. Together, these trends indicate that \textsc{Seed} improves task completion and interaction efficiency by reducing unproductive exploration and learning more direct solution strategies.

\subsection{Sample Efficiency}\label{subsec33_efficiency_analysis}
Figure~\ref{fig:sample_efficiency_line} shows that \textsc{Seed} consistently outperforms GRPO across all data fractions and can match or surpass the performance of GRPO trained with substantially more data. Using only 60\% of the training instances, \textsc{Seed} achieves a score of 80.7, exceeding the 75.0 obtained by GRPO with the full training set. Similarly, with 40\% of the data, \textsc{Seed} reaches 58.9, closely matching GRPO trained with twice as much data, which achieves 58.6 at the 80\% setting. These results demonstrate that \textsc{Seed} extracts more informative and effective supervision from each completed trajectory than outcome-only RL based solely on terminal rewards. Detailed results are provided in Appendix~\ref{app:sample_efficiency}.

\subsection{Cross-Domain Generalization}\label{subsec34_cross_domain_generalization}
We further evaluate the 3B checkpoint trained with \textsc{Seed} and GRPO on the ALFWorld unseen split. As shown in Figure~\ref{fig:generalization_bar}, \textsc{Seed} increases the macro-average success rate from 70.9 to 86.2, outperforming GRPO by 15.3 points and achieving better performance in five of the six task families. The largest improvements are observed on \textit{Heat} (+35.0), \textit{Look} (+18.3), and \textit{Pick} (+16.5). These results indicate that the policy trained with \textsc{Seed} acquires reusable behavioral strategies that transfer effectively to unseen environments, rather than merely memorizing the training trajectories. Detailed results are provided in Appendix~\ref{app:ood_generalization}.

\begin{figure}[ht!]
\centering
\begin{minipage}[t]{0.48\linewidth}
    \centering
    \includegraphics[width=\linewidth]{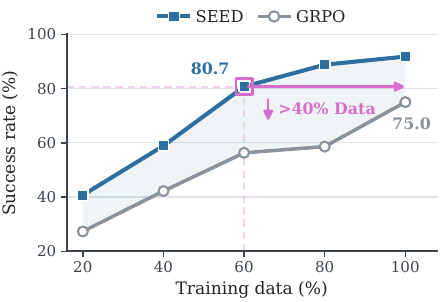}
    \vspace{-0.15in}
    \caption{
        \textbf{Sample efficiency analysis.} \textsc{Seed} consistently outperforms GRPO across different data fraction settings and surpasses full-data GRPO using only 60\% of the training data.
    }
    \label{fig:sample_efficiency_line}
\end{minipage}
\hfill
\begin{minipage}[t]{0.48\linewidth}
    \centering
    \includegraphics[width=\linewidth]{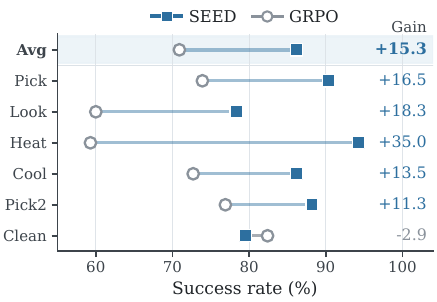}
    \vspace{-0.15in}
    \caption{
        \textbf{Cross-domain generalizability on ALFWorld Unseen.} \textsc{Seed} generally outperforms GRPO across unseen task types, demonstrating stronger cross-domain generalizability.
    }
    \label{fig:generalization_bar}
\end{minipage}
\end{figure}

\begin{table}[ht!]
\centering
\caption{
    \textbf{Ablation Results.}
    We report \textsc{Seed} performance and its ablated variants on ALFWorld.
}
\vspace{0.05in}
\label{tab:ablation_seed}
\small
\renewcommand{\arraystretch}{1.08}
\begin{tabular}{lccccccc}
\toprule
& \multicolumn{7}{c}{\textbf{ALFWorld}} \\
\cmidrule(lr){2-8}
\textbf{Method}
& \textbf{Pick}
& \textbf{Look}
& \textbf{Clean}
& \textbf{Heat}
& \textbf{Cool}
& \textbf{Pick2}
& \textbf{Avg.} \\
\midrule
\rowcolor{gray!10}
\textsc{Seed}
    & \textbf{100.0}
    & \textbf{100.0}
    & \textbf{100.0}
    & \textbf{100.0}
    & 70.6
    & 80.0
    & \textbf{91.8}
    \\
\;\;\;w/o Hindsight Skill SFT
    & 97.6
    & 81.8
    & 92.6
    & 90.0
    & \textbf{84.2}
    & 70.0
    & 86.0
    \\
\;\;\;w/o Self-Evolving OPD
    & 93.3
    & 70.0
    & 96.9
    & 84.6
    & 76.9
    & \textbf{100.0}
    & 87.0
    \\
\;\;\;w/o On-Policy Skill
    & 97.1
    & 62.5
    & \textbf{100.0}
    & 61.9
    & 75.0
    & 84.2
    & 84.4
    \\
\bottomrule
\end{tabular}
\end{table}

\subsection{Ablation Studies and Analysis}
Table~\ref{tab:ablation_seed} evaluates the contribution of the three core components of \textsc{Seed}.

\textbf{The impact of hindsight-skill SFT.}
Removing hindsight-skill SFT decreases the ALFWorld average from 91.8 to 86.0, corresponding to a 5.8-point drop. This result shows that equipping the actor and analyzer with an initial trajectory-analysis capability provides an important foundation for the subsequent self-evolving training process.

\textbf{The impact of self-evolving OPD.}
Removing self-evolving OPD reduces the average performance to 87.0, a decrease of 4.8 points. This indicates that the one-time skill supervision introduced during SFT is insufficient on its own, and that continuously distilling supervision from newly collected trajectories is necessary to keep pace with the evolving policy.

\textbf{The impact of on-policy skills.}
Replacing on-policy skills with skills from a static offline library leads to the largest performance degradation, lowering the average to 84.4 by 7.4 points. This finding highlights the importance of deriving guidance from the current policy's own trajectories, as such skills better reflect its evolving behaviors and failure modes.

\paragraph{Qualitative Analysis. }
Figure~\ref{fig:case_comparison} presents a representative example of the behavioral differences between GRPO and \textsc{Seed}. GRPO visits the target receptacle before locating the required object, interacts with irrelevant objects, and repeatedly explores unproductive locations until reaching the step limit. In contrast, \textsc{Seed} follows a more coherent and goal-directed trajectory: it searches plausible storage locations, identifies the candle on the second shelf, and completes the task with the correct placement action. The more direct trajectory exhibited by \textsc{Seed} is consistent with its stronger performance and higher interaction efficiency reported in the preceding experiments.

\begin{figure}[ht!]
    \centering
    \includegraphics[width=\linewidth]{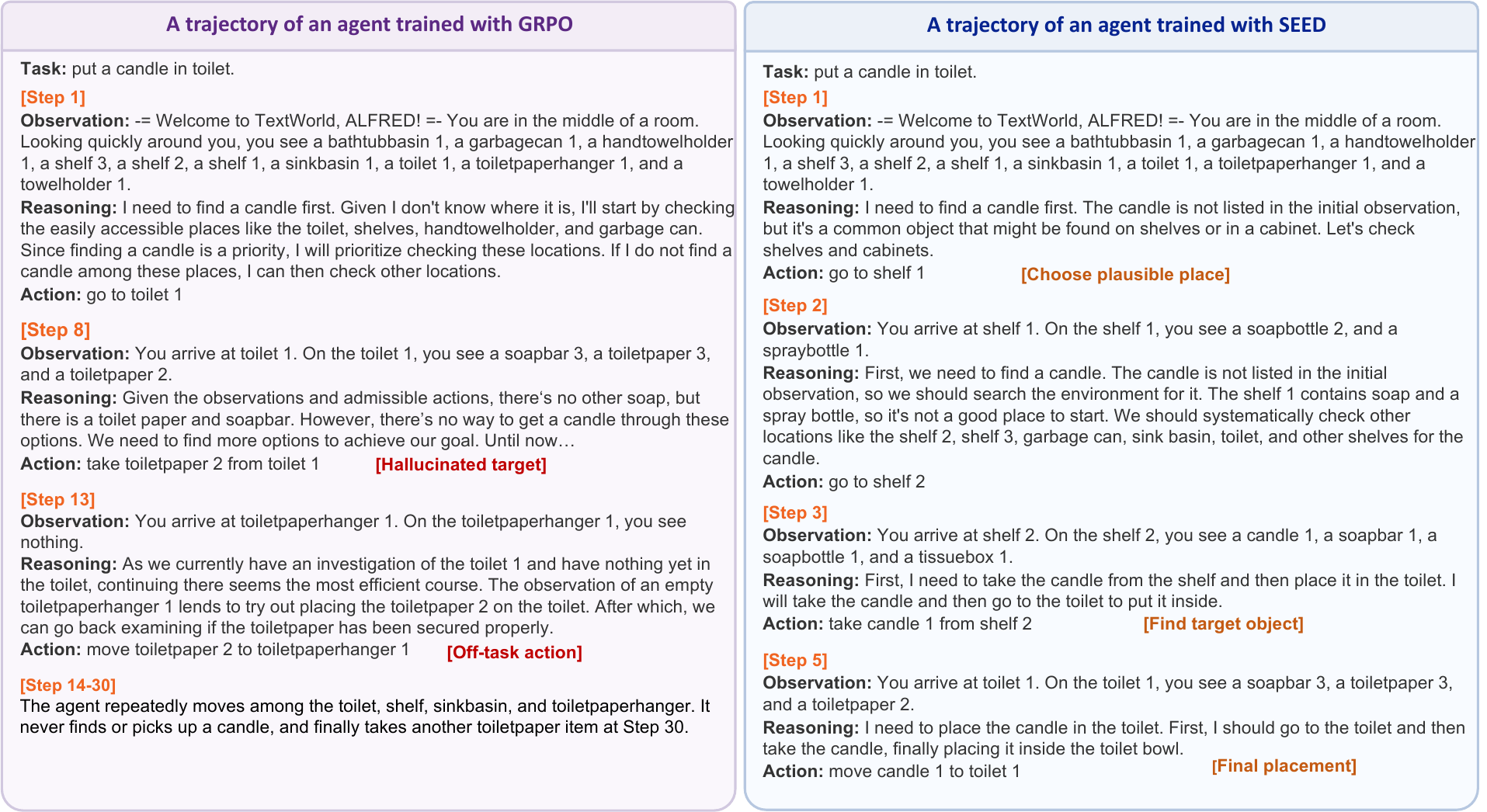}
    \caption{
        \textbf{Qualitative comparison on ALFWorld.}
        For the task “put a candle in toilet,” the GRPO-
        trained agent searches the target receptacle before locating the candle, takes an irrelevant toilet-paper
        item, and then enters an off-task loop. In contrast, SEED systematically checks plausible shelves,
        finds the candle, and completes the required placement in five steps.
    }
    \label{fig:case_comparison}
\end{figure}

\section{Conclusion}
We introduced \textsc{Seed}, a self-evolving on-policy distillation framework for long-horizon agentic reinforcement learning. \textsc{Seed} bridges sparse trajectory-level outcomes and token-level policy learning by extracting hindsight skills from completed on-policy trajectories and distilling their behavioral effects into dense supervision. The latest policy serves as both the actor and the analyzer, allowing its behavior and the supervision derived from its experience to evolve together while remaining aligned with the current trajectory distribution. Joint optimization with outcome-based RL enables the policy to internalize reusable guidance without relying on skills at inference time. Experiments across embodied interaction, web navigation, search-based QA, visual reasoning and planning demonstrate consistent performance improvements, sample efficiency, and robustness over powerful baselines.

\bibliography{iclr2025_conference}

@article{xi2025rise,
  title   = {The Rise and Potential of Large Language Model Based Agents: A Survey},
  author  = {Xi, Zhiheng and Chen, Wenxiang and Guo, Xin and He, Wei and Ding, Yiwen and Hong, Boyang and Zhang, Ming and Wang, Junzhe and Jin, Senjie and Zhou, Enyu and Zheng, Rui and Fan, Xiaoran and Wang, Xiao and Xiong, Limao and Zhou, Yuhao and Wang, Weiran and Jiang, Changhao and Zou, Yicheng and Liu, Xiangyang and Yin, Zhangyue and Dou, Shihan and Weng, Rongxiang and Cheng, Wensen and Zhang, Qi and Qin, Wenjuan and Zheng, Yongyan and Qiu, Xipeng and Huang, Xuanjing and Gui, Tao},
  journal = {Science China Information Sciences},
  year    = {2025},
  doi     = {10.1007/s11432-024-4222-0}
}

@inproceedings{schick2023toolformer,
  title     = {Toolformer: Language Models Can Teach Themselves to Use Tools},
  author    = {Schick, Timo and Dwivedi-Yu, Jane and Dess{\`i}, Roberto and Raileanu, Roberta and Lomeli, Maria and Zettlemoyer, Luke and Cancedda, Nicola and Scialom, Thomas},
  booktitle = {Advances in Neural Information Processing Systems},
  year      = {2023},
  url       = {https://arxiv.org/abs/2302.04761}
}

@inproceedings{patil2024gorilla,
  title     = {Gorilla: Large Language Model Connected with Massive {APIs}},
  author    = {Patil, Shishir G. and Zhang, Tianjun and Wang, Xin and Gonzalez, Joseph E.},
  booktitle = {Advances in Neural Information Processing Systems},
  year      = {2024},
  url       = {https://arxiv.org/abs/2305.15334}
}

@misc{liu2023agentbench,
  title         = {AgentBench: Evaluating {LLMs} as Agents},
  author        = {Liu, Xiao and Yu, Hao and Zhang, Hanchen and Xu, Yifan and Lei, Xuanyu and Lai, Hanyu and Gu, Yu and Ding, Hangliang and Men, Kaiwen and Yang, Kejuan and Zhang, Shudan and Deng, Xiang and Zeng, Aohan and Du, Zhengxiao and Zhang, Chenhui and Shen, Sheng and Zhang, Tianjun and Su, Yu and Sun, Huan and Huang, Minlie and Dong, Yuxiao and Tang, Jie},
  year          = {2023},
  eprint        = {2308.03688},
  archivePrefix = {arXiv},
  primaryClass  = {cs.CL},
  url           = {https://arxiv.org/abs/2308.03688}
}

@inproceedings{shridhar2021alfworld,
  title     = {{ALFWorld}: Aligning Text and Embodied Environments for Interactive Learning},
  author    = {Shridhar, Mohit and Yuan, Xingdi and C{\^o}t{\'e}, Marc-Alexandre and Bisk, Yonatan and Trischler, Adam and Hausknecht, Matthew},
  booktitle = {International Conference on Learning Representations},
  year      = {2021},
  url       = {https://arxiv.org/abs/2010.03768}
}

@inproceedings{yao2022webshop,
  title     = {{WebShop}: Towards Scalable Real-World Web Interaction with Grounded Language Agents},
  author    = {Yao, Shunyu and Chen, Howard and Yang, John and Narasimhan, Karthik},
  booktitle = {Advances in Neural Information Processing Systems},
  year      = {2022},
  url       = {https://arxiv.org/abs/2207.01206}
}

@misc{jin2025searchr1,
  title         = {{Search-R1}: Training {LLMs} to Reason and Leverage Search Engines with Reinforcement Learning},
  author        = {Jin, Bowen and Zeng, Hansi and Yue, Zhenrui and Wang, Dong and Zamani, Hamed and Han, Jiawei},
  year          = {2025},
  eprint        = {2503.09516},
  archivePrefix = {arXiv},
  primaryClass  = {cs.CL},
  url           = {https://arxiv.org/abs/2503.09516}
}

@article{kwiatkowski2019natural,
  title     = {Natural Questions: A Benchmark for Question Answering Research},
  author    = {Kwiatkowski, Tom and Palomaki, Jennimaria and Redfield, Olivia and Collins, Michael and Parikh, Ankur and Alberti, Chris and Epstein, Danielle and Polosukhin, Illia and Devlin, Jacob and Lee, Kenton and Toutanova, Kristina and Jones, Llion and Kelcey, Matthew and Chang, Ming-Wei and Dai, Andrew M. and Uszkoreit, Jakob and Le, Quoc and Petrov, Slav},
  journal   = {Transactions of the Association for Computational Linguistics},
  volume    = {7},
  pages     = {452--466},
  year      = {2019},
  doi       = {10.1162/tacl_a_00276},
  url       = {https://aclanthology.org/Q19-1026/}
}

@inproceedings{joshi2017triviaqa,
  title     = {{TriviaQA}: A Large Scale Distantly Supervised Challenge Dataset for Reading Comprehension},
  author    = {Joshi, Mandar and Choi, Eunsol and Weld, Daniel and Zettlemoyer, Luke},
  booktitle = {Proceedings of the 55th Annual Meeting of the Association for Computational Linguistics},
  pages     = {1601--1611},
  year      = {2017},
  doi       = {10.18653/v1/P17-1147},
  url       = {https://aclanthology.org/P17-1147/}
}

@inproceedings{mallen2023popqa,
  title     = {When Not to Trust Language Models: Investigating Effectiveness of Parametric and Non-Parametric Memories},
  author    = {Mallen, Alex and Asai, Akari and Zhong, Victor and Das, Rajarshi and Khashabi, Daniel and Hajishirzi, Hannaneh},
  booktitle = {Proceedings of the 61st Annual Meeting of the Association for Computational Linguistics},
  pages     = {9802--9822},
  year      = {2023},
  doi       = {10.18653/v1/2023.acl-long.546},
  url       = {https://aclanthology.org/2023.acl-long.546/}
}

@inproceedings{yang2018hotpotqa,
  title     = {{HotpotQA}: A Dataset for Diverse, Explainable Multi-hop Question Answering},
  author    = {Yang, Zhilin and Qi, Peng and Zhang, Saizheng and Bengio, Yoshua and Cohen, William and Salakhutdinov, Ruslan and Manning, Christopher D.},
  booktitle = {Proceedings of the 2018 Conference on Empirical Methods in Natural Language Processing},
  pages     = {2369--2380},
  year      = {2018},
  doi       = {10.18653/v1/D18-1259},
  url       = {https://aclanthology.org/D18-1259/}
}

@inproceedings{ho2020constructing,
  title     = {Constructing A Multi-hop {QA} Dataset for Comprehensive Evaluation of Reasoning Steps},
  author    = {Ho, Xanh and Duong Nguyen, Anh-Khoa and Sugawara, Saku and Aizawa, Akiko},
  booktitle = {Proceedings of the 28th International Conference on Computational Linguistics},
  pages     = {6609--6625},
  year      = {2020},
  doi       = {10.18653/v1/2020.coling-main.580},
  url       = {https://aclanthology.org/2020.coling-main.580/}
}

@article{trivedi2022musique,
  title     = {{MuSiQue}: Multihop Questions via Single-hop Question Composition},
  author    = {Trivedi, Harsh and Balasubramanian, Niranjan and Khot, Tushar and Sabharwal, Ashish},
  journal   = {Transactions of the Association for Computational Linguistics},
  volume    = {10},
  pages     = {539--554},
  year      = {2022},
  doi       = {10.1162/tacl_a_00475},
  url       = {https://aclanthology.org/2022.tacl-1.31/}
}

@inproceedings{press2023measuring,
  title     = {Measuring and Narrowing the Compositionality Gap in Language Models},
  author    = {Press, Ofir and Zhang, Muru and Min, Sewon and Schmidt, Ludwig and Smith, Noah and Lewis, Mike},
  booktitle = {Findings of the Association for Computational Linguistics: EMNLP 2023},
  pages     = {5687--5711},
  year      = {2023},
  doi       = {10.18653/v1/2023.findings-emnlp.378},
  url       = {https://aclanthology.org/2023.findings-emnlp.378/}
}

@misc{shao2024deepseekmath,
  title         = {{DeepSeekMath}: Pushing the Limits of Mathematical Reasoning in Open Language Models},
  author        = {Shao, Zhihong and Wang, Peiyi and Zhu, Qihao and Xu, Runxin and Song, Junxiao and Bi, Xiao and Zhang, Haowei and Zhang, Mingchuan and Li, Y. K. and Wu, Y. and Guo, Daya},
  year          = {2024},
  eprint        = {2402.03300},
  archivePrefix = {arXiv},
  primaryClass  = {cs.CL},
  url           = {https://arxiv.org/abs/2402.03300}
}

@misc{wang2025ragen,
  title         = {{RAGEN}: Understanding Self-Evolution in {LLM} Agents via Multi-Turn Reinforcement Learning},
  author        = {Wang, Zihan and Wang, Kangrui and Wang, Qineng and Zhang, Pingyue and Li, Linjie and Yang, Zhengyuan and Jin, Xing and Yu, Kefan and Nguyen, Minh Nhat and Liu, Licheng and Gottlieb, Eli and Lu, Yiping and Cho, Kyunghyun and Wu, Jiajun and Fei-Fei, Li and Wang, Lijuan and Choi, Yejin and Li, Manling},
  year          = {2025},
  eprint        = {2504.20073},
  archivePrefix = {arXiv},
  primaryClass  = {cs.LG},
  url           = {https://arxiv.org/abs/2504.20073}
}

@misc{luo2025agentlightning,
  title         = {Agent Lightning: Train {ANY} {AI} Agents with Reinforcement Learning},
  author        = {Luo, Xufang and Zhang, Yuge and He, Zhiyuan and Wang, Zilong and Zhao, Siyun and Li, Dongsheng and Qiu, Luna K. and Yang, Yuqing},
  year          = {2025},
  eprint        = {2508.03680},
  archivePrefix = {arXiv},
  primaryClass  = {cs.LG},
  url           = {https://arxiv.org/abs/2508.03680}
}

@inproceedings{arjona2019rudder,
  title     = {{RUDDER}: Return Decomposition for Delayed Rewards},
  author    = {Arjona-Medina, Jose A. and Gillhofer, Michael and Widrich, Michael and Unterthiner, Thomas and Brandstetter, Johannes and Hochreiter, Sepp},
  booktitle = {Advances in Neural Information Processing Systems},
  year      = {2019},
  url       = {https://arxiv.org/abs/1806.07857}
}

@article{
zhang2026the,
title={The Landscape of Agentic Reinforcement Learning for {LLM}s: A Survey},
author={Guibin Zhang and Hejia Geng and Xiaohang Yu and Zhenfei Yin and Zaibin Zhang and Zelin Tan and Heng Zhou and Zhong-Zhi Li and Xiangyuan Xue and Yijiang Li and Yifan Zhou and Yang Chen and Chen Zhang and Yutao Fan and Zihu Wang and Songtao Huang and Francisco Piedrahita Velez and Yue Liao and Hongru WANG and Mengyue Yang and Heng Ji and Jun Wang and Shuicheng YAN and Philip Torr and LEI BAI},
journal={Transactions on Machine Learning Research},
issn={2835-8856},
year={2026},
note={Survey Certification}
}

@misc{uesato2022solving,
  title         = {Solving Math Word Problems with Process- and Outcome-Based Feedback},
  author        = {Uesato, Jonathan and Kushman, Nate and Kumar, Ramana and Song, Francis and Siegel, Noah and Wang, Lisa and Creswell, Antonia and Irving, Geoffrey and Higgins, Irina},
  year          = {2022},
  eprint        = {2211.14275},
  archivePrefix = {arXiv},
  primaryClass  = {cs.LG},
  url           = {https://arxiv.org/abs/2211.14275}
}

@inproceedings{lightman2024lets,
  title     = {Let's Verify Step by Step},
  author    = {Lightman, Hunter and Kosaraju, Vineet and Burda, Yura and Edwards, Harri and Baker, Bowen and Lee, Teddy and Leike, Jan and Schulman, John and Sutskever, Ilya and Cobbe, Karl},
  booktitle = {International Conference on Learning Representations},
  year      = {2024},
  url       = {https://arxiv.org/abs/2305.20050}
}

@inproceedings{shinn2023reflexion,
  title     = {Reflexion: Language Agents with Verbal Reinforcement Learning},
  author    = {Shinn, Noah and Cassano, Federico and Berman, Edward and Gopinath, Ashwin and Narasimhan, Karthik and Yao, Shunyu},
  booktitle = {Advances in Neural Information Processing Systems},
  year      = {2023},
  url       = {https://arxiv.org/abs/2303.11366}
}

@inproceedings{zhao2024expel,
  title     = {{ExpeL}: {LLM} Agents Are Experiential Learners},
  author    = {Zhao, Andrew and Huang, Daniel and Xu, Quentin and Lin, Matthieu and Liu, Yong-Jin and Huang, Gao},
  booktitle = {Proceedings of the AAAI Conference on Artificial Intelligence},
  year      = {2024},
  url       = {https://arxiv.org/abs/2308.10144}
}

@misc{wang2023voyager,
  title         = {Voyager: An Open-Ended Embodied Agent with Large Language Models},
  author        = {Wang, Guanzhi and Xie, Yuqi and Jiang, Yunfan and Mandlekar, Ajay and Xiao, Chaowei and Zhu, Yuke and Fan, Linxi and Anandkumar, Anima},
  year          = {2023},
  eprint        = {2305.16291},
  archivePrefix = {arXiv},
  primaryClass  = {cs.AI},
  url           = {https://arxiv.org/abs/2305.16291}
}

@inproceedings{madaan2023selfrefine,
  title     = {Self-Refine: Iterative Refinement with Self-Feedback},
  author    = {Madaan, Aman and Tandon, Niket and Gupta, Prakhar and Hallinan, Skyler and Gao, Luyu and Wiegreffe, Sarah and Alon, Uri and Dziri, Nouha and Prabhumoye, Shrimai and Yang, Yiming and Gupta, Shashank and Majumder, Bodhisattwa Prasad and Hermann, Katherine and Welleck, Sean and Yazdanbakhsh, Amir and Clark, Peter},
  booktitle = {Advances in Neural Information Processing Systems},
  year      = {2023},
  url       = {https://arxiv.org/abs/2303.17651}
}

@misc{hinton2015distilling,
  title         = {Distilling the Knowledge in a Neural Network},
  author        = {Hinton, Geoffrey and Vinyals, Oriol and Dean, Jeff},
  year          = {2015},
  eprint        = {1503.02531},
  archivePrefix = {arXiv},
  primaryClass  = {stat.ML},
  url           = {https://arxiv.org/abs/1503.02531}
}

@inproceedings{kim2016sequence,
  title     = {Sequence-Level Knowledge Distillation},
  author    = {Kim, Yoon and Rush, Alexander M.},
  booktitle = {Proceedings of the 2016 Conference on Empirical Methods in Natural Language Processing},
  pages     = {1317--1327},
  year      = {2016},
  doi       = {10.18653/v1/D16-1139},
  url       = {https://aclanthology.org/D16-1139}
}

@inproceedings{ross2011reduction,
  title     = {A Reduction of Imitation Learning and Structured Prediction to No-Regret Online Learning},
  author    = {Ross, St{\'e}phane and Gordon, Geoffrey J. and Bagnell, J. Andrew},
  booktitle = {Proceedings of the Fourteenth International Conference on Artificial Intelligence and Statistics},
  pages     = {627--635},
  year      = {2011},
  url       = {https://proceedings.mlr.press/v15/ross11a.html}
}

@inproceedings{agarwal2024onpolicy,
  title     = {On-Policy Distillation of Language Models: Learning from Self-Generated Mistakes},
  author    = {Agarwal, Rishabh and Vieillard, Nino and Zhou, Yongchao and Stanczyk, Piotr and Ramos, Sabela and Geist, Matthieu and Bachem, Olivier},
  booktitle = {International Conference on Learning Representations},
  year      = {2024},
  url       = {https://arxiv.org/abs/2306.13649}
}

@misc{zhao2026selfdistilledreasoner,
  title         = {Self-Distilled Reasoner: On-Policy Self-Distillation for Large Language Models},
  author        = {Zhao, Siyan and Xie, Zhihui and Liu, Mengchen and Huang, Jing and Pang, Guan and Chen, Feiyu and Grover, Aditya},
  year          = {2026},
  eprint        = {2601.18734},
  archivePrefix = {arXiv},
  primaryClass  = {cs.LG},
  url           = {https://arxiv.org/abs/2601.18734}
}

@misc{hubotter2026sdpo,
  title         = {Reinforcement Learning via Self-Distillation},
  author        = {H{\"u}botter, Jonas and L{\"u}beck, Frederike and Behric, Lejs and Baumann, Anton and Bagatella, Marco and Marta, Daniel and Hakimi, Ido and Shenfeld, Idan and Kleine Buening, Thomas and Guestrin, Carlos and Krause, Andreas},
  year          = {2026},
  eprint        = {2601.20802},
  archivePrefix = {arXiv},
  primaryClass  = {cs.LG},
  url           = {https://arxiv.org/abs/2601.20802}
}

@misc{wang2026skillsd,
  title         = {{Skill-SD}: Skill-Conditioned Self-Distillation for Multi-turn {LLM} Agents},
  author        = {Wang, Hao and Wang, Guozhi and Xiao, Han and Zhou, Yufeng and Pan, Yue and Wang, Jichao and Xu, Ke and Wen, Yafei and Ruan, Xiaohu and Chen, Xiaoxin and Qi, Honggang},
  year          = {2026},
  eprint        = {2604.10674},
  archivePrefix = {arXiv},
  primaryClass  = {cs.LG},
  url           = {https://arxiv.org/abs/2604.10674}
}

@misc{yang2026rlsd,
  title         = {Self-Distilled {RLVR}},
  author        = {Yang, Chenxu and Qin, Chuanyu and Si, Qingyi and Chen, Minghui and Gu, Naibin and Yao, Dingyu and Lin, Zheng and Wang, Weiping and Wang, Jiaqi and Duan, Nan},
  year          = {2026},
  eprint        = {2604.03128},
  archivePrefix = {arXiv},
  url           = {https://arxiv.org/abs/2604.03128}
}

@misc{lu2026sdar,
  title         = {Self-Distilled Agentic Reinforcement Learning},
  author        = {Lu, Zhengxi and Yao, Zhiyuan and Han, Zhuowen and Wang, Zi-Han and Wu, Jinyang and Gu, Qi and Cai, Xunliang and Lu, Weiming and Xiao, Jun and Zhuang, Yueting and Shen, Yongliang},
  year          = {2026},
  eprint        = {2605.15155},
  archivePrefix = {arXiv},
  primaryClass  = {cs.LG},
  url           = {https://arxiv.org/abs/2605.15155}
}

@misc{zhong2026sod,
  title         = {{SOD}: Step-wise On-policy Distillation for Small Language Model Agents},
  author        = {Zhong, Qiyong and Zheng, Mao and Song, Mingyang and Lin, Xin and Sun, Jie and Jiang, Houcheng and Wang, Xiang and Fang, Junfeng},
  year          = {2026},
  eprint        = {2605.07725},
  archivePrefix = {arXiv},
  primaryClass  = {cs.CL},
  url           = {https://arxiv.org/abs/2605.07725}
}

@misc{ko2026reopold,
  title         = {Scaling Reasoning Efficiently via Relaxed On-Policy Distillation},
  author        = {Ko, Jongwoo and Abdali, Sara and Kim, Young Jin and Chen, Tianyi and Cameron, Pashmina},
  year          = {2026},
  eprint        = {2603.11137},
  archivePrefix = {arXiv},
  primaryClass  = {cs.LG},
  url           = {https://arxiv.org/abs/2603.11137}
}

@article{yang2026opid,
  title={OPID: On-Policy Skill Distillation for Agentic Reinforcement Learning},
  author={Yang, Shuo and Wu, Jinyang and Lu, Zhengxi and Shen, Yuhao and Zhang, Fan and Feng, Lang and Zhang, Shuai and Luo, Haoran and Lian, Zheng and Wen, Zhengqi and others},
  journal={arXiv preprint arXiv:2606.26790},
  year={2026}
}

@article{wu2024beyond,
  title={Beyond examples: High-level automated reasoning paradigm in in-context learning via mcts},
  author={Wu, Jinyang and Feng, Mingkuan and Zhang, Shuai and Che, Feihu and Wen, Zengqi and Liao, Chonghua and Tao, Jianhua},
  journal={arXiv preprint arXiv:2411.18478},
  year={2024}
}

@article{andrychowicz2017hindsight,
  title={Hindsight experience replay},
  author={Andrychowicz, Marcin and Wolski, Filip and Ray, Alex and Schneider, Jonas and Fong, Rachel and Welinder, Peter and McGrew, Bob and Tobin, Josh and Pieter Abbeel, OpenAI and Zaremba, Wojciech},
  journal={Advances in neural information processing systems},
  volume={30},
  year={2017}
}

@inproceedings{shen2026double,
    title = "Double: Breaking the Acceleration Limit via Double Retrieval Speculative Parallelism",
    author = "Shen, Yuhao  and
      Liu, Tianyu  and
      Shen, Junyi  and
      Wu, Jinyang  and
      Kong, Quan  and
      Li, Huan  and
      Wang, Cong",
    booktitle = "Proceedings of the 64th Annual Meeting of the {A}ssociation for {C}omputational {L}inguistics (Volume 1: Long Papers)",
    month = jul,
    year = "2026",
    address = "San Diego, California, United States",
    publisher = "Association for Computational Linguistics",
    pages = "19242--19263",
}

@article{luo2025large,
  title={Large language model agent: A survey on methodology, applications and challenges},
  author={Luo, Junyu and Zhang, Weizhi and Yuan, Ye and Zhao, Yusheng and Yang, Junwei and Gu, Yiyang and Wu, Bohan and Chen, Binqi and Qiao, Ziyue and Long, Qingqing and others},
  journal={arXiv preprint arXiv:2503.21460},
  year={2025}
}

@article{fang2026roboteq,
  title={RobotEQ: Transitioning from Passive Intelligence to Active Intelligence in Embodied AI},
  author={Fang, Kuofei and Che, Xinyi and Ouyang, Haomin and Zhang, Shufan and Wang, Xuehao and Liu, Qi and Liu, Liyi and Zhang, Chenqi and Cai, Wenxi and Dai, Wenyu and others},
  journal={arXiv preprint arXiv:2605.06234},
  year={2026}
}

@inproceedings{wu2026spark,
    title = "{SPARK}: Strategic Policy-Aware Exploration via Dynamic Branching for Long-Horizon Agentic Learning",
    author = "Wu, Jinyang  and
      Yang, Shuo  and
      Shen, Yuhao  and
      Zhang, Shuai  and
      Wen, Zhengqi  and
      Tao, Jianhua",
    booktitle = "Proceedings of the 64th Annual Meeting of the {A}ssociation for {C}omputational {L}inguistics (Volume 1: Long Papers)",
    month = jul,
    year = "2026",
    address = "San Diego, California, United States",
    publisher = "Association for Computational Linguistics",
    pages = "23981--24004",
}

@article{xu2026odysseyarena,
  title={OdysseyArena: Benchmarking Large Language Models For Long-Horizon, Active and Inductive Interactions},
  author={Xu, Fangzhi and Yan, Hang and Sun, Qiushi and Wu, Jinyang and Huang, Zixian and Huang, Muye and Gong, Jingyang and Ding, Zichen and Cheng, Kanzhi and Wang, Yian and others},
  journal={arXiv preprint arXiv:2602.05843},
  year={2026}
}

@article{cheng2026dspark,
  title={DSpark: Confidence-Scheduled Speculative Decoding with Semi-Autoregressive Generation},
  author={Cheng, Xin and Yu, Xingkai and Shao, Chenze and Li, Jiashi and Xiong, Yunfan and Qian, Yi and Zhu, Jiaqi and Ma, Shirong and Zhang, Xiaokang and Ye, Jiasheng and others},
  journal={arXiv preprint arXiv:2607.05147},
  year={2026}
}

@inproceedings{wu2026atlas,
    title = "{ATLAS}: Orchestrating Heterogeneous Models and Tools for Multi-Domain Complex Reasoning",
    author = "Wu, Jinyang  and
      Zhai, Guocheng  and
      Jin, Ruihan  and
      Yuan, Jiahao  and
      Shen, Yuhao  and
      Zhang, Shuai  and
      Wen, Zhengqi  and
      Tao, Jianhua",
    booktitle = "Findings of the {A}ssociation for {C}omputational {L}inguistics: {ACL} 2026",
    month = jul,
    year = "2026",
    address = "San Diego, California, United States",
    publisher = "Association for Computational Linguistics",
    pages = "17503--17535",
}

@misc{li2026longhorizonterminal,
      title={Long-Horizon-Terminal-Bench: Testing the Limits of Agents on Long-Horizon Terminal Tasks with Dense Reward-Based Grading}, 
      author={Zongxia Li and Zhongzhi Li and Yucheng Shi and Ruhan Wang and Junyao Yang and Zhichao Liu and Xiyang Wu and Anhao Li and Yue Yu and Ninghao Liu and Lichao Sun and Haotao Mi and LeoweiLiang},
      year={2026},
      eprint={2607.08964},
      archivePrefix={arXiv},
      primaryClass={cs.AI},
      url={https://arxiv.org/abs/2607.08964}, 
}

@article{zhang2026deepplanning,
  title={DeepPlanning: Benchmarking Long-Horizon Agentic Planning with Verifiable Constraints},
  author={Zhang, Yinger and Jiang, Shutong and Li, Renhao and Tu, Jianhong and Su, Yang and Deng, Lianghao and Guo, Xudong and Lv, Chenxu and Lin, Junyang},
  journal={arXiv preprint arXiv:2601.18137},
  year={2026}
}

@article{lu2026skill0,
  title={Skill0: In-context agentic reinforcement learning for skill internalization},
  author={Lu, Zhengxi and Yao, Zhiyuan and Wu, Jinyang and Han, Chengcheng and Gu, Qi and Cai, Xunliang and Lu, Weiming and Xiao, Jun and Zhuang, Yueting and Shen, Yongliang},
  journal={arXiv preprint arXiv:2604.02268},
  year={2026}
}

@article{yang2024qwen25,
  title   = {Qwen2.5 Technical Report},
  author  = {Yang, An and Yang, Baosong and Zhang, Beichen and Hui, Binyuan
             and Zheng, Bo and Yu, Bowen and Li, Chengyuan and Liu, Dayiheng
             and Huang, Fei and Wei, Haoran and Lin, Huan and Yang, Jian
             and Tu, Jianhong and Zhang, Jianwei and Yang, Jianxin
             and Yang, Jiaxi and Zhou, Jingren and Lin, Junyang and Dang, Kai
             and Lu, Keming and Bao, Keqin and Yang, Kexin and Yu, Le
             and Li, Mei and Xue, Mingfeng and Zhang, Pei and Zhu, Qin
             and Men, Rui and Lin, Runji and Li, Tianhao and Tang, Tianyi
             and Xia, Tingyu and Ren, Xingzhang and Ren, Xuancheng
             and Fan, Yang and Su, Yang and Zhang, Yichang and Wan, Yu
             and Liu, Yuqiong and Cui, Zeyu and Zhang, Zhenru and Qiu, Zihan},
  journal = {arXiv preprint arXiv:2412.15115},
  year    = {2024}
}

@article{yang2025qwen3,
  title   = {Qwen3 Technical Report},
  author  = {Yang, An and Li, Anfeng and Yang, Baosong and Zhang, Beichen
             and Hui, Binyuan and Zheng, Bo and Yu, Bowen and Gao, Chang
             and Huang, Chengen and Lv, Chenxu and Zheng, Chujie
             and Liu, Dayiheng and Zhou, Fan and Huang, Fei and Hu, Feng
             and Ge, Hao and Wei, Haoran and Lin, Huan and Tang, Jialong
             and Yang, Jian and Tu, Jianhong and Zhang, Jianwei
             and Yang, Jianxin and Yang, Jiaxi and Zhou, Jing
             and Zhou, Jingren and Lin, Junyang and Dang, Kai and Bao, Keqin
             and Yang, Kexin and Yu, Le and Deng, Lianghao and Li, Mei
             and Xue, Mingfeng and Li, Mingze and Zhang, Pei and Wang, Peng
             and Zhu, Qin and Men, Rui and Gao, Ruize and Liu, Shixuan
             and Luo, Shuang and Li, Tianhao and Tang, Tianyi and Yin, Wenbiao
             and Ren, Xingzhang and Wang, Xinyu and Zhang, Xinyu
             and Ren, Xuancheng and Fan, Yang and Su, Yang and Zhang, Yichang
             and Zhang, Yinger and Wan, Yu and Liu, Yuqiong and Wang, Zekun
             and Cui, Zeyu and Zhang, Zhenru and Zhou, Zhipeng and Qiu, Zihan},
  journal = {arXiv preprint arXiv:2505.09388},
  year    = {2025}
}

@article{bai2025qwen2,
  title={Qwen2. 5-vl technical report},
  author={Bai, Shuai and Chen, Keqin and Liu, Xuejing and Wang, Jialin and Ge, Wenbin and Song, Sibo and Dang, Kai and Wang, Peng and Wang, Shijie and Tang, Jun and others},
  journal={arXiv preprint arXiv:2502.13923},
  year={2025}
}

@inproceedings{NEURIPS2024_c848b7d3,
 author = {Zhai, Yuexiang and Bai, Hao and Lin, Zipeng and Pan, Jiayi and Tong, Shengbang and Zhou, Yifei and Suhr, Alane and Xie, Saining and LeCun, Yann and Ma, Yi and Levine, Sergey},
 booktitle = {Advances in Neural Information Processing Systems},
 doi = {10.52202/079017-3522},
 editor = {A. Globerson and L. Mackey and D. Belgrave and A. Fan and U. Paquet and J. Tomczak and C. Zhang},
 pages = {110935--110971},
 publisher = {Curran Associates, Inc.},
 title = {Fine-Tuning Large Vision-Language Models as Decision-Making Agents via Reinforcement Learning},
 url = {https://proceedings.neurips.cc/paper_files/paper/2024/file/c848b7d3adc08fcd0bf1df3101ba6728-Paper-Conference.pdf},
 volume = {37},
 year = {2024}
}

@misc{SchraderSokoban2018,
  author = {Schrader, Max-Philipp B.},
  title = {gym-sokoban},
  year = {2018},
  publisher = {GitHub},
  journal = {GitHub repository},
  howpublished = {\url{https://github.com/mpSchrader/gym-sokoban}},
  commit = {#CommitId}
}

@misc{zai2026glm52,
  title        = {{GLM-5.2: Built for Long-Horizon Tasks}},
  author       = {{Z.ai}},
  year         = {2026},
  month        = jun,
  day          = {16},
  howpublished = {\url{https://z.ai/blog/glm-5.2}},
  note         = {Accessed: 2026-06-22}
}

@inproceedings{jiang2025selfincorrect,
  title={SELF-[IN]CORRECT: LLMs Struggle with Discriminating Self-Generated Responses},
  author={Jiang, Dongwei and Zhang, Jingyu and Weller, Orion and Weir, Nathaniel and Van Durme, Benjamin and Khashabi, Daniel},
  booktitle={Proceedings of the AAAI Conference on Artificial Intelligence},
  volume={39},
  pages={24266--24275},
  year={2025},
  doi={10.1609/aaai.v39i23.34603}
}

@inproceedings{qi2026generalizationgap,
  title={On the Generalization Gap in Self-Evolving Language Model Reasoning},
  author={Qi, Zhenting and Baby, Susanna Maria and Baby, Stefanie Anna and Yuan, Kan and Tomkins, Andrew and Vu, Tu and Juan, Da-Cheng and Rashtchian, Cyrus},
  booktitle={Forty-third International Conference on Machine Learning},
  year={2026},
  url={https://openreview.net/forum?id=mnUidYi5qO}
}

@inproceedings{mahbub2026selfpreference,
  title={Mitigating Self-Preference by Authorship Obfuscation},
  author={Mahbub, Taslim and Feng, Shi},
  booktitle={Proceedings of the AAAI Conference on Artificial Intelligence},
  volume={40},
  pages={37701--37708},
  year={2026},
  doi={10.1609/aaai.v40i44.41105}
}

@inproceedings{kumar2025score,
  title={Training Language Models to Self-Correct via Reinforcement Learning},
  author={Kumar, Aviral and Zhuang, Vincent and Agarwal, Rishabh and Su, Yi and Co-Reyes, JD and Singh, Avi and Baumli, Kate and Iqbal, Shariq and Bishop, Colton and Roelofs, Rebecca and Zhang, Lei and McKinney, Kay and Shrivastava, Disha and Paduraru, Cosmin and Tucker, George and Precup, Doina and Behbahani, Feryal and Faust, Aleksandra},
  booktitle={International Conference on Learning Representations},
  year={2025},
  url={https://proceedings.iclr.cc/paper_files/paper/2025/hash/871ac99fdc5282d0301934d23945ebaa-Abstract-Conference.html}
}

@inproceedings{jiang2025importance,
  title={Importance Weighting Can Help Large Language Models Self-Improve},
  author={Jiang, Chunyang and Chan, Chi-Min and Xue, Wei and Liu, Qifeng and Guo, Yike},
  booktitle={Proceedings of the AAAI Conference on Artificial Intelligence},
  volume={39},
  pages={24257--24265},
  year={2025},
  doi={10.1609/aaai.v39i23.34602}
}

@inproceedings{zhou2026confabulations,
  title={Can LLMs Detect Their Confabulations? Estimating Reliability in Uncertainty-Aware Language Models},
  author={Zhou, Tianyi and Medina, Johanne and Chawla, Sanjay},
  booktitle={Proceedings of the AAAI Conference on Artificial Intelligence},
  volume={40},
  pages={38164--38172},
  year={2026},
  doi={10.1609/aaai.v40i44.41155}
}

@inproceedings{ge2025samule,
  title={{SAMULE}: Self-Learning Agents Enhanced by Multi-level Reflection},
  author={Ge, Yubin and Romeo, Salvatore and Cai, Jason and Sunkara, Monica and Zhang, Yi},
  booktitle={Proceedings of the 2025 Conference on Empirical Methods in Natural Language Processing},
  pages={16591--16610},
  year={2025},
  address={Suzhou, China},
  publisher={Association for Computational Linguistics},
  doi={10.18653/v1/2025.emnlp-main.839},
  url={https://aclanthology.org/2025.emnlp-main.839/}
}
\bibliographystyle{iclr2025_conference}

\appendix
\section{Theoretical Analysis}
\label{app:theory}
This section formalizes three properties of \textsc{Seed} that mirror the requirements identified in Section~\ref{sec1_intro}: the hindsight supervision is \emph{on-policy}, \emph{dense}, and \emph{self-evolving}. Specifically, we show that (i) synchronized on-policy hindsight induces an occupancy-matched adaptive target, (ii) the OPD term provides decision-specific credit even when outcome-based group advantages are uninformative, and (iii) refreshing the shared analyzer controls the staleness of the auxiliary gradient as the policy changes. These are local statements about the structure and freshness of the \textsc{Seed} update; they do not by themselves imply monotonic return improvement, which additionally requires the generated hindsight skills to be behaviorally informative.

\paragraph{Notation.}
Consider outer iteration $k$. At the beginning of the iteration, the current checkpoint is frozen as the behavior policy
$\pi_k=\pi_{\theta_k}$
and also instantiates the trajectory analyzer $\mathcal A_k$. Let $C$ denote an ordinary autoregressive context at a valid action-token position, and let $d_k(c)$ be the normalized token-context occupancy induced by $\pi_k$ and the environment. Conditional on $C=c$, the realized rollout token $Y$ is sampled as
\[
Y\sim\pi_k(\cdot\mid c).
\]
After the complete trajectory $\tau$ is observed, the synchronized analyzer produces the hindsight skill
$S=\mathcal A_k(x_\tau)$.
We overload $H(c,S)$ to denote the skill-augmented version of $c$, with the same action-token prefix as in the ordinary context. At update initialization, define the detached skill gate
\begin{equation}
 g_k(c,v,S)
 =
 \sigma\!\left(
 \beta_{\mathrm{opd}}
 \left[
 \log\pi_k(v\mid H(c,S))
 -
 \log\pi_k(v\mid c)
 \right]
 \right).
 \label{eq:theory_realized_gate}
\end{equation}
Importantly, $S$ is generated from the \emph{complete} trajectory and can therefore depend on the realized token $Y$, the subsequent rollout, environment feedback, and analyzer decoding. To preserve this dependence, we define the conditional expected gate
\begin{equation}
 w_k(c,v)
 =
 \mathbb E\!\left[
 g_k(C,Y,S)
 \mid C=c,\,Y=v
 \right],
 \label{eq:conditional_expected_gate}
\end{equation}
where the expectation is over the future trajectory, environment randomness, and any analyzer randomness. Standard softmax policies have full support, and the sigmoid lies strictly between zero and one, so $0<w_k(c,v)<1$. We state the results at $\theta=\theta_k$, where the behavior, analyzer, teacher branch, and student branch are synchronized. The adaptive-target identity below extends to an inner optimization step by replacing Eq.~\ref{eq:theory_realized_gate} with its current detached numerical value while retaining the same behavior occupancy $d_k$.

\subsection{On-Policy Hindsight Produces an Occupancy-Matched Adaptive Target}
\label{app:theory_onpolicy}
The first result characterizes the expected OPD direction and makes explicit why the source trajectories should be generated by the current policy. Rather than imitating every sampled token equally, \textsc{Seed} reweights the current policy according to how strongly its own hindsight skill supports each action.

\begin{proposition}[Occupancy-matched hindsight target]
\label{prop:onpolicy_target}
For every context $c$, define
\begin{equation}
 Z_k(c)
 =
 \sum_{v\in\mathcal V}
 \pi_k(v\mid c)w_k(c,v),
 \qquad
 r_k(v\mid c)
 =
 \frac{\pi_k(v\mid c)w_k(c,v)}{Z_k(c)}.
 \label{eq:onpolicy_target}
\end{equation}
Then $0<Z_k(c)<1$, $r_k(\cdot\mid c)$ is a probability distribution, and the expected OPD gradient at update initialization satisfies
\begin{equation}
 \left.
 \nabla_\theta\mathcal L_{\mathrm{opd},k}(\theta)
 \right|_{\theta=\theta_k}
 =
 \mathbb E_{c\sim d_k}
 \left[
 Z_k(c)
 \left.
 \nabla_\theta
 D_{\mathrm{KL}}
 \left(
 r_k(\cdot\mid c)
 \,\middle\|\,
 \pi_\theta(\cdot\mid c)
 \right)
 \right|_{\theta=\theta_k}
 \right].
 \label{eq:onpolicy_target_gradient}
\end{equation}
Thus, the OPD update is a Monte Carlo estimate of distillation toward a skill-reweighted target on the current policy's own token-context occupancy.

Moreover, let $\mu_k$ be the joint distribution of a completed trajectory and one of its valid token samples, and let $\phi_{k,\theta}(x)$ be the detached per-token OPD gradient obtained by analyzing sample $x$ with the current analyzer $\mathcal A_k$. If
$\|\phi_{k,\theta}(x)\|_2\le G$
for all relevant $x$, then replacing current on-policy samples by samples from an earlier distribution $\mu_j$ incurs
\begin{equation}
 \left\|
 \mathbb E_{x\sim\mu_k}[\phi_{k,\theta}(x)]
 -
 \mathbb E_{x\sim\mu_j}[\phi_{k,\theta}(x)]
 \right\|_2
 \le
 2G\,\operatorname{TV}(\mu_k,\mu_j).
 \label{eq:occupancy_mismatch_tv}
\end{equation}
If $\mu_k$ is absolutely continuous with respect to $\mu_j$, Pinsker's inequality further gives
\begin{equation}
 \left\|
 \mathbb E_{\mu_k}[\phi_{k,\theta}]
 -
 \mathbb E_{\mu_j}[\phi_{k,\theta}]
 \right\|_2
 \le
 G\sqrt{2D_{\mathrm{KL}}(\mu_k\|\mu_j)}.
 \label{eq:occupancy_mismatch_kl}
\end{equation}
\end{proposition}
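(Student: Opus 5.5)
The plan is to handle the three claims separately: first the normalization facts, then the gradient identity in \eqref{eq:onpolicy_target_gradient}, then the two mismatch bounds.

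\emph{Normalization.} Since $\pi_k(\cdot\mid c)$ is a probability vector and $0<w_k(c,v)<1$ pointwise, $Z_k(c)$ is a convex combination of numbers in $(0,1)$. Hence $0<Z_k(c)<1$. It follows that $r_k(\cdot\mid c)$ is nonnegative and sums to one by construction.

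\emph{Gradient identity.} I would start from the gradient formula already derived for \eqref{eq:opd_objective}. At $\theta=\theta_k$ it reads
\[
\nabla_\theta\mathcal L_{\mathrm{opd},k}=-\mathbb E\big[g_k(C,Y,S)\,\nabla_\theta\log\pi_\theta(Y\mid C)\big],
\]
where the expectation is the masked token mean, i.e.\ expectation over $C\sim d_k$, $Y\sim\pi_k(\cdot\mid C)$, and the remainder of the trajectory and analyzer randomness. The key step is the tower property: condition on $(C,Y)=(c,v)$. The score $\nabla_\theta\log\pi_\theta(v\mid c)$ is measurable with respect to $(c,v)$, so only the gate is averaged. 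This is exactly where the definition of $w_k$ as a \emph{conditional} expectation handles the dependence of $S$ on $Y$ and on the future trajectory. The result is
\[
-\mathbb E_{c\sim d_k}\sum_v \pi_k(v\mid c)\,w_k(c,v)\,\nabla_\theta\log\pi_\theta(v\mid c)=-\mathbb E_{c}\,Z_k(c)\sum_v r_k(v\mid c)\nabla_\theta\log\pi_\theta(v\mid c).
\]
Finally, $D_{\mathrm{KL}}(r_k\|\pi_\theta)=\sum_v r_k\log r_k-\sum_v r_k\log\pi_\theta$, and $r_k$ is held fixed because it is built from detached quantities. Its $\theta$-gradient is therefore $-\sum_v r_k\nabla_\theta\log\pi_\theta$, which closes the identity. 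I need to check that the masked-mean normalization matches the normalization of $d_k$, which it does because $d_k$ is defined as the normalized valid-token occupancy.

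\emph{Mismatch bounds.} For \eqref{eq:occupancy_mismatch_tv} I would write the difference as $\int\phi\,d(\mu_k-\mu_j)$ and apply the triangle inequality for Bochner integrals. This gives at most $G\|\mu_k-\mu_j\|_1=2G\,\mathrm{TV}(\mu_k,\mu_j)$. An alternative is to take the inner product with a unit vector $u$ and use $|\mathbb E_{\mu_k}f-\mathbb E_{\mu_j}f|\le 2\sup|f|\,\mathrm{TV}$ with $f=\langle u,\phi\rangle$. Then \eqref{eq:occupancy_mismatch_kl} follows by substituting Pinsker's inequality $\mathrm{TV}\le\sqrt{D_{\mathrm{KL}}/2}$, giving $2G\sqrt{D/2}=G\sqrt{2D}$.

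The main obstacle is the conditioning step in the gradient identity. One has to argue carefully that the hindsight skill's dependence on the realized token and on future randomness is fully absorbed into $w_k$ without biasing the score term. This needs the score to depend only on $(c,v)$ and the interchange of gradient and expectation to be valid. The latter holds because the vocabulary is finite and the gate is bounded.
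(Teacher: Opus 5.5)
Your proposal is correct and follows the paper's proof essentially step for step. The shared steps are: the convex-combination argument for $0<Z_k(c)<1$; conditioning on $(C,Y)=(c,v)$ so that the hindsight skill's dependence on the realized token and on the future is absorbed into $w_k$; rewriting via $Z_k r_k$, where the detached target's entropy contributes no gradient; and the TV bound followed by Pinsker. Your primary Bochner-integral route to the TV bound is just a repackaging of the paper's unit-vector duality argument, which you also give as your alternative.
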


\begin{proof}
Because $0<w_k(c,v)<1$ and $\pi_k(\cdot\mid c)$ is normalized, Eq.~\ref{eq:onpolicy_target} gives $0<Z_k(c)<1$ and $\sum_v r_k(v\mid c)=1$. In Eq.~\ref{eq:opd_objective}, both the teacher log-probability and the gate are detached, so only the ordinary student log-probability contributes to the gradient. Conditioning first on $C=c$ and $Y=v$, and then using Eq.~\ref{eq:conditional_expected_gate}, yields
\begin{align*}
 &\left.
 \nabla_\theta\mathcal L_{\mathrm{opd},k}(\theta)
 \right|_{\theta=\theta_k}
 \\
 &\quad=
 -\mathbb E_{c\sim d_k}
 \left[
 \sum_{v\in\mathcal V}
 \pi_k(v\mid c)w_k(c,v)
 \left.
 \nabla_\theta\log\pi_\theta(v\mid c)
 \right|_{\theta=\theta_k}
 \right]
 \\
 &\quad=
 -\mathbb E_{c\sim d_k}
 \left[
 Z_k(c)
 \sum_{v\in\mathcal V}
 r_k(v\mid c)
 \left.
 \nabla_\theta\log\pi_\theta(v\mid c)
 \right|_{\theta=\theta_k}
 \right],
\end{align*}
which is Eq.~\ref{eq:onpolicy_target_gradient}, since the entropy of the detached target $r_k$ has zero gradient.

For Eq.~\ref{eq:occupancy_mismatch_tv}, use the dual representation of the Euclidean norm and the variational characterization of total variation. For any unit vector $u$,
\[
 \left|
 \mathbb E_{\mu_k}[u^\top\phi_{k,\theta}]
 -
 \mathbb E_{\mu_j}[u^\top\phi_{k,\theta}]
 \right|
 \le
 2G\,\operatorname{TV}(\mu_k,\mu_j),
\]
because $|u^\top\phi_{k,\theta}(x)|\le G$. Taking the supremum over $u$ proves Eq.~\ref{eq:occupancy_mismatch_tv}. Equation~\ref{eq:occupancy_mismatch_kl} follows from
$\operatorname{TV}(\mu_k,\mu_j)\le\sqrt{D_{\mathrm{KL}}(\mu_k\|\mu_j)/2}$.
\end{proof}

Equation~\ref{eq:onpolicy_target} separates the two roles of the first \textsc{Seed} requirement. The factor $d_k(c)$ makes supervision \emph{occupancy matched}: it focuses learning on contexts, actions, and failure modes actually visited by the current policy. Within each such context, $w_k(c,v)$ makes supervision \emph{skill selective}: actions that are more strongly supported by the trajectory-specific hindsight skill receive greater relative mass. In particular,
\[
 \frac{r_k(u\mid c)}{\pi_k(u\mid c)}
 >
 \frac{r_k(v\mid c)}{\pi_k(v\mid c)}
 \quad\Longleftrightarrow\quad
 w_k(c,u)>w_k(c,v).
\]
The mismatch bounds show the complementary point: even if the same current analyzer is applied to old trajectories, a stale behavior distribution can bias the auxiliary direction in proportion to its divergence from the current trajectory-token distribution. On-policy collection sets this data-distribution mismatch to zero at the rollout stage.

The target also admits a useful value interpretation. Let
$Q_k(c,v)$
be the expected task return after choosing token $v$ at context $c$ and following the current policy thereafter. Then
\begin{equation}
 \mathbb E_{v\sim r_k(\cdot\mid c)}[Q_k(c,v)]
 -
 \mathbb E_{v\sim\pi_k(\cdot\mid c)}[Q_k(c,v)]
 =
 \frac{\operatorname{Cov}_{v\sim\pi_k(\cdot\mid c)}\!\left(Q_k(c,v),w_k(c,v)\right)}{Z_k(c)}.
 \label{eq:value_alignment_identity}
\end{equation}
Therefore, whenever hindsight support is positively correlated with current-policy action value, the reweighted target has higher local expected value than the unweighted policy. This condition makes explicit what the theorem does and does not assume: \textsc{Seed} converts hindsight support into an on-policy target, while the empirical benefit depends on the generated skills assigning greater support to better decisions.

\subsection{Dense Skill Credit Remains Informative under Sparse or Tied Rewards}
\label{app:theory_dense}
Outcome-based RL assigns the same trajectory-level advantage to every valid action token in a rollout. Consequently, it cannot distinguish a locally useful decision from a harmful one within the same trajectory, and its reward-driven signal disappears completely when all outcomes in a group are tied. The next result shows that the skill-conditioned OPD term can remain non-degenerate in exactly this regime.

\begin{proposition}[Decision-specific signal under reward ties]
\label{prop:dense_credit}
Consider a rollout group in which every trajectory has the same outcome. Then
$A^{\mathrm{rl}}_{q,n,t,\ell}=0$
for every valid token, and the clipped reward-driven term in Eq.~\ref{eq:rl_objective} has zero gradient. Fix a current-policy context $c$, parameterize the ordinary student distribution as
$p_z(\cdot\mid c)=\operatorname{softmax}(z(c))$,
and define the conditional expected OPD loss up to detached constants by
\begin{equation}
 \overline{\mathcal L}_{\mathrm{opd},k,c}(z)
 =
 -\sum_{v\in\mathcal V}
 \pi_k(v\mid c)w_k(c,v)
 \log p_z(v\mid c).
 \label{eq:conditional_opd_loss}
\end{equation}
At update initialization, where $p_z(\cdot\mid c)=\pi_k(\cdot\mid c)$,
\begin{equation}
 \frac{\partial
 \overline{\mathcal L}_{\mathrm{opd},k,c}}
 {\partial z(c,v)}
 =
 \pi_k(v\mid c)
 \left[Z_k(c)-w_k(c,v)\right].
 \label{eq:dense_credit_gradient}
\end{equation}
Furthermore, with the inverse-policy weighted norm
\[
 \|a\|^2_{\pi_k^{-1}}
 =
 \sum_{v\in\mathcal V}
 \frac{a(v)^2}{\pi_k(v\mid c)},
\]
the gradient magnitude satisfies the exact identity
\begin{equation}
 \left\|
 \nabla_{z(c)}
 \overline{\mathcal L}_{\mathrm{opd},k,c}
 \right\|^2_{\pi_k^{-1}}
 =
 \operatorname{Var}_{v\sim\pi_k(\cdot\mid c)}
 \left[w_k(c,v)\right].
 \label{eq:gate_variance_identity}
\end{equation}
Hence the OPD gradient is nonzero if and only if the expected hindsight gate is non-constant over candidate tokens with positive current-policy probability.
\end{proposition}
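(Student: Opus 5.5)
The proof has three parts, taken in the order of the statement: the vanishing of the RL gradient under ties, the softmax gradient formula at initialization, and the weighted-norm identity together with its ``if and only if'' consequence.

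\emph{Reward ties.} If every outcome in the group equals some value $R_0$, then $\mu_q=R_0$ and $\sigma_q=0$. Because the denominator is $\sigma_q+\epsilon$ with $\epsilon>0$, the advantage $A^{\mathrm{rl}}_{q,n}=0/\epsilon=0$ is well defined, and so is every broadcast token advantage. Both arguments of the $\min$ in Eq.~\ref{eq:rl_objective} are then identically zero as functions of $\theta$, so the clipped surrogate has zero gradient. Only the separate KL regularizer could remain, and the statement explicitly restricts to the clipped reward-driven term.

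\emph{Gradient formula.} For $p_z=\operatorname{softmax}(z)$ we have $\partial \log p_z(u\mid c)/\partial z(c,v)=\mathbf 1[u=v]-p_z(v\mid c)$. Substituting into Eq.~\ref{eq:conditional_opd_loss} gives
\[
\frac{\partial\overline{\mathcal L}}{\partial z(c,v)}=-\pi_k(v\mid c)w_k(c,v)+p_z(v\mid c)\sum_u\pi_k(u\mid c)w_k(c,u).
\]
The sum equals $Z_k(c)$ by Proposition~\ref{prop:onpolicy_target}. At initialization $p_z=\pi_k$, which yields Eq.~\ref{eq:dense_credit_gradient} directly. Here $w_k$ is detached, so it does not depend on $z$.

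\emph{Variance identity.} Plugging $a(v)=\pi_k(v\mid c)[Z_k(c)-w_k(c,v)]$ into the weighted norm gives
\[
\|a\|^2_{\pi_k^{-1}}=\sum_v\pi_k(v\mid c)\bigl(w_k(c,v)-Z_k(c)\bigr)^2 .
\]
Since $Z_k(c)=\mathbb E_{v\sim\pi_k}[w_k(c,v)]$, this is exactly $\operatorname{Var}_{\pi_k}[w_k]$. Full support of the softmax gives $\pi_k>0$, so the weighted norm vanishes iff $a=0$. In turn, $a=0$ iff $w_k(c,v)=Z_k(c)$ for every $v$, i.e., iff $w_k(c,\cdot)$ is constant over tokens with positive probability, which by full support means all of $\mathcal V$.

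\emph{Main subtlety.} No step is hard. The only care points are two: the tie case must use the $+\epsilon$ in the denominator to avoid $0/0$, and $\pi_k>0$ is required for the weighted norm to be a genuine norm so that the ``iff'' holds.
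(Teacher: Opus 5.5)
Your proposal is correct and follows the paper's proof essentially step for step. The steps are: zero group-relative advantage under ties; the softmax log-derivative evaluated at $p_z=\pi_k$, giving $\pi_k(v\mid c)[Z_k(c)-w_k(c,v)]$; and substitution into the inverse-policy weighted norm to obtain $\operatorname{Var}_{\pi_k}[w_k]$. Your remarks on the $+\epsilon$ in the advantage denominator, and on full support making the weighted norm a genuine norm so that the ``iff'' holds, make explicit what the paper leaves implicit.
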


\begin{proof}
If all group outcomes are identical, then
$R(\tau_q^{(n)})-\mu_q=0$
for every rollout, so the group-relative advantage in Eq.~\ref{eq:rl_objective} is zero. The clipped surrogate therefore contributes no reward-derived policy gradient; the separate KL regularizer may still contribute a gradient, but it does not provide task-outcome credit.

For the OPD term, the softmax derivative gives
\[
 \frac{\partial[-\log p_z(u\mid c)]}{\partial z(c,v)}
 =
 p_z(v\mid c)-\mathbf 1\{u=v\}.
\]
Applying this identity to Eq.~\ref{eq:conditional_opd_loss} and setting $p_z=\pi_k$ yields
\begin{align*}
 \frac{\partial
 \overline{\mathcal L}_{\mathrm{opd},k,c}}
 {\partial z(c,v)}
 &=
 \left(
 \sum_u\pi_k(u\mid c)w_k(c,u)
 \right)\pi_k(v\mid c)
 -
 \pi_k(v\mid c)w_k(c,v)
 \\
 &=
 \pi_k(v\mid c)
 \left[Z_k(c)-w_k(c,v)\right],
\end{align*}
which proves Eq.~\ref{eq:dense_credit_gradient}. Substituting this gradient into the weighted norm gives
\begin{align*}
 \left\|
 \nabla_{z(c)}
 \overline{\mathcal L}_{\mathrm{opd},k,c}
 \right\|^2_{\pi_k^{-1}}
 &=
 \sum_v
 \pi_k(v\mid c)
 \left[w_k(c,v)-Z_k(c)\right]^2
 \\
 &=
 \operatorname{Var}_{v\sim\pi_k(\cdot\mid c)}
 \left[w_k(c,v)\right],
\end{align*}
because $Z_k(c)=\mathbb E_{v\sim\pi_k}[w_k(c,v)]$. The variance is zero exactly when $w_k(c,v)$ is constant $\pi_k$-almost surely.
\end{proof}

Equations~\ref{eq:dense_credit_gradient}--\ref{eq:gate_variance_identity} formalize the dense credit supplied by \textsc{Seed}. Tokens with
$w_k(c,v)>Z_k(c)$
receive a negative logit derivative and are promoted by gradient descent, whereas tokens with
$w_k(c,v)<Z_k(c)$
are relatively suppressed. Thus, although every gate value is nonnegative, softmax normalization converts variation in hindsight support into signed relative credit over candidate decisions. The variance in Eq.~\ref{eq:gate_variance_identity} quantifies the strength of this token-level signal: the more the hindsight skill discriminates among alternatives, the larger the OPD update can be. This establishes \emph{informativeness}, not automatic correctness; whether the signal favors better actions is characterized by the value-alignment covariance in Eq.~\ref{eq:value_alignment_identity}.

The tied-reward case is the sharpest contrast, but the distinction is more general. GRPO broadcasts one scalar
$A^{\mathrm{rl}}_{q,n}$
across all valid tokens in a trajectory, while \textsc{Seed} assigns a context- and token-dependent coefficient through $w_k(c,v)$. It can therefore preserve useful local behaviors in failed trajectories and attenuate locally inefficient choices in successful trajectories, even when the terminal reward alone cannot identify those decisions.

\subsection{Self-Evolving Synchronization Controls Analyzer Staleness}
\label{app:theory_selfevolving}
The final requirement concerns the source of the hindsight itself. A fixed analyzer or one-time skill dataset may be informative early in training but can become mismatched to the capabilities, strategies, and failure modes of a later policy. We quantify this mismatch by its effect on the detached OPD gate and hence on the auxiliary gradient.

For a completed token sample $x=(\tau,c,v)$ from the current distribution $\mu_k$, let
$s_r(x)=\mathcal A_r(x_\tau)$
be the skill produced by an analyzer checkpoint from iteration $r$. Holding the current policy $\pi_k$ fixed for this comparison, define
\begin{equation}
 \Delta_{k,r}(x)
 =
 \log\pi_k(v\mid H(c,s_r(x)))
 -
 \log\pi_k(v\mid c),
 \qquad
 g_{k,r}(x)
 =
 \sigma\!\left(
 \beta_{\mathrm{opd}}\Delta_{k,r}(x)
 \right).
 \label{eq:analyzer_shift_gate}
\end{equation}
The first index identifies the policy whose probabilities are re-scored, while the second identifies the analyzer that supplies the skill. For a fixed trainable parameter $\theta$, define the resulting expected auxiliary gradient on current trajectories as
\begin{equation}
 U_k(\mathcal A_r;\theta)
 =
 -\mathbb E_{x\sim\mu_k}
 \left[
 g_{k,r}(x)
 \nabla_\theta\log\pi_\theta(v\mid c)
 \right].
 \label{eq:analyzer_gradient_definition}
\end{equation}

\begin{proposition}[Analyzer-staleness bound]
\label{prop:selfevolving_staleness}
Assume
$\|\nabla_\theta\log\pi_\theta(v\mid c)\|_2\le G$
for all relevant current-policy token samples. Then the discrepancy between the OPD gradient induced by the synchronized analyzer $\mathcal A_k$ and that induced by an earlier analyzer $\mathcal A_j$ satisfies
\begin{equation}
 \begin{aligned}
 \left\|
 U_k(\mathcal A_k;\theta)
 -
 U_k(\mathcal A_j;\theta)
 \right\|_2
 &\le
 G\,
 \mathbb E_{x\sim\mu_k}
 \left[
 |g_{k,k}(x)-g_{k,j}(x)|
 \right]
 \\
 &\le
 \frac{\beta_{\mathrm{opd}}G}{4}
 \mathbb E_{x\sim\mu_k}
 \left[
 |\Delta_{k,k}(x)-\Delta_{k,j}(x)|
 \right].
 \end{aligned}
 \label{eq:analyzer_staleness_bound}
\end{equation}
In particular, using the analyzer instantiated from the current checkpoint sets this cross-iteration analyzer mismatch to zero at the rollout-and-analysis stage.
\end{proposition}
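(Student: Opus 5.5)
The plan is to prove the two inequalities in Eq.~\ref{eq:analyzer_staleness_bound} in sequence: first a Jensen/triangle-inequality step that moves the norm inside the expectation, and then a Lipschitz bound for the sigmoid. The approach follows the mismatch argument used for Proposition~\ref{prop:onpolicy_target}. The main difference is that the sampling distribution $\mu_k$ is now shared by both terms; only the detached gate changes.

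\textbf{First inequality.} Subtracting the two instances of Eq.~\ref{eq:analyzer_gradient_definition} over the same distribution $\mu_k$ and the same $\theta$ gives
\[
U_k(\mathcal A_k;\theta)-U_k(\mathcal A_j;\theta)
=
-\mathbb E_{x\sim\mu_k}\!\left[\left(g_{k,k}(x)-g_{k,j}(x)\right)\nabla_\theta\log\pi_\theta(v\mid c)\right].
\]
Here the score factor is identical in both terms. Because $\pi_k$ is held fixed, the analyzer index enters only through the scalar gate. By convexity of the norm (Jensen's inequality for Bochner expectations), the norm of this expectation is at most the expectation of $|g_{k,k}(x)-g_{k,j}(x)|\,\|\nabla_\theta\log\pi_\theta(v\mid c)\|_2$. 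Applying the assumed bound $G$ on the score then yields the first line. Measurability and integrability are automatic, since the gates lie in $(0,1)$ and the score is bounded.

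\textbf{Second inequality.} The logistic function satisfies $\sigma'(z)=\sigma(z)(1-\sigma(z))\le 1/4$. By the mean value theorem it is therefore $1/4$-Lipschitz, and $z\mapsto\sigma(\beta_{\mathrm{opd}}z)$ is $\beta_{\mathrm{opd}}/4$-Lipschitz. This gives, pointwise,
\[
|g_{k,k}(x)-g_{k,j}(x)|\le\tfrac{\beta_{\mathrm{opd}}}{4}\,|\Delta_{k,k}(x)-\Delta_{k,j}(x)|.
\]
The shifts $\Delta_{k,r}(x)$ are finite because softmax policies have full support, so the log-probabilities are finite. Taking expectations under $\mu_k$ and multiplying by $G$ gives the second line. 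It is also worth noting that the common term $\log\pi_k(v\mid c)$ cancels in the difference, so the right-hand side depends only on how the two skills change the skill-augmented log-probability.

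\textbf{Final claim.} When $j=k$, the skills $s_k(x)$ and $s_j(x)$ coincide for every $x$, so the right-hand side is zero and the discrepancy vanishes. If the analyzer decodes stochastically, I would treat the analyzer randomness as part of $x$, coupling the two analyzers through shared randomness so that the identity is exact; otherwise the statement should be read as referring to the deterministic map $\mathcal A_r$ as written. No step here is a real obstacle. The only delicate point is this interpretation of $\mathcal A_r$ as a deterministic (or coupled) map, which is needed for the pointwise comparison of $g_{k,k}$ and $g_{k,j}$ on the same sample $x$ to make sense.
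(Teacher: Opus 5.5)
Your proposal is correct and follows the paper's proof essentially step for step: subtract the two instances of Eq.~\ref{eq:analyzer_gradient_definition}, move the norm inside the expectation by Jensen's inequality and apply the score bound $G$, then use $\sup_z\sigma'(z)=1/4$ to obtain the $\beta_{\mathrm{opd}}/4$-Lipschitz bound pointwise. Your remark on the $j=k$ claim is a reasonable clarification that the paper leaves implicit: it treats $\mathcal A_r$ as a deterministic map, or couples the two analyzers through shared randomness, so that the gates can be compared on the same sample.
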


\begin{proof}
Subtract Eq.~\ref{eq:analyzer_gradient_definition} for $r=k$ and $r=j$, apply Jensen's inequality, and use the score-norm bound:
\begin{align*}
 &\left\|
 U_k(\mathcal A_k;\theta)
 -
 U_k(\mathcal A_j;\theta)
 \right\|_2
 \\
 &\quad\le
 \mathbb E_{x\sim\mu_k}
 \left[
 |g_{k,k}(x)-g_{k,j}(x)|
 \left\|
 \nabla_\theta\log\pi_\theta(v\mid c)
 \right\|_2
 \right]
 \\
 &\quad\le
 G\,
 \mathbb E_{x\sim\mu_k}
 \left[
 |g_{k,k}(x)-g_{k,j}(x)|
 \right].
\end{align*}
Because
$\sup_z\sigma'(z)=1/4$,
the map $z\mapsto\sigma(\beta_{\mathrm{opd}}z)$ is
$\beta_{\mathrm{opd}}/4$-Lipschitz. Applying this fact pointwise to Eq.~\ref{eq:analyzer_shift_gate} proves the second inequality.
\end{proof}

Proposition~\ref{prop:selfevolving_staleness} measures analyzer staleness in the quantity that directly affects learning: the change in skill-induced log-probability shift on the current policy's own trajectories. It does not require parameter distance between analyzers to be meaningful, nor does it assume that the updated analyzer is always more accurate. Instead, it states that if an old and a current analyzer produce skills with different behavioral implications, their OPD gradients can differ accordingly. The factor $\beta_{\mathrm{opd}}/4$ also exposes a trade-off: a sharper gate more strongly separates supported and unsupported tokens, but is correspondingly more sensitive to stale skill-induced shifts.

Combining the data and analyzer effects gives a compact decomposition of the two synchronization mechanisms. At a fixed current policy $\pi_k$ and trainable parameter $\theta$, let
$U_{k,\theta}(\mu,\mathcal A_r)$
denote Eq.~\ref{eq:analyzer_gradient_definition} with samples drawn from $\mu$. Under the same score-norm bound,
\begin{equation}
 \begin{aligned}
 &\left\|
 U_{k,\theta}(\mu_k,\mathcal A_k)
 -
 U_{k,\theta}(\mu_j,\mathcal A_j)
 \right\|_2
 \\
 &\qquad\le
 2G\,\operatorname{TV}(\mu_k,\mu_j)
 +
 \frac{\beta_{\mathrm{opd}}G}{4}
 \mathbb E_{x\sim\mu_j}
 \left[
 |\Delta_{k,k}(x)-\Delta_{k,j}(x)|
 \right].
 \end{aligned}
 \label{eq:joint_staleness_decomposition}
\end{equation}
The first term is \emph{experience staleness}, controlled by collecting trajectories on-policy; the second is \emph{supervision staleness}, controlled by refreshing the analyzer from the latest shared checkpoint. At each outer iteration, \textsc{Seed} uses the synchronized pair $(\mu_k,\mathcal A_k)$, so both cross-iteration mismatch terms are reset at data generation. During the subsequent inner policy optimization, the analyzer remains fixed at $\mathcal A_k$; therefore, the claim is not that analyzer lag is identically zero at every optimizer step, but that any within-update lag is discarded when trajectories and skills are regenerated from the next policy checkpoint.

Taken together, Propositions~\ref{prop:onpolicy_target},~\ref{prop:dense_credit}, and~\ref{prop:selfevolving_staleness} provide a one-to-one theoretical counterpart to the three design requirements in Section~\ref{sec1_intro}. On-policy collection aligns the auxiliary objective with the current policy's visited distribution, dense skill-conditioned re-scoring supplies token-specific credit beyond terminal outcomes, and the self-evolving shared analyzer prevents the hindsight target from remaining permanently tied to an earlier policy checkpoint.

\section{Additional Experimental Details}\label{app:experimental_details}
This section provides additional experimental details, including the datasets, baseline methods, the complete \textsc{Seed} algorithm and representative extracted skills, and the implementation details of our method.

\subsection{Datasets}
\label{app:datasets}

Our experiments cover three representative forms of long-horizon agentic interaction: embodied household reasoning, web navigation, and search-augmented question answering.
Table~\ref{tab:appendix_datasets_summary} summarizes the benchmarks and the data sizes used for SFT, RL training, and evaluation.


\begin{table}[ht!]
\centering
\caption{Detailed information on the agentic benchmarks and training data.}
\label{tab:appendix_datasets_summary}
\vspace{0.04in}

\small
\setlength{\tabcolsep}{4pt}
\renewcommand{\arraystretch}{1.12}
\renewcommand{\tabularxcolumn}[1]{m{#1}}

\begin{tabularx}{\linewidth}{
    @{}
    >{\raggedright\arraybackslash}p{0.18\linewidth}
    >{\raggedright\arraybackslash}X
    >{\centering\arraybackslash}p{0.15\linewidth}
    >{\centering\arraybackslash}p{0.15\linewidth}
    >{\centering\arraybackslash}p{0.15\linewidth}
    @{}
}
\toprule
\textbf{Domain}
& \textbf{Benchmark(s)}
& \makecell{\textbf{\#SFT Train}\\\textbf{Samples}}
& \makecell{\textbf{\#RL Train}\\\textbf{Samples}}
& \makecell{\textbf{\#Test}\\\textbf{Samples}} \\
\midrule

\makecell[l]{Embodied\\Reasoning}
& ALFWorld
& 180
& 2,400
& \makecell{140 (seen)\\134 (unseen)} \\

\addlinespace[0.45em]
\rowcolor{gray!10}
\makecell[l]{Web\\Navigation}
& WebShop
& 180
& 2,400
& 128 \\

\addlinespace[0.45em]

\makecell[l]{Search-Augmented\\QA}
& \makecell[l]{
    NQ $\cdot$ TriviaQA $\cdot$ PopQA\\
    HotpotQA $\cdot$ 2WikiMultiHopQA\\
    MuSiQue $\cdot$ Bamboogle
}
& 180
& 19,200
& 51,713 \\

\bottomrule
\end{tabularx}
\end{table}

\paragraph{ALFWorld.}
ALFWorld~\citep{shridhar2021alfworld} exposes household environments derived from ALFRED through a text-based interaction interface.
Given a natural-language instruction and a sequence of textual observations, the agent must select admissible actions to accomplish the specified household goal.
We evaluate six task categories: \textit{Pick}, \textit{Look}, \textit{Clean}, \textit{Heat}, \textit{Cool}, and \textit{Pick2}.
We report the main results on the 140-task seen split and use the 134-task unseen split for out-of-distribution evaluation.

\paragraph{WebShop.}
WebShop~\citep{yao2022webshop} evaluates agents in an interactive e-commerce environment.
For each user request, the agent searches the product catalog, examines candidate items, selects the required attributes, and attempts to complete a purchase satisfying the specified constraints.
The benchmark provides a normalized task-completion score for partial constraint satisfaction and a binary success metric for exact completion.
We evaluate all methods on 128 test samples.

\paragraph{Search-Augmented QA.}
Following the Search-R1 protocol~\citep{jin2025searchr1}, we construct the search-based QA setting from seven datasets:
Natural Questions~\citep{kwiatkowski2019natural},
TriviaQA~\citep{joshi2017triviaqa},
PopQA~\citep{mallen2023popqa},
HotpotQA~\citep{yang2018hotpotqa},
2WikiMultiHopQA~\citep{ho2020constructing},
MuSiQue~\citep{trivedi2022musique}, and
Bamboogle~\citep{press2023measuring}.
The agent may issue search queries, inspect retrieved documents, and synthesize the collected evidence before returning its final answer.
The combined evaluation set contains 51,713 questions.

\paragraph{Training data configuration.}
For the SFT stage, we select 180 tasks from the training split of each benchmark setting.
Each selected task is executed with 8 independent rollouts, yielding 1,440 trajectories for constructing the corresponding trajectory--skill training set.
The subsequent RL stage uses 2,400 training instances for ALFWorld, 2,400 for WebShop, and 19,200 for Search-Augmented QA.
SFT data construction and RL optimization are performed separately for each benchmark configuration.

\subsection{Baselines}
\label{app:baselines}

We compare \textsc{Seed} against three categories of baselines: prompting-only methods, outcome-based reinforcement learning methods, and self-distillation or skill-distillation methods.
Unless a method is marked with an asterisk, evaluation is conducted using only the standard task prompt and the interaction history returned by the environment.
The superscript $^{*}$ indicates that a natural-language skill is additionally supplied during validation or testing, while the backbone model and evaluation setting remain unchanged.

\paragraph{Prompting-only methods.}
\begin{itemize}
    \item \textit{Vanilla}.
    We directly evaluate the original instruction-tuned backbone without performing any additional post-training.
    The agent receives the default environment prompt together with the observation--action history accumulated during interaction.

    \item \textit{Skill-Prompt}$^{*}$.
    This baseline uses the same frozen parameters as \textit{Vanilla}, but appends a retrieved task-relevant skill to the model context during validation and testing.
    Since no parameter optimization is involved, this comparison measures the benefit of using natural-language skills purely as inference-time guidance.
\end{itemize}

\paragraph{Outcome-based reinforcement learning.}
\begin{itemize}
    \item \textit{GRPO}~\citep{shao2024deepseekmath}.
    GRPO is a critic-free reinforcement learning algorithm that samples multiple trajectories for each task and normalizes their scalar outcome rewards within the rollout group to obtain relative advantages.
    Each valid token in a trajectory is optimized using the corresponding sequence-level advantage and a clipped importance-ratio objective.
    In our implementation, GRPO relies solely on terminal environment rewards and does not use process-level annotations, teacher predictions, or skill-conditioned supervision.

    \item \textit{Skill-GRPO}.
    This variant retains the standard GRPO optimization objective while introducing a task-relevant natural-language skill into the policy context during both rollout collection and parameter updates.
    The skill can therefore affect the policy's exploration behavior and the trajectories subsequently reinforced by the outcome reward.
    It is removed during validation and testing, allowing us to assess whether the training-time guidance has been internalized by the policy.

    \item \textit{Skill-GRPO}$^{*}$.
    This baseline follows the same training procedure as \textit{Skill-GRPO}, but continues to provide the skill during validation and testing.
    It therefore maintains consistent skill conditioning across training and evaluation.
\end{itemize}

\paragraph{Self-distillation and skill-distillation methods.}
\begin{itemize}
    \item \textit{OPSD}~\citep{zhao2026selfdistilledreasoner}.
    OPSD derives student and teacher branches from the same underlying model while conditioning them on different contexts.
    The student generates trajectories from the ordinary task context, whereas the teacher additionally observes privileged information available only during training.
    The teacher then re-scores the student's sampled tokens and produces dense token-level targets through distribution matching.
    Teacher-side outputs are detached from gradient computation, and the privileged context is not used at inference time.

    \item \textit{GRPO+OPSD}.
    This baseline jointly optimizes the trajectory-level GRPO loss and the token-level OPSD objective.
    Environment rewards provide outcome-based supervision for complete trajectories, while OPSD contributes dense guidance at individual token positions.
    It serves as a controlled comparison for determining whether a straightforward combination of outcome-based RL and generic on-policy self-distillation is sufficient.

    \item \textit{Skill-SD}~\citep{wang2026skillsd}.
    Skill-SD adapts self-distillation to multi-turn agent training by representing completed experience as compact natural-language skills.
    During optimization, a retrieved skill is supplied only to the teacher branch, while the student continues to operate from the ordinary task context.
    The student is therefore trained to absorb the behavioral guidance conveyed by the skill-conditioned teacher without requiring explicit skills during evaluation.

    \item \textit{RLSD}~\citep{yang2026rlsd}.
    RLSD employs a privileged self-teacher to refine token-level credit assignment within reinforcement learning.
    Rather than introducing an independent distribution-matching objective, it converts the teacher--student log-probability gap into a bounded coefficient that adjusts the magnitude of each token's GRPO update.
    The sign of the update remains determined by the environment-derived advantage, meaning that the privileged teacher controls update strength but not the reinforcement direction.
    The self-distillation contribution is emphasized early in training and gradually reduced toward the standard GRPO objective.

    \item \textit{SDAR}~\citep{lu2026sdar}.
    SDAR preserves GRPO as the main outcome-optimization objective and adds a separately gated self-distillation loss.
    A privileged teacher branch evaluates the student's on-policy tokens under an augmented context, while a bounded gate controls the contribution of each teacher signal.
    The gate may depend on student uncertainty and the detached teacher--student log-probability difference, strengthening reliable positive guidance and reducing the influence of potentially noisy signals.
    In contrast to \textit{RLSD}, SDAR leaves the original GRPO advantage unchanged and applies its gating mechanism only to the auxiliary distillation term.
\end{itemize}

For all reproduced post-training baselines, we use the same backbone models and environment interfaces as \textsc{Seed}.
Whenever supported by the corresponding method, we also align the task batch size, rollout budget, rollout group size, number of policy updates, and evaluation protocol.
The resulting comparisons primarily differ in their optimization objectives and in whether skills or other privileged information are available during training or evaluation.

\subsection{Algorithm and Extracted Skill Examples}\label{app:algorithm_skills}
Algorithm~\ref{alg:seed} summarizes the self-evolving training loop. At each update, the frozen snapshot $\pi_{\theta_{\mathrm{old}}}$ both collects trajectories and analyzes the completed interactions. The current trainable policy $\pi_\theta$ then re-scores the same sampled tokens under ordinary and skill-augmented contexts; gradients flow only through the ordinary branch. The resulting OPD loss is optimized jointly with the KL-regularized GRPO objective before the updated policy becomes the next snapshot.

Table~\ref{tab:episode_level_skills} provides representative skills extracted from successful and failed trajectories across ALFWorld, WebShop, and Search-based QA.

\begin{algorithm}[t]
\caption{\textsc{Seed}: Self-Evolving On-Policy Distillation}
\label{alg:seed}
\footnotesize
\begin{algorithmic}[1]
\Require SFT-initialized policy $\pi_{\theta_{\mathrm{sft}}}$, task set $\mathcal{Q}$,
fixed reference policy $\pi_{\mathrm{ref}}$, context function $H$, group size $N$,
gate sharpness $\beta_{\mathrm{opd}}$, KL coefficient $\beta_{\mathrm{KL}}$,
OPD coefficient $\lambda_{\mathrm{opd}}$, clip parameter $\epsilon_{\mathrm{clip}}$,
learning rate $\eta$
\Ensure Trained policy $\pi_\theta$
\State $\theta \gets \theta_{\mathrm{sft}}$
\For{each policy update}
    \State $\theta_{\mathrm{old}} \gets \theta$
    \State Sample a task batch $\mathcal{B} \subset \mathcal{Q}$
    \State \textcolor{green!50!black}{\textit{// On-policy experience and synchronized skill analysis}}
    \For{each task $q \in \mathcal{B}$}
        \State Sample $\mathcal{G}_q=\{\tau_q^{(n)}\}_{n=1}^{N}$ with
        $\tau_q^{(n)}\sim\pi_{\theta_{\mathrm{old}}}(\cdot\mid q)$
        \State Compute $\mu_q$ and $\sigma_q$ from
        $\{R(\tau_q^{(n)})\}_{n=1}^{N}$
        \For{each trajectory $\tau_q^{(n)}\in\mathcal{G}_q$}
            \State $A^{\mathrm{rl}}_{q,n}\gets
            (R(\tau_q^{(n)})-\mu_q)/(\sigma_q+\epsilon)$
            \State $s_q^{(n)}\gets
            A_{\theta_{\mathrm{old}}}(x_{\tau_q^{(n)}})$
            \For{each action step $t$ in $\tau_q^{(n)}$}
                \State $\tilde h_{q,n,t}\gets
                H(h_{q,n,t},s_q^{(n)})$
                \State Cache $\ell^{\mathrm{old}}_{q,n,t,\ell}$ for all
                valid sampled tokens $\ell$
            \EndFor
        \EndFor
    \EndFor
    
    \State \textcolor{green!50!black}{\textit{// Paired contextual re-scoring and joint optimization}}
    \For{each optimization minibatch of valid sampled tokens}
        \State Evaluate $\ell^{\mathrm{skill}}_{q,n,t,\ell}$ and
        $\ell^{\theta}_{q,n,t,\ell}$ using the current $\pi_\theta$
        \State $\Delta_{q,n,t,\ell}\gets
        \operatorname{sg}[\ell^{\mathrm{skill}}_{q,n,t,\ell}
        -\ell^{\theta}_{q,n,t,\ell}]$
        \State $g_{q,n,t,\ell}\gets
        \sigma(\beta_{\mathrm{opd}}\Delta_{q,n,t,\ell})$
        \State $\rho_{q,n,t,\ell}(\theta)\gets
        \exp(\ell^{\theta}_{q,n,t,\ell}
        -\ell^{\mathrm{old}}_{q,n,t,\ell})$
        \State Compute the clipped GRPO loss $\mathcal{L}_{\mathrm{rl}}$
        from $\rho$, $A^{\mathrm{rl}}$, and $\epsilon_{\mathrm{clip}}$
        \State $\mathcal{L}_{\mathrm{opd}}\gets
        \mathbb{E}[m\cdot g\cdot (\operatorname{sg}[\ell^{\mathrm{skill}}]
        -\ell^{\theta})]$
        \State $\mathcal{L}_{\mathrm{SEED}}\gets
        \mathcal{L}_{\mathrm{rl}}
        +\lambda_{\mathrm{opd}}\mathcal{L}_{\mathrm{opd}}$
        \State $\theta\gets\theta-\eta\nabla_\theta
        \mathcal{L}_{\mathrm{SEED}}$
    \EndFor
\EndFor
\end{algorithmic}
\end{algorithm}


\begin{table*}[ht!]
\centering
\small
\setlength{\tabcolsep}{4pt}
\renewcommand{\arraystretch}{1.15}
\caption{\textbf{Episode-level skills extracted by the trajectory analyzer.}
For each dataset, we present one successful and one failed trajectory.
Skills derived from successful episodes capture reusable workflows, whereas
those derived from failed episodes distill actionable failure-avoidance rules.}
\label{tab:episode_level_skills}
\vspace{0.1in}

\begin{tabularx}{\textwidth}{
  >{\raggedright\arraybackslash}p{0.10\linewidth}
  >{\raggedright\arraybackslash}p{0.31\linewidth}
  >{\raggedright\arraybackslash}X
}
\toprule
\multicolumn{1}{c}{\textbf{Outcome}} &
\multicolumn{1}{c}{\textbf{Task}} &
\multicolumn{1}{c}{\textbf{Episode-level skill}} \\
\midrule

\rowcolor{gray!15}
\multicolumn{3}{l}{\textbf{ALFWorld}} \\

Success &
clean some spatula and put it in diningtable. &
Workflow: When tasked to clean and place an object, first locate the target
object, take it to a cleaning station, clean it, and finally move it to the
target location. \\

Failure &
put a clean spatula in drawer. &
Avoid moving an object to the target location before verifying both inventory
and required object state. Confirm the object is held and already satisfies
required conditions such as cleanliness; avoid repeatedly moving objects
without checking state or placement success. \\

\addlinespace
\rowcolor{gray!15}
\multicolumn{3}{l}{\textbf{WebShop}} \\

Success &
Find me women's jumpsuits, rompers \& overalls with button closure, quality
polyester, polyester spandex, long sleeve for daily wear with color: hot pink,
and size: x-large, and price lower than \$40.00 dollars. &
Workflow: Use a structured search query covering all required attributes,
choose a result matching the core product category, then verify and select
required options such as color and size before clicking Buy Now. \\

Failure &
Find me machine wash men's dress shirts with polyester heathers, heathers
cotton, cotton heather, needle sleeve, classic fit with color: navy, and fit
type: youth, and size: 3x-large, and price lower than \$50.00 dollars. &
Avoid clicking irrelevant search results or purchasing partial matches. If
results do not match the target product type, reformulate the query; on the
product page, select each required attribute once and verify material, color,
size, fit, and price before Buy Now. \\

\addlinespace
\rowcolor{gray!15}
\multicolumn{3}{l}{\textbf{Search}} \\

Success &
Tie a Yellow Ribbon is the third album by American popular music group Dawn
(Michael Anthony Orlando Cassavitas, Telma Hopkins \& Joyce Vincent Wilson)
released in which year? &
Workflow: For a specific factual query, search the exact query, then extract
and verify the answer directly from the search results before responding. \\

Failure &
Le Juge is a comic whose story is inspired by a man who called himself what? &
Avoid prematurely inferring an answer from search results that do not directly
address the core query. First verify the relevant entity/relation, then extract
the requested attribute. \\

\bottomrule
\end{tabularx}
\end{table*}

\subsection{Implementation Details}
\label{app:implementation_details}

\paragraph{Benchmark metrics.}
We adopt benchmark-specific measures following the corresponding evaluation protocols.
For ALFWorld, let $\mathrm{SR}_{c}$ denote the success rate on task category $c$.
The overall result assigns equal weight to all six categories and is computed as
\begin{equation}
    \mathrm{ALFWorld\text{-}Avg}
    =
    \frac{1}{6}
    \sum_{c=1}^{6}
    \mathrm{SR}_{c}.
\end{equation}

For Search-based QA, we use a dataset-balanced aggregate rather than pooling all questions together.
Denoting the accuracy on dataset $d$ by $\mathrm{Acc}_{d}$, the reported score is
\begin{equation}
    \mathrm{Search\text{-}Avg}
    =
    \frac{1}{7}
    \sum_{d=1}^{7}
    \mathrm{Acc}_{d}.
\end{equation}

For WebShop, we report two complementary metrics.
\textit{Score} measures the degree of task completion by reflecting how fully the purchased product satisfies the requirements specified in the user request; the normalized scores are averaged and scaled by 100.
\textit{Succ.} denotes the percentage of episodes in which all requirements are satisfied and the task is completed exactly.

\paragraph{External analyzer at the SFT stage.}
To construct the trajectory--skill supervision for SFT, we serialize each completed rollout into a structured record comprising the task instruction, the full interaction trajectory, and the terminal outcome.
The external analyzer then examines the completed episode and produces a natural-language hindsight skill.
The analyzer is used exclusively for offline skill annotation and does not participate in trajectory collection, which is carried out by the corresponding backbone model itself.
We instantiate the external analyzer with GLM-5.2~\citep{zai2026glm52}, set the temperature to 0.0, and limit the maximum response length to 4,096 tokens.
The complete prompt used for hindsight-skill annotation is provided in Figure~\ref{fig:analyzer_prompt}.

\paragraph{Actor and analyzer prompts at the RL stage.}
During Stage~2, the actor follows the standard environment-interaction prompt for the corresponding benchmark; the prompt used for ALFWorld is shown in Figure~\ref{fig:actor_prompt}.
The analyzer uses the same trajectory-analysis prompt as in the SFT stage, as presented in Figure~\ref{fig:analyzer_prompt}.

\paragraph{Training hyperparameters.}
Table~\ref{tab:rl_hyperparameters} records the training hyperparameters that are required for exact reproduction.

\begin{table*}[t]
\centering
\small
\setlength{\tabcolsep}{6pt}
\renewcommand{\arraystretch}{1.15}
\caption{Training hyperparameters for \textsc{Seed}.}
\label{tab:rl_hyperparameters}
\vspace{0.1in}
\begin{tabular}{@{}p{0.4\textwidth}p{0.5\textwidth}@{}}
\toprule
\textbf{Hyperparameter} & \textbf{Value} \\
\midrule

Training steps
& 150  \\

Training batch size
& 16 for ALFWorld and WebShop; 128 for Search \\

Rollout group size $N$
& 8 \\

Learning rate
& $1\times10^{-6}$  \\

PPO clip parameter $\epsilon_\mathrm{clip}$
& 0.2 \\

Sharpness of OPD gate $\beta_\mathrm{opd}$
& 5.0 \\
 
OPD loss coefficient $\lambda_{\mathrm{opd}}$
& 0.01 \\

KL coefficient $\beta_{\mathrm{KL}}$
& 0.01 \\

Maximum prompt length
& 2,048 for ALFWorld ; 4,096 for WebShop and Search  \\

Response length
& 512 \\

Maximum interaction steps
& 30 for ALFWorld, 15 for WebShop, and 4 for Search. \\
\bottomrule
\end{tabular}
\end{table*}

\paragraph{Computing details.}
Training is conducted on 8 Nvidia A800 80G GPUs.

\section{Supplementary Results}\label{app:supp_results}

\subsection{Detailed Sample Efficiency Comparison}
\label{app:sample_efficiency}
Table~\ref{tab:sample_efficiency} provides the full comparison across five training data fractions on ALFWorld and WebShop. \textsc{Seed} outperforms GRPO at every fraction on both benchmarks. On ALFWorld, the gains range from 13.4 to 30.2 points. With only 60\% of the data, \textsc{Seed} reaches 80.7, exceeding the 75.0 achieved by GRPO with the full dataset. On WebShop, \textsc{Seed} improves performance by 5.5 to 15.6 points and reaches 75.0 with 80\% of the data, compared with 63.3 for GRPO trained on the full dataset. These results show that dense hindsight supervision makes more effective use of collected trajectories and remains beneficial as the training set grows.


\begin{table}[htbp]
\centering
\caption{\textbf{Sample efficiency comparison on ALFWorld and WebShop.}
We report performance using different proportions of the available training data. SEED consistently outperforms GRPO across both benchmarks, while $\Delta$ denotes the absolute performance improvement over GRPO.}
\vspace{0.05in}
\label{tab:sample_efficiency}
\setlength{\tabcolsep}{8pt}
\renewcommand{\arraystretch}{1.08}

\begin{tabular}{lccccc}
\toprule
\textbf{Method}
& \textbf{20\%}
& \textbf{40\%}
& \textbf{60\%}
& \textbf{80\%}
& \textbf{100\%} \\
\midrule

\multicolumn{6}{l}{\textit{ALFWorld}} \\
GRPO
& 27.3 & 42.2 & 56.3 & 58.6 & 75.0 \\
\rowcolor{gray!10}
SEED
& 40.7 & 58.9 & 80.7 & 88.8 & 91.8 \\
\rowcolor{blue!6}
$\Delta$
& +13.4 & +16.7 & +24.4 & +30.2 & +16.8 \\

\midrule

\multicolumn{6}{l}{\textit{WebShop}} \\
GRPO
& 31.3 & 45.3 & 57.0 & 63.6 & 63.3 \\
\rowcolor{gray!10}
SEED
& 37.5 & 53.1 & 62.5 & 75.0 & 78.9 \\
\rowcolor{blue!6}
$\Delta$
& +6.2 & +7.8 & +5.5 & +11.4 & +15.6 \\

\bottomrule
\end{tabular}
\end{table}


\begin{table}[ht!]
\centering
\caption{\textbf{Cross-domain generalization results on ALFWorld Unseen.} Using Qwen2.5-3B-Instruct as the backbone, we evaluate performance on six unseen task categories. SEED outperforms GRPO on five categories and increases the average success rate by 15.3 percentage points.}
\vspace{0.05in}
\label{tab:ood_generalization_alfworld}
\small
\setlength{\tabcolsep}{9pt}
\renewcommand{\arraystretch}{1.08}
\begin{tabular}{cccccccc}
\toprule[1.2pt]
& \multicolumn{7}{c}{\textbf{ALFWorld Unseen}} \\
\cmidrule(lr){2-8}
\textbf{Method}
& \textbf{Pick}
& \textbf{Look}
& \textbf{Clean}
& \textbf{Heat}
& \textbf{Cool}
& \textbf{Pick2}
& \textbf{Avg.} \\
\midrule
ReAct
& 17.4
& 6.7
& 8.8
& 7.4
& 9.1
& 0.0
& 8.2 \\
GRPO
& 73.9
& 60.0
& \textbf{82.4}
& 59.3
& 72.7
& 76.9
& 70.9 \\
\rowcolor{gray!10}
SEED
& \textbf{90.4}
& \textbf{78.3}
& 79.5
& \textbf{94.3}
& \textbf{86.2}
& \textbf{88.2}
& \textbf{86.2} \\
\rowcolor{blue!6}
$\Delta$
& +16.5
& +18.3
& -2.9
& +35.0
& +13.5
& +11.3
& +15.3 \\
\bottomrule[1.2pt]
\end{tabular}
\end{table}

\subsection{Cross-Domain Generalization}
\label{app:ood_generalization}
Table~\ref{tab:ood_generalization_alfworld} reports results for each task family on ALFWorld Unseen. \textsc{Seed} improves the average success rate from 70.9 to 86.2 and outperforms GRPO on five of the six families. The largest gain occurs on \textit{Heat}, where the success rate increases by 35.0 points. Improvements are also substantial on \textit{Look} and \textit{Pick}, reaching 18.3 and 16.5 points, respectively. Although performance on \textit{Clean} decreases by 2.9 points, the broad gains across the remaining families indicate that \textsc{Seed} learns reusable behavioral guidance that transfers beyond the training environments.

\subsection{Multimodal Extension}
\label{app:multimodal_extension}
To examine whether \textsc{Seed} extends beyond text-only interaction, we evaluate Qwen2.5-VL-3B-Instruct~\citep{bai2025qwen2} on two vision-based agentic benchmarks. Sokoban~\citep{SchraderSokoban2018} presents each state as a $6\times6$ visual grid containing the player, boxes, walls, and target locations. The agent must navigate the grid and push every box onto a target. Since boxes cannot be pulled, a poor push may create an irreversible dead end. The task therefore tests visual state tracking and long-horizon spatial planning. EZPoints from Gym Cards~\citep{NEURIPS2024_c848b7d3} presents two playing cards together with a partially constructed expression. At each step, the agent selects a card value or an arithmetic operator to build an expression that evaluates to 12, using each card exactly once. This task couples fine-grained card recognition with sequential arithmetic reasoning.

As shown in Table~\ref{tab:vlm_success_rates}, \textsc{Seed} achieves success rates of 82.0\% on Sokoban and 100.0\% on EZPoints. It improves over GRPO by 14.9 and 13.1 points, respectively, raising the average success rate from 77.0\% to 91.0\%. ReAct reaches only 7.4\% on average, which highlights the need for policy learning in these visually grounded tasks. The consistent gains across spatial planning and visual arithmetic show that \textsc{Seed} can turn multimodal trajectories into useful hindsight supervision and internalize that guidance through self-evolving OPD, without skill prompts during evaluation. Figure~\ref{fig:sokoban_case} presents a step-by-step visualization of a representative Sokoban trajectory.

\begin{figure}[ht!]
    \centering
    \includegraphics[width=\textwidth]{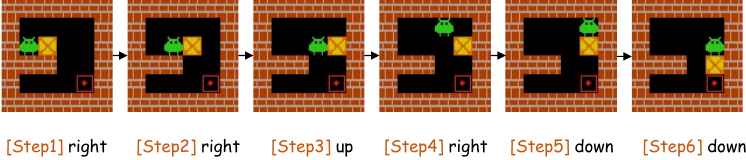}
    \setlength{\abovecaptionskip}{1pt}
    \caption{\textbf{A representative trajectory on Sokoban.} The sequence shows six consecutive actions executed by the agent. Arrows indicate the temporal progression of the trajectory, and the action taken at each step is displayed below the corresponding observation.}
    \label{fig:sokoban_case}
\end{figure}


\begin{table}[ht!]
\centering
\caption{\textbf{Results on vision-based agentic benchmarks}. We report
success rates (\%) on Sokoban [6$\times$6] and EZPoints using
Qwen2.5-VL-3B-Instruct as the backbone model.}
\label{tab:vlm_success_rates}
\vspace{0.05in}

\small
\setlength{\tabcolsep}{18pt}
\renewcommand{\arraystretch}{1.12}

\begin{tabular}{cccc}
\toprule
\textbf{Method}
& \textbf{Sokoban [6$\times$6] $\uparrow$}
& \textbf{EZPoints $\uparrow$}
& \textbf{Average $\uparrow$} \\
\midrule

ReAct
& 11.7
& 3.1
& 7.4 \\

GRPO
& 67.1
& 86.9
& 77.0 \\

\rowcolor{gray!12}
SEED
& \textbf{82.0}
& \textbf{100.0}
& \textbf{91.0} \\

\rowcolor{blue!6}
$\Delta$
& +14.9
& +13.1
& +14.0 \\

\bottomrule
\end{tabular}
\end{table}

\subsection{Additional Training Dynamics}
Figure~\ref{fig:success_rate_3x3} presents the success-rate trajectories for all three backbones on ALFWorld, Search-based QA, and WebShop. Performance improves in every setting, although the convergence pattern varies by domain. ALFWorld shows the largest absolute increase and approaches a success rate of 0.9 for all three backbones. Search-based QA improves rapidly during the early updates before stabilizing between 0.47 and 0.55. WebShop follows a steadier upward trend and finishes between 0.68 and 0.75. The consistent progress across model families and environments shows that \textsc{Seed} is not tied to a particular backbone or form of agentic interaction.

\begin{figure}[ht!]
    \centering
    \includegraphics[width=0.98\textwidth]{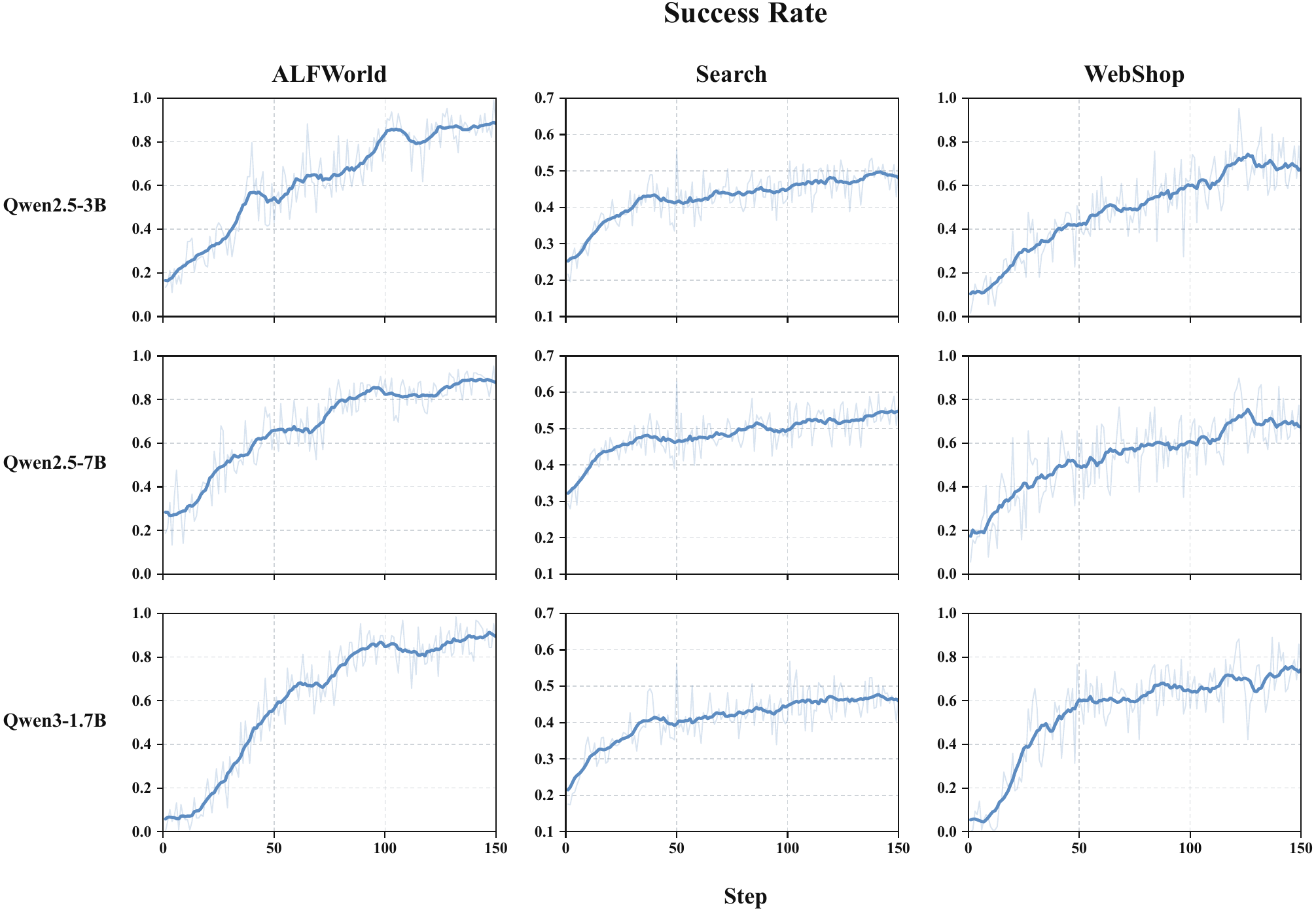}
    \caption{\textbf{Success rates across three backbones and three domains.} Success rates increase over training in all nine settings, showing consistent learning across model scales and agentic tasks.}
    \label{fig:success_rate_3x3}
\end{figure}
\begin{figure}[h!]
    \centering
    \includegraphics[width=0.98\textwidth]{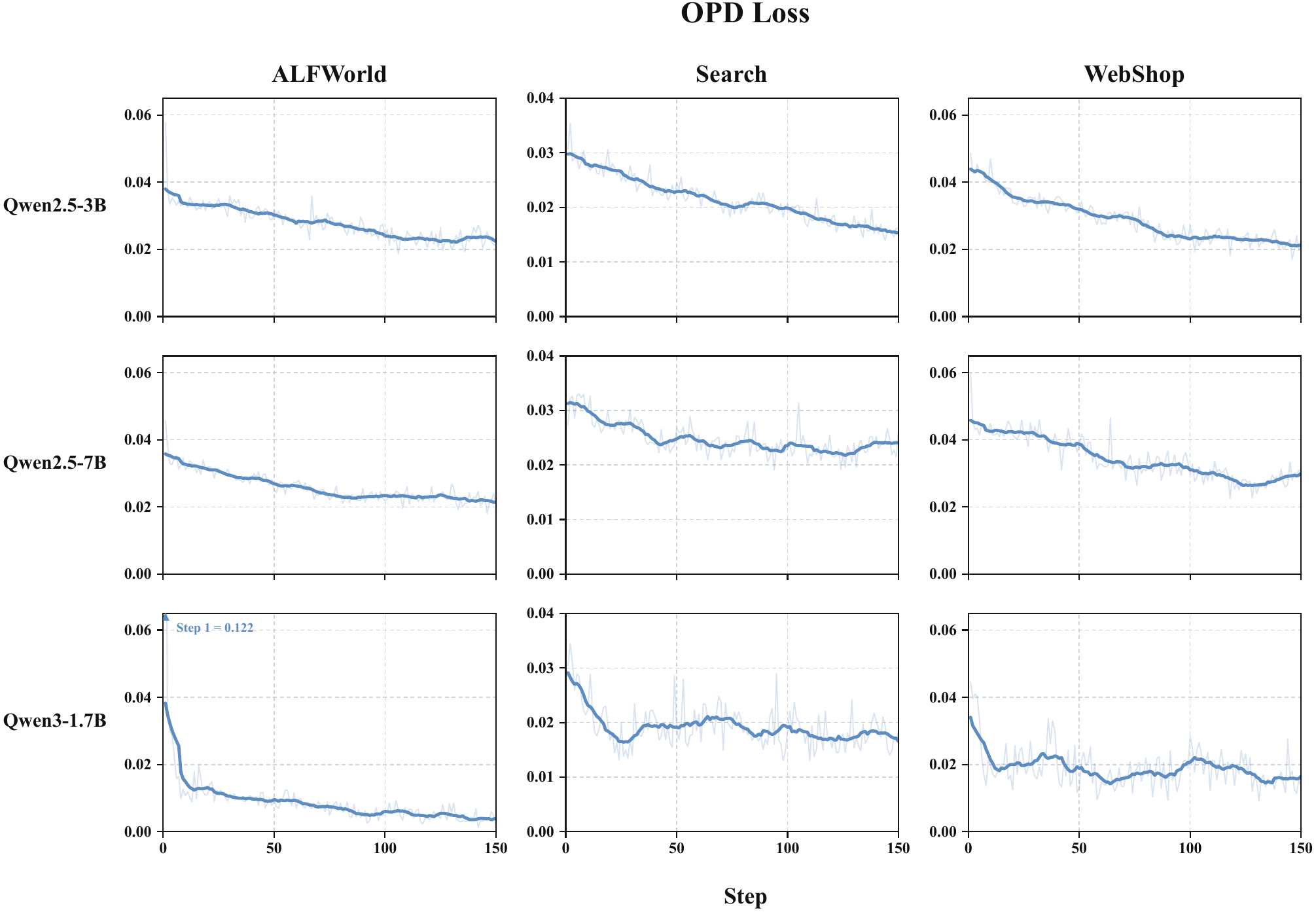}
    \caption{\textbf{OPD loss dynamics.} The loss generally decreases and stabilizes during training, indicating that the policy progressively internalizes the behavioral guidance provided by hindsight skills.}
    \label{fig:opd_loss_3x3}
\end{figure}

Figure~\ref{fig:opd_loss_3x3} shows the corresponding OPD losses. The loss decreases from its initial value and stabilizes at a lower level in all nine settings. The Qwen2.5 models exhibit gradual convergence, while Qwen3-1.7B shows a sharper early decline, especially on ALFWorld. Lower OPD loss indicates that the ordinary policy increasingly assigns probability to actions favored by hindsight supervision. Together with the rising success rates, this trend supports the stability of the self-evolving loop. The latest policy generates updated experience and hindsight skills, and OPD internalizes that guidance in subsequent updates.

\section{Case Study}\label{app:case_study}
Figures~\ref{fig:alfworld_case1}--\ref{fig:webshop_case2} illustrate how \textsc{Seed} applies internalized behavioral guidance during evaluation without skill inputs. In the first ALFWorld example, the policy decomposes the task into locating the ladle, cleaning it, opening the drawer, and completing the placement. It correctly handles the drawer as a necessary precondition before the final action. The second example requires two books and repeated navigation between the desk and bed. The policy tracks that the first book has already been placed, returns for the second, and completes the remaining subgoal without losing progress. These trajectories demonstrate coherent state tracking and precondition management over extended interactions.

The Search-based QA examples show that the policy adapts its information gathering to the available evidence. It answers the first question after one search because the retrieved passages directly establish the shared profession. For the second question, it first identifies \textit{Finding Neverland}, then issues a targeted search for its director before answering \textit{Marc Forster}. In WebShop, the policy retains the requested attributes and price limits from search through purchase. It verifies the product details and selects the required color and size before buying. Together, these cases show that \textsc{Seed} learns to decompose tasks, revise plans from new evidence, and preserve constraints over long trajectories. Because skills are not provided during inference, the observed behavior reflects guidance internalized through self-evolving OPD rather than external prompting.

\section{Additional Discussion}\label{app:discussion}
\textsc{Seed} treats completed on-policy experience as an evolving source of supervision. Our experiments cover several forms of long-horizon interaction, but broader benchmarks such as DeepPlanning~\citep{zhang2026deepplanning}, Long-Horizon-Terminal-Bench~\citep{li2026longhorizonterminal}, OdysseyArena~\citep{xu2026odysseyarena}, and RobotEQ~\citep{fang2026roboteq} provide more demanding tests. These settings involve longer workflows, richer state spaces, and greater interaction complexity, often with rare terminal success. Evaluating \textsc{Seed} in such environments would test whether policy-synchronized hindsight can preserve decisive events across extended contexts and remain useful as the policy explores more diverse behaviors. Benchmarks with intermediate or partial-credit grading also make it possible to study how external progress signals interact with the token-level hindsight supervision.

However, longer tasks also expose a central risk of self-evolution: internally generated supervision can inherit model errors and plateau below oracle-supervised training~\citep{jiang2025selfincorrect,qi2026generalizationgap}. Because the actor and analyzer share the same policy, improvements transfer across both roles, but their blind spots can also be shared. An inaccurate analysis may turn a recurring policy error into an apparently reusable rule, which later updates could reinforce. Evidence of self-preference in model-based evaluation further suggests that a model may systematically favor outputs resembling its own~\citep{mahbub2026selfpreference}. On-policy alignment and confidence gating help reduce distribution mismatch and noisy self-generated supervision~\citep{kumar2025score,jiang2025importance}, but neither confidence nor self-judgment guarantees semantic correctness~\citep{jiang2025selfincorrect,zhou2026confabulations}. A more self-correcting version of \textsc{Seed} could anchor skills to verifiable state changes, compare analyses across policy snapshots, and preserve uncertainty alongside each skill. It could also organize hindsight hierarchically by first summarizing local transitions and then deriving trajectory-level rules~\citep{ge2025samule}. Search-discovered reasoning abstractions~\citep{wu2024beyond} may provide useful structures for this aggregation. Policy-aware exploration~\citep{wu2026spark} could collect trajectories that distinguish competing rules, while uncertainty-aware self-distillation~\citep{lu2026sdar} may offer stronger filters for deciding which guidance to internalize.

Training efficiency is another practical constraint. \textsc{Seed} adds no deployment overhead, but training requires trajectory analysis and paired scoring under ordinary and skill-augmented contexts. The cost grows with interaction length and multimodal context. Speculative decoding methods such as \textsc{Double}~\citep{shen2026double} and \textsc{DSpark}~\citep{cheng2026dspark} could reduce the autoregressive cost of rollout collection and skill generation. Cached representations and batched paired scoring could further reduce the cost of OPD. Another promising direction is selective analysis: the analyzer could focus on trajectories with high uncertainty, novel states, or disagreement between reward and hindsight. This would preserve the self-evolving supervision loop while avoiding repeated analysis of redundant experience.
\newpage

\begin{figure}[t]
    \centering
    \includegraphics[width=0.99\textwidth]{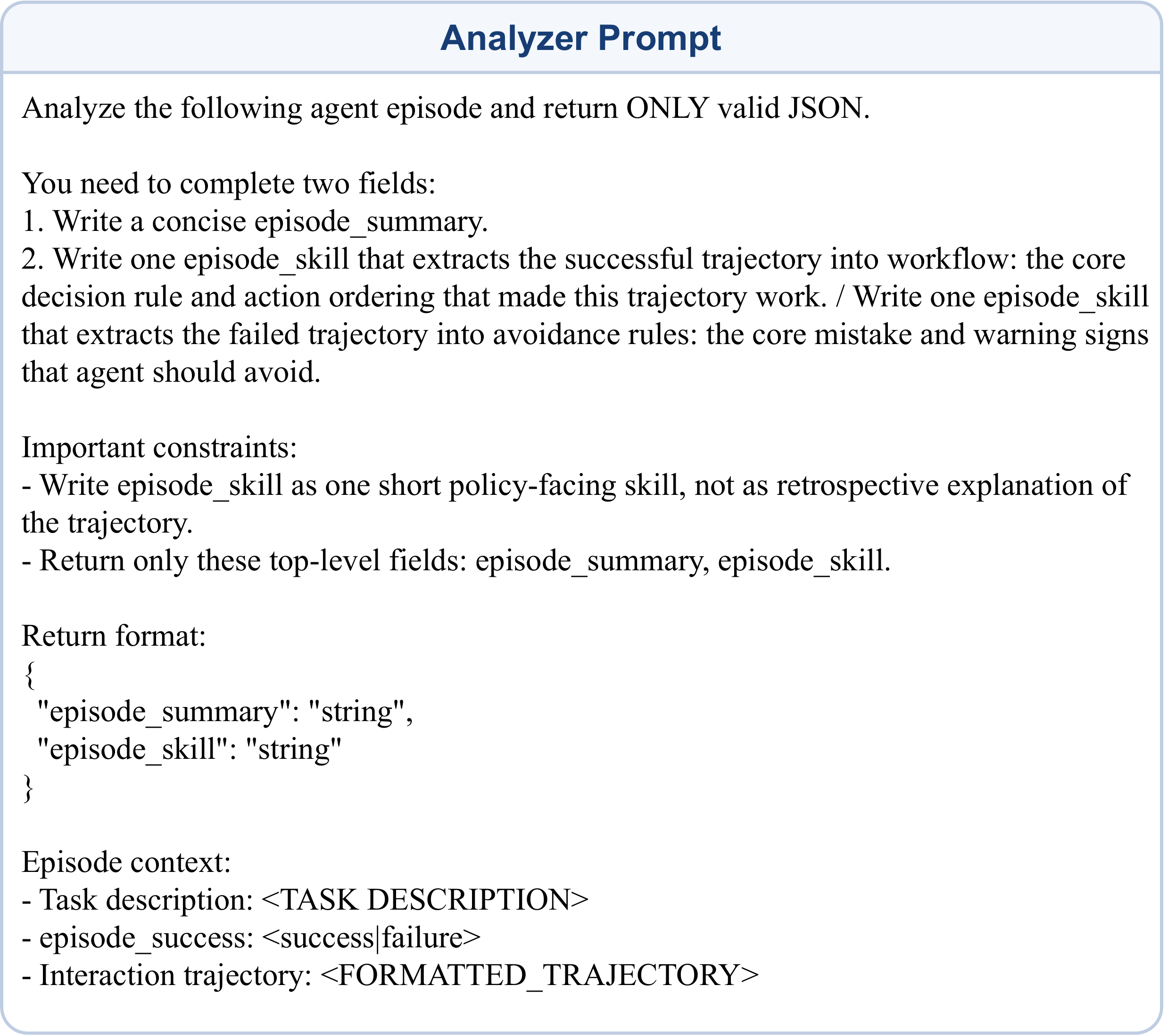}
    \caption{Prompt of analyzer.}
    \label{fig:analyzer_prompt}
\end{figure}

\begin{figure}[t]
    \centering
    \includegraphics[width=0.99\textwidth]{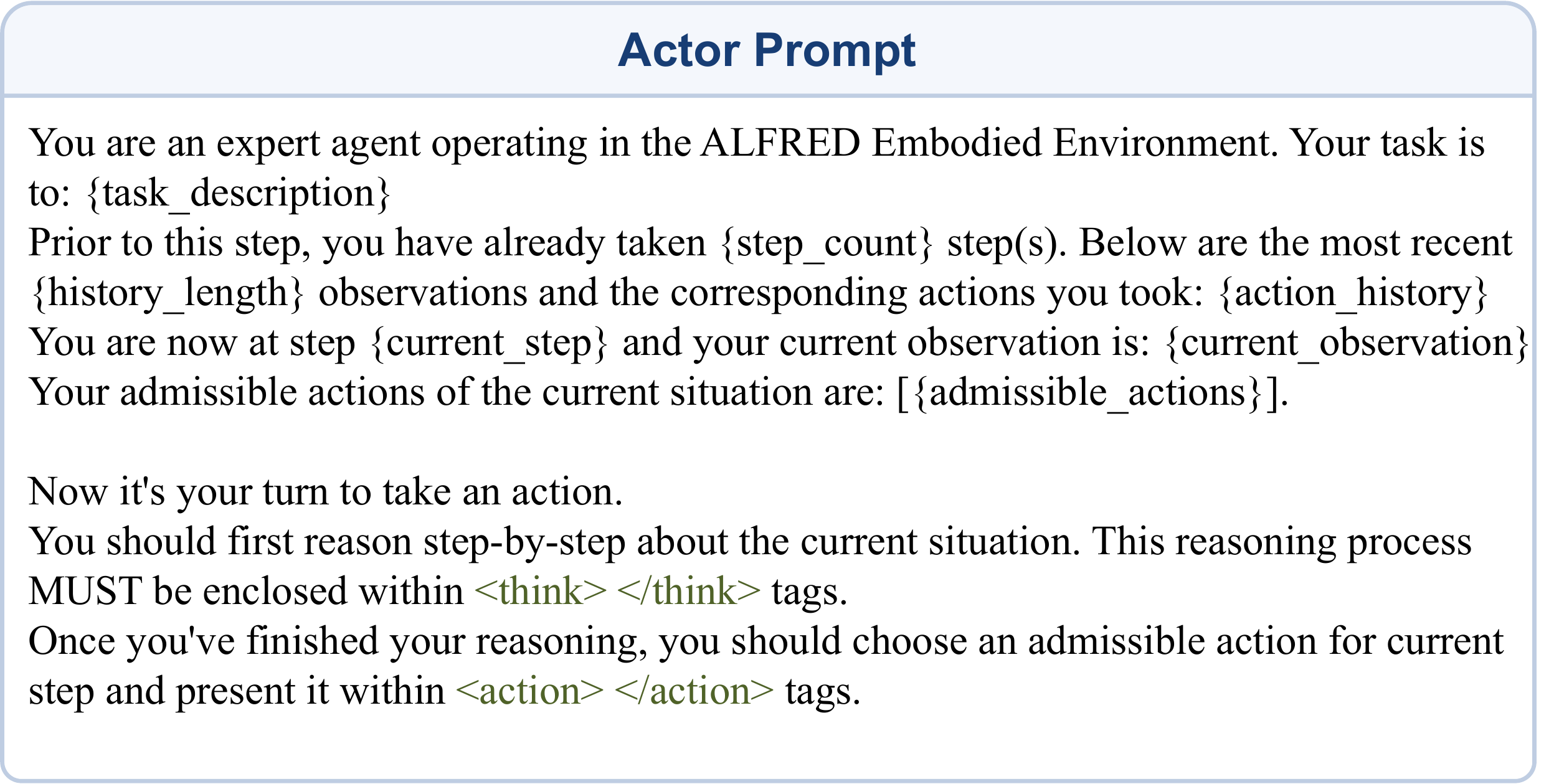}
    \caption{Prompt of actor (the policy model) in ALFWorld.}
    \label{fig:actor_prompt}
\end{figure}

\begin{figure}[hp]
    \centering
    \includegraphics[width=\textwidth]{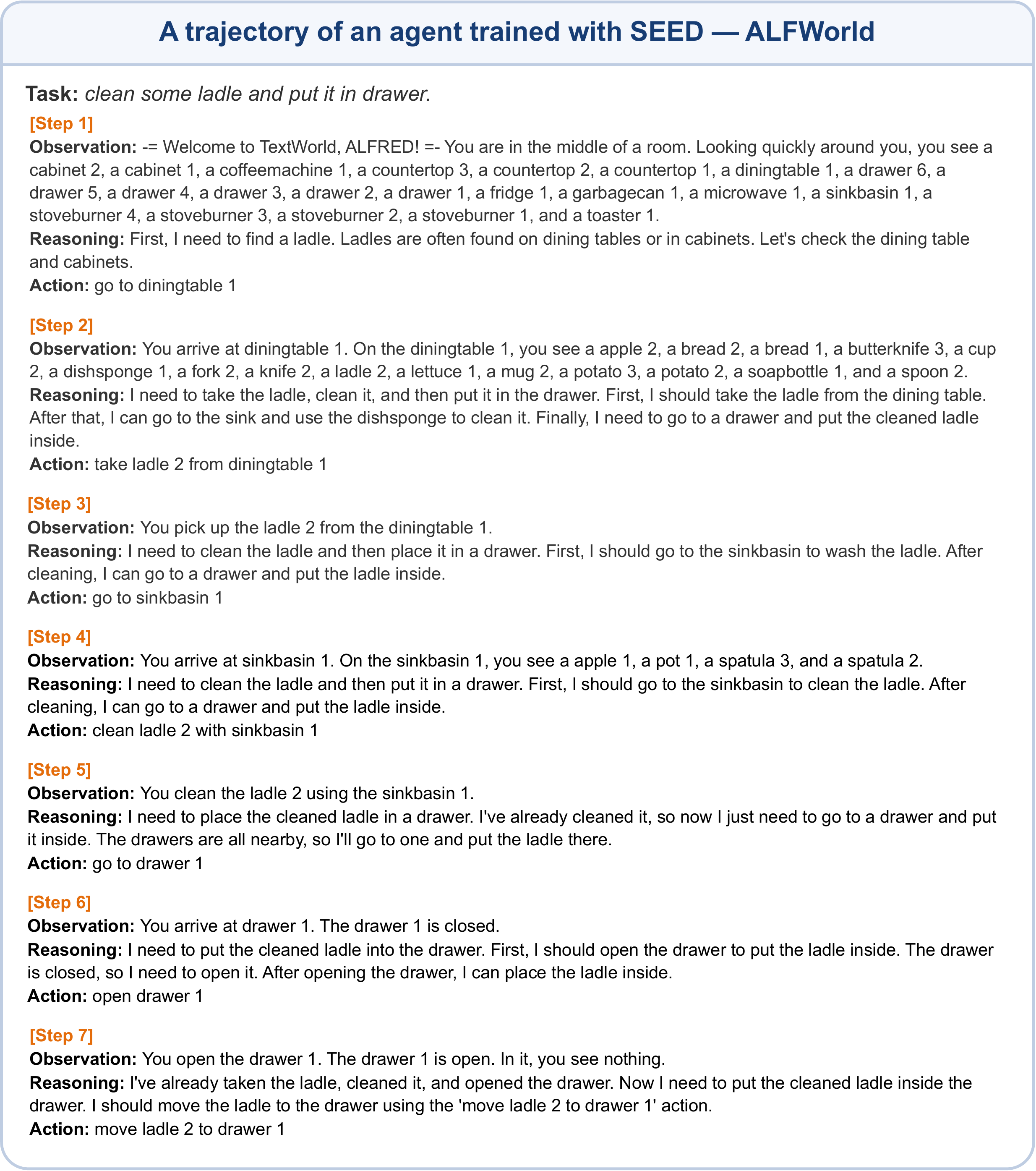}
    \caption{\textbf{ALFWorld Example 1.} \textsc{Seed} locates and cleans the ladle, opens the closed drawer, and completes the placement in seven steps, demonstrating task decomposition and correct precondition handling.}
    \label{fig:alfworld_case1}
\end{figure}
\begin{figure}[hp]
    \centering
    \includegraphics[width=\textwidth]{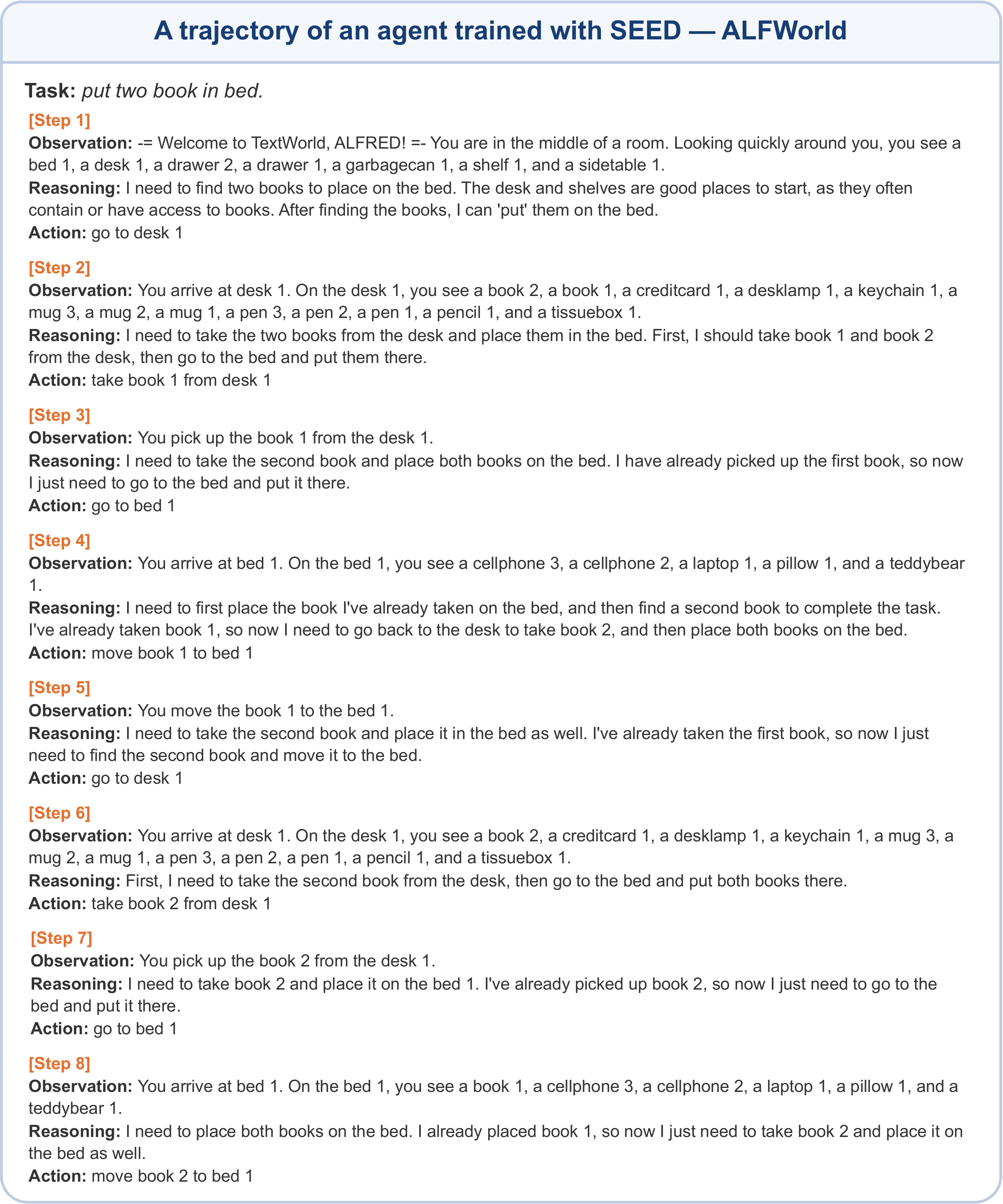}
    \caption{\textbf{ALFWorld Example 2.} \textsc{Seed} places two books through repeated trips between the desk and bed while retaining the completed first placement, demonstrating progress tracking over an extended trajectory.}
    \label{fig:alfworld_case2}
\end{figure}
\begin{figure}[hp]
    \centering
    \includegraphics[width=\textwidth]{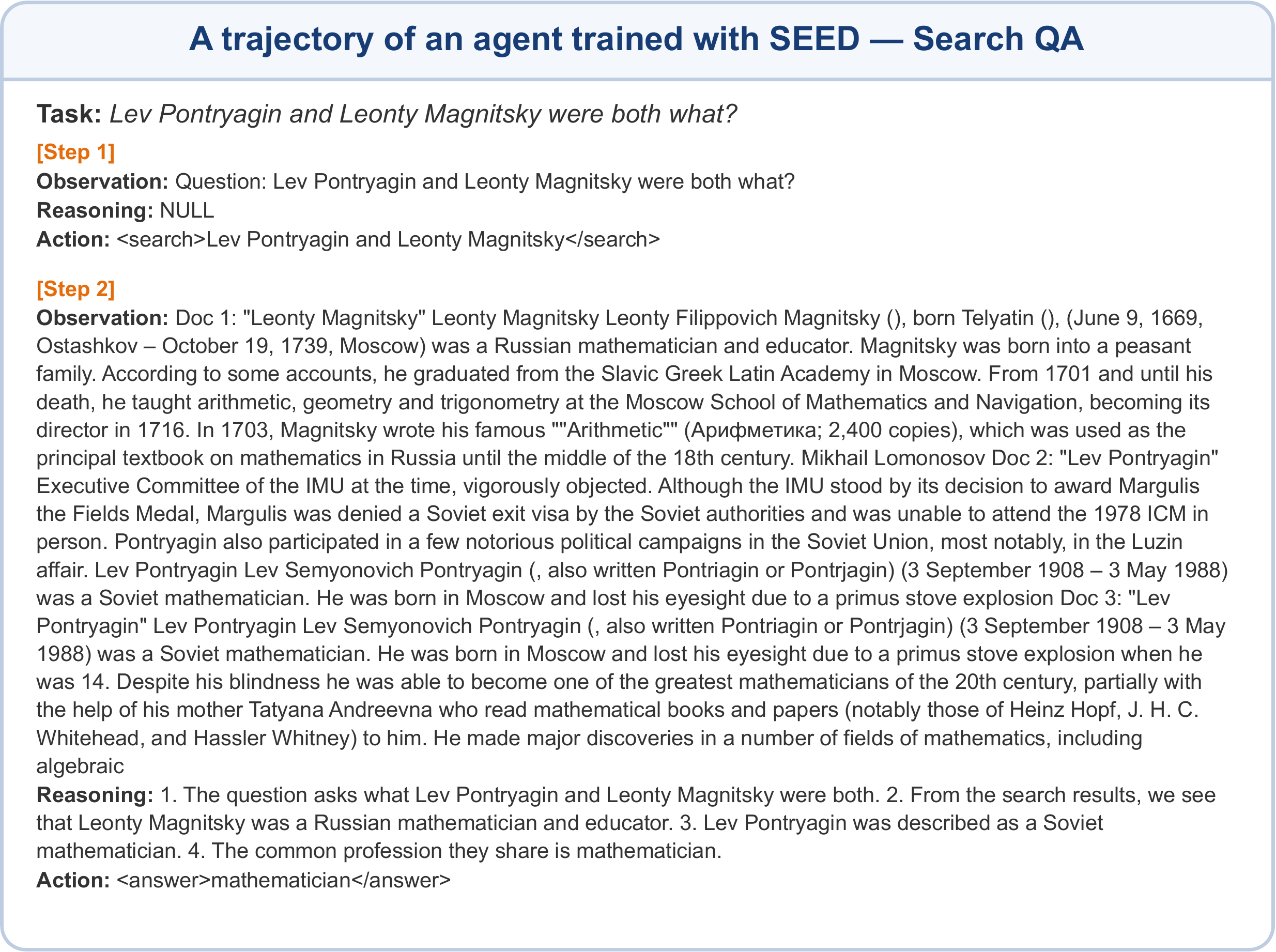}
    \caption{\textbf{Search-based QA Example 1.} A single query retrieves evidence that both people are mathematicians, allowing \textsc{Seed} to answer without unnecessary searches.}
    \label{fig:searchqa_case1}
\end{figure}
\begin{figure}[hp]
    \centering
    \includegraphics[width=\textwidth]{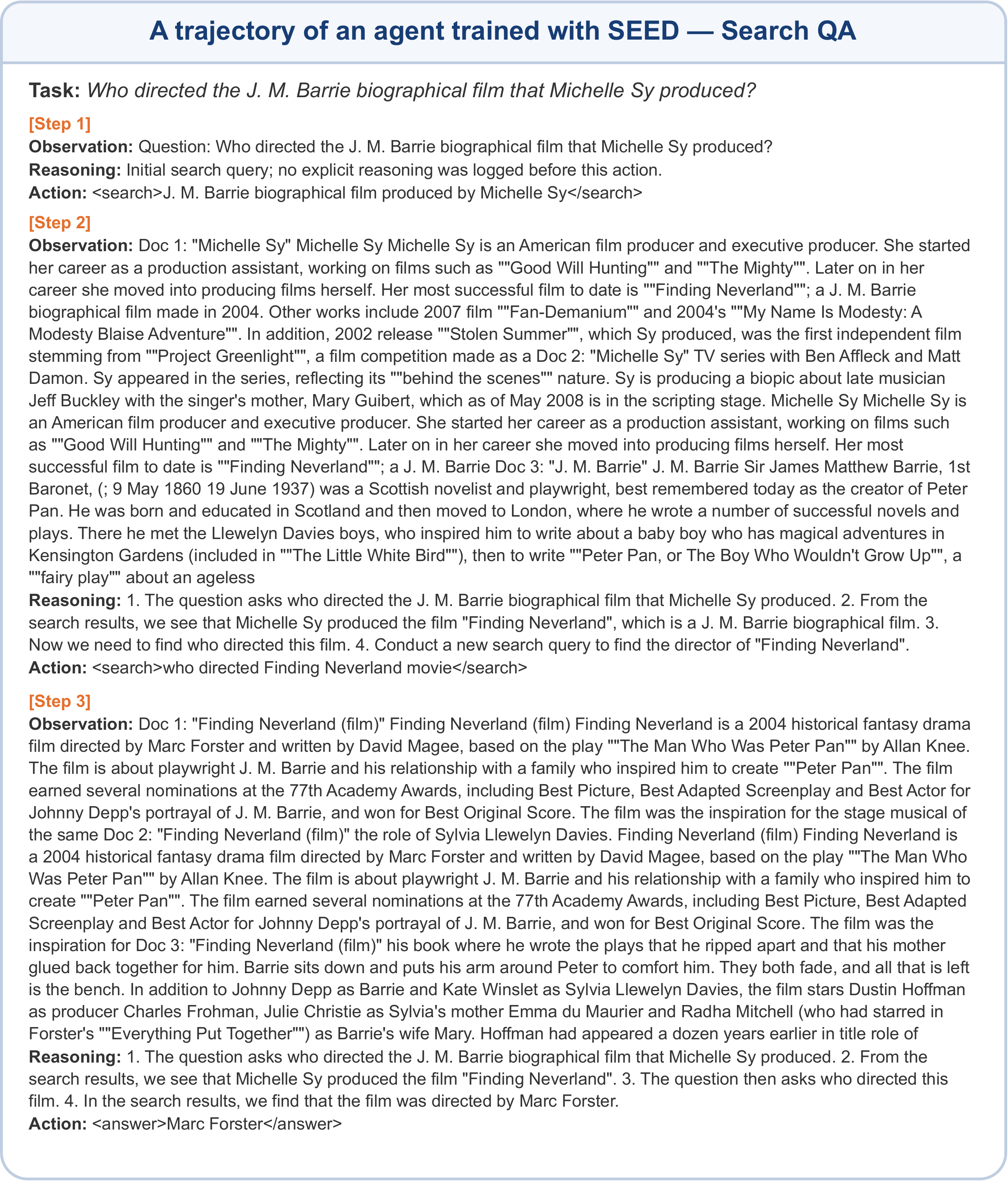}
    \caption{\textbf{Search-based QA Example 2.} \textsc{Seed} first identifies \textit{Finding Neverland}, then performs a targeted second search to verify that Marc Forster directed the film.}
    \label{fig:searchqa_case2}
\end{figure}
\begin{figure}[hp]
    \centering
    \includegraphics[width=\textwidth]{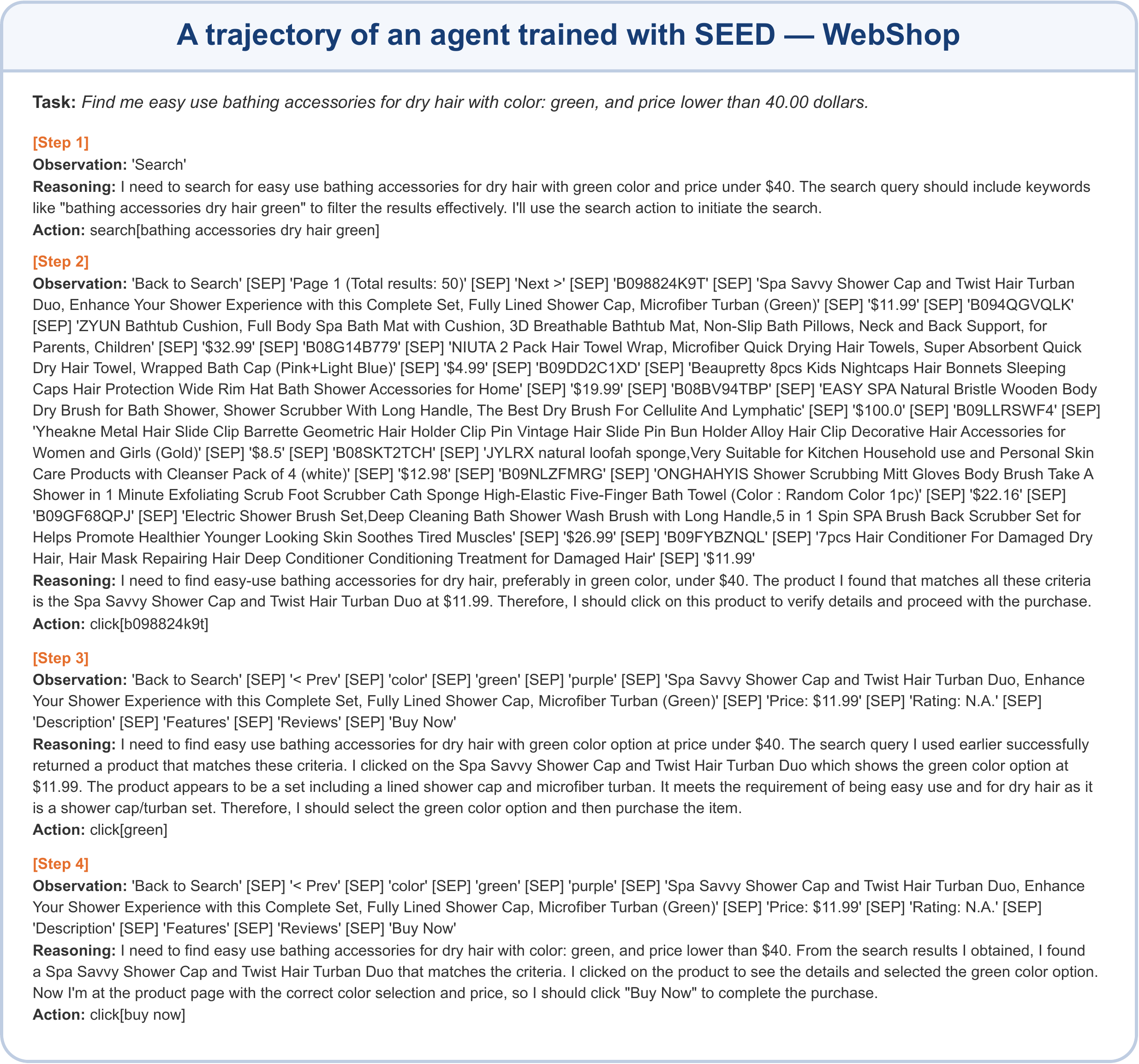}
    \caption{\textbf{WebShop Example 1.} \textsc{Seed} identifies a green shower cap and hair turban set under \$40, verifies the selected variant, and completes the purchase in four steps.}
    \label{fig:webshop_case1}
\end{figure}
\begin{figure}[hp]
    \centering
    \includegraphics[width=\textwidth]{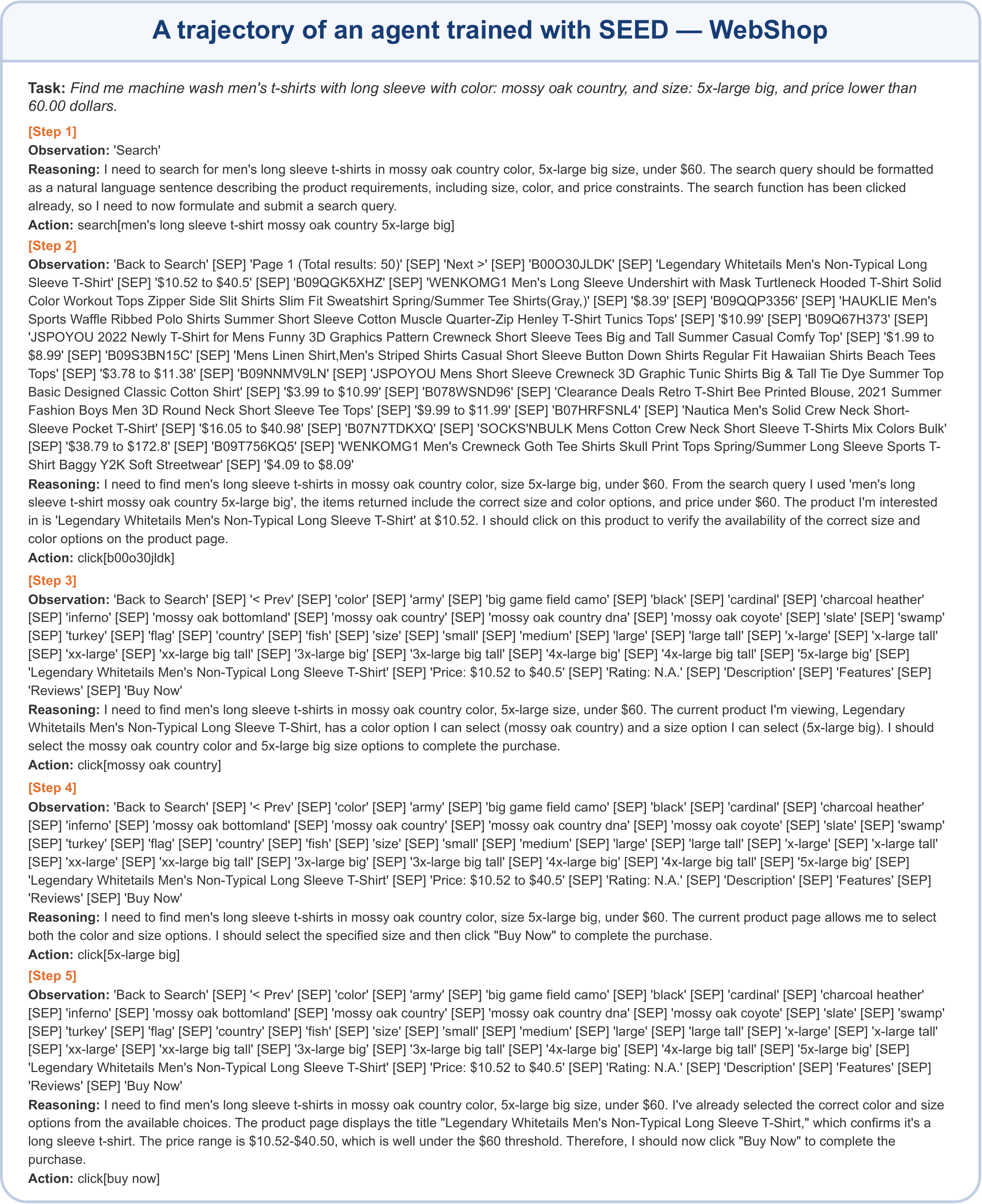}
    \caption{\textbf{WebShop Example 2.} \textsc{Seed} preserves the requested product constraints and selects \textit{mossy oak country} and \textit{5x-large big} before purchasing a matching long-sleeve shirt under \$60.}
    \label{fig:webshop_case2}
\end{figure}

\end{document}